\documentclass[twocolumn, switch]{article} 

\usepackage{preprint}
\usepackage{algorithm}
\usepackage{algorithmic}
\usepackage{amsmath, amsthm, amssymb, amsfonts}

\usepackage{newtxtext,newtxmath}

\usepackage[numbers,square]{natbib}
\theoremstyle{plain}
\newtheorem{theorem}{Theorem}[section]
\newtheorem{proposition}[theorem]{Proposition}
\newtheorem{lemma}[theorem]{Lemma}
\newtheorem{corollary}[theorem]{Corollary}
\theoremstyle{definition}

\newtheorem{assumption}[theorem]{Assumption}
\theoremstyle{remark}
\newtheorem{remark}[theorem]{Remark}
\usepackage[utf8]{inputenc}	
\usepackage[T1]{fontenc}	
\usepackage{xcolor}		
\usepackage[colorlinks = true,
            linkcolor = purple,
            urlcolor  = blue,
            citecolor = cyan,
            anchorcolor = black]{hyperref}	
\usepackage{booktabs} 		
\usepackage{nicefrac}		
\usepackage{microtype}		
\usepackage{lineno}		
\usepackage{float}			

\usepackage{enumitem}
\usepackage{tabularx}

\usepackage{lipsum}		

\usepackage{newfloat}
\DeclareFloatingEnvironment[name={Supplementary Figure}]{suppfigure}
\usepackage{sidecap}
\sidecaptionvpos{figure}{c}

\usepackage{titlesec}
\titlespacing\section{0pt}{12pt plus 3pt minus 3pt}{1pt plus 1pt minus 1pt}
\titlespacing\subsection{0pt}{10pt plus 3pt minus 3pt}{1pt plus 1pt minus 1pt}
\titlespacing\subsubsection{0pt}{8pt plus 3pt minus 3pt}{1pt plus 1pt minus 1pt}

\title{On the Power of (Approximate) Reward Models for Inference-Time Scaling}

\usepackage{eso-pic}
\usepackage{tikz}
\usepackage{xcolor}
\PassOptionsToPackage{colorlinks=true,linkcolor=gray,urlcolor=gray}{hyperref}

\usepackage{titling}
\usepackage{orcidlink}
\usepackage{footmisc}
\newcommand{\Author}[3]{
  \textbf{#1}\textsuperscript{#2}%
  #3%
}

\author{
  \Author{Youheng Zhu}{1}{} \and
  \Author{Yiping Lu}{1}{}
}

\date{%
  \textsuperscript{1}Department of Industrial Engineering and Management Sciences\\
  Northwestern University\\
  \footnotesize \textbf{Corresponding author:} Yiping Lu \texttt{<yiping.lu@northwestern.edu>}
}

\begin{document}

\twocolumn[ 
  \begin{@twocolumnfalse} 

\maketitle
\thispagestyle{empty}

\begin{abstract}
Inference-time scaling has recently emerged as a powerful paradigm for improving the reasoning capability of large language models. Among various approaches, \emph{Sequential Monte Carlo (SMC)} has become a particularly important framework, enabling iterative generation, evaluation, rejection, and resampling of intermediate reasoning trajectories. A central component in this process is the \emph{reward model}, which evaluates partial solutions and guides the allocation of computation during inference.

However, in practice, true reward models are never available. All deployed systems rely on \emph{approximate reward models}, raising a fundamental question: \emph{Why and when do approximate reward models suffice for effective inference-time scaling?} In this work, we provide a theoretical answer. We identify the \emph{Bellman error} of the approximate reward model as the key quantity governing the effectiveness of SMC-based inference-time scaling. For a reasoning process of length $T$, we show that if the Bellman error of the approximate reward model is bounded by $O(1/T)$, then combining this reward model with SMC reduces the computational complexity of reasoning from exponential in $T$ to polynomial in $T$. This yields an \emph{exponential improvement} in inference efficiency despite using only approximate rewards.
\end{abstract}
\vspace{0.35cm}

  \end{@twocolumnfalse} 
] 


\section{Introduction}\label{sec:intro}

Over the past several years, the reasoning ability of large language models has improved dramatically~\cite{wei2022chain}. While model scale and training data remain critical factors, a complementary paradigm of inference-time scaling has recently emerged as a powerful mechanism for enhancing reasoning performance without modifying model parameters~\cite{ke2025survey, zhou2023least, hendrycks2021measuring, lightman2023let}. By allocating additional computation during inference, such methods enable models to explore, refine, and verify intermediate reasoning steps, yielding substantial gains on complex multi-step tasks~\cite{wei2022chain, wang2023self}. Inference-time scaling improves model performance by generating multiple candidate answers along with diverse reasoning paths and selecting higher-quality outputs using a reward model~\cite{liu2025enhancing}. This strategy has demonstrated strong empirical success across a wide range of reasoning tasks. However, despite its practical effectiveness, there remains little theoretical understanding of how many reasoning paths are required to achieve a given level of performance, or what principles govern this trade-off. In this work, we take a first step toward addressing this gap by investigating how the reward model contributes to the effectiveness of inference-time scaling and clarifying its role in guiding the selection process. Despite its empirical success, the relationship between the number of reasoning paths, reward model quality, and overall performance remains poorly understood.

Among existing inference-time scaling approaches, Sequential Monte Carlo (SMC) has gained increasing attention as a principled and flexible framework for structured reasoning~\cite{feng2025step, kim2025hypothesis, zhao2024probabilistic, loula2025syntactic}. In SMC-based inference, the model generates multiple candidate reasoning trajectories, evaluates their quality using a reward model, and iteratively performs rejection and resampling to concentrate computation on the most promising paths. This procedure allows the system to trade computation for solution quality in a controlled and theoretically grounded manner. A critical ingredient underlying the success of SMC is the availability of a reward model that can reliably assess the quality of partial reasoning states. In practice, however, such reward models are never exact. They are learned from finite data, noisy human feedback, or heuristic signals, and thus inevitably deviate from the true task objective. This raises a fundamental and largely unexplored question:
\begin{center}
    Why does inference-time scaling work at all when only approximate reward models are available?
\end{center}
Intuitively, one might expect small inaccuracies in reward estimation to accumulate over long reasoning chains, causing SMC to amplify errors rather than correct them. Yet empirically, approximate reward models appear sufficient to drive remarkable improvements in reasoning performance. Understanding this phenomenon is essential for both the theoretical foundations of inference-time scaling and the practical design of robust reward models.
\subsection{Contribution} 

A unifying perspective of inference-time scaling is \emph{twisting} the base model dynamics to realize
reward-tilted sampling. The \emph{optimal} twist is a \emph{lookahead-to-go value function}
, but 
is generally intractable. This motivates our focus on
\emph{approximate} value functions and how their approximation quality (via Bellman error) governs the
computational efficiency of sampling. 
The main algorithms studied in the main text are shown in Table~\ref{tab:algo_summary_singlecol}.
Particularly, our contribution includes:
\begin{itemize}[leftmargin=*, topsep=-4pt, itemsep=0pt, parsep=0pt, partopsep=0pt]
    \item \textbf{Lower bounds: sub-exponential complexity needs small Bellman error.}
    In Section~\ref{sec:LBs}, we prove two exponential-in-$T$ information-theoretic lower bounds for inference-time sampling:
    one \emph{without} intermediate guidance (Theorem~\ref{thm:LB1_final_style}),
    and one \emph{with} access to an approximate reward model (Corollary~\ref{cor:LB2_final_style}).
    In particular, escaping exponential dependence on the reasoning horizon requires a \emph{small Bellman error}
    regime (e.g., $\varepsilon = O(1/T)$).

    \item 
    \textbf{First particle \& time complexity bounds for SMC in inference-time scaling.}
    Sequential Monte Carlo (SMC) is a core inference-time scaling primitive with strong empirical success.
    We provide in Section~\ref{sec:SMC} the \emph{first} end-to-end guarantees that SMC achieves TV accuracy $\delta_{\rm TV}$ with an explicit
    number of particles $N$ and total runtime polynomial in $(T,\delta_{\rm TV}^{-1})$ under our Bellman-error Assumption~\ref{assump:bellman_error_uniform_bound}. Moreover, the theory and analysis is discussed on a flexible choice of proposals and particle transition dynamics in the Appendices, providing useful guidance for future algorithm designs.
    
    \item \textbf{Complexity of single-particle guided SMC with Metropolis-Hastings correction.}
We study a widely used naive guidance primitive by formalizing it as a \emph{single-particle} guided SMC update, which alone cannot achieve arbitrarily small TV error unless the reward model is perfect (Theorem~\ref{thm:ALS_TV_error}). To quantify the computational knob needed for arbitrary accuracy, we augment it with a Metropolis--Hastings correction and prove geometric TV contraction on a high-probability event, implying that $O\!\big(\log(\delta_{\rm TV}^{-1})\big)$ MH steps suffice to reach TV accuracy $\delta_{\rm TV}$. This matches the qualitative geometric mixing behavior in~\cite{rohatgi2025taming} while imposing no extra information requirement for the algorithm.
\end{itemize}

\begin{table}[t]
  \caption{Comparison of time complexity between main algorithms in the main text under Assumptions~\ref{assump:reward_model_ratio_bound} and~\ref{assump:bellman_error_uniform_bound}. Results gathered from Corollary~\ref{cor:TC_SMC_naive} and Theorem~\ref{Thm:TC_ALS_Resampling+MH}. We consider sentence level reasoning algorithms.}
  \label{tab:algo_summary_singlecol}
  \begin{center}
    \begin{small}
      \begin{sc}
        \setlength{\tabcolsep}{3pt}
        \begin{tabular}{l@{\hspace{2.0cm}}c}
          \toprule
          Method &  Time Complexity\\
          \midrule
          \multicolumn{2}{c}{\shortstack{Bellman error Assumption~\ref{assump:bellman_error_uniform_bound} ($\varepsilon=O(1/T)$)}} \\
          \midrule
          SP-gSMC+MH &  $\tilde O\!\big(LT^3\log(\delta^{-1})\log(\delta_{\rm TV}^{-1})\big)$ \\
          SMC (naive) &  $O\!\big({L^6T^2}{\delta_{\rm TV}^{-1}}\big)$ \\
          \midrule
          VGB$^\dagger$ &  $\tilde O\!\big(LT^3\log(\delta^{-1})\log(\delta_{\rm TV}^{-1})\big)$ \\
          \bottomrule
        \end{tabular}
      \end{sc}
    \end{small}
  \end{center}
  \vspace{-0.5ex}
  {\footnotesize $^\dagger$VGB refers to \cite{rohatgi2025taming}. This algorithm requires special knowledge of $L(1+\varepsilon)$ for the choice of hyperparameter. This is because they use rejection sampling (Algorithm~\ref{alg:rejection-sampling_compact}) which requires the rejection ratio to be greater than the coverage between distributions; see Remark~\ref{rem:rs_variant}.}
  
  {\footnotesize \textbf{Remark: }single-particle guided SMC (SP-gSMC) without correction cannot produce samples with arbitrary accuracy, unless the reward model is perfect (Theorem~\ref{thm:ALS_TV_error}).} 
  \vskip -0.1in
\end{table}

\section{Preliminaries}\label{sec:prelim}
In this section, we introduce the main tools used in the analysis of this paper. We also refer the readers to Appendix~\ref{Appendix:notations} for a full list of notations used throughout the paper.
\subsection{Metropolis--Hastings}\label{sec:prelim-mh}

Metropolis--Hastings (MH) provides a generic way to sample from a target distribution when we can only evaluate it
up to a normalizing constant. In this paper, MH will be used as a correction step that turns an easily-sampled
proposal into a Markov chain whose stationary distribution is the desired (reward-tilted) target.
Let $\tilde\pi$ be a target distribution on a state space $\mathsf E$. We assume $\tilde\pi$ is only available through
an unnormalized score $\tilde\pi'$ such that $\tilde\pi(x)\propto \tilde\pi'(x)$ (the normalizing constant is unknown).
Let $q(x,\cdot)$ be a proposal distribution from which we can sample efficiently.
Given the current state $x\in\mathsf E$, draw a proposal $y\sim q(x,\cdot)$ and accept it with probability
\[
\alpha(x,y)
\;:=\;
1\wedge
\frac{\tilde\pi'(y)\,q(y,x)}{\tilde\pi'(x)\,q(x,y)}.
\]
If accepted, set $X_{n+1}=y$; otherwise set $X_{n+1}=x$.
Equivalently, the MH transition kernel can be written as
\begin{align*}
K(x,dy)
:=&\; q(x,dy)\,\alpha(x,y)
\;+\\&\qquad\;\Bigl(1-\int q(x,dy')\,\alpha(x,y')\Bigr)\,\delta_x(dy),
\end{align*}
where $\delta_x$ denotes the Dirac measure at $x$.
By construction, $K$ is $\tilde\pi$-reversible and hence induces a Markov chain which admits $\tilde\pi$ as an invariant
distribution.

\subsection{Feynman-Kac Model for Sequential Monte Carlo}\label{sec:prelim-fk}

Sequential Monte Carlo (SMC) methods are a class of simulation-based algorithms
that approximate sequences of probability distributions using a collection of
weighted particles evolved through resampling an propagating. 

A Feynman--Kac model describes a sequence of probability distributions obtained by iteratively
\emph{reweighting by potential functions}, \emph{normalizing}, and \emph{propagating via Markov kernels}.
Formally, it defines a (discrete) flow of probability measures $\{\eta_t\}_{t\ge 0}$, where
$\eta_t\in\mathcal P(E_t)$ is a distribution on a (possibly time-varying) state space $E_t$, usually called the Feynman-Kac flow.

From a dual perspective, a Feynman--Kac model characterizes the evolution of
distributions via a sequence of operators. These operators can be
viewed either as acting on measures or, equivalently,
through their adjoint action on test functions. Concretely, an operator $P$
on measures is uniquely specified by its adjoint $P$ on test functions
through the duality pairing $\langle \mu,f\rangle:=\int f\,d\mu$, i.e.,
$\langle \mu P, f\rangle= \langle \mu, Pf\rangle$.
Thus for each $t\in[T]$, let $M_t$ be a Markov kernel from $E_{t-1}$ to $E_t$,
written as $M_t(x,dy)$, and let it act on test functions $f:E_t\to\mathbb R$
by
$M_t(f)(x):=\int f(y)M_t(x,dy)$.
Let $G_{t-1}:E_{t-1}\to[0,\infty)$ be the potential function.
Define the unnormalized Feynman--Kac operator acting on test functions by
\[
Q_t(f)(x) := G_{t-1}(x)\,M_t(f)(x),
\;\;\;
Q_{p,n}:=Q_{p+1}\cdots Q_n.
\]
By duality, the corresponding action of $Q_t$ on measures is defined via
$\langle \mu Q_t, f\rangle = \langle \mu, Q_t f\rangle$,
so $Q_t$ can be interpreted as an unnormalized ``propagate--reweight'' update
on distributions.
Starting from an initial distribution $\eta_0\in\mathcal P(E_0)$,
define the unnormalized flow $\gamma_t\in\mathcal P(E_t)$
and the normalized flow $\eta_t$ by
\[
\gamma_t := \gamma_{t-1} Q_t,
\qquad
\eta_t := \frac{\gamma_t}{\gamma_t(1)}.
\]
{Intuitively, the recursion $\gamma_t=\gamma_{t-1}Q_t$ corresponds to a
\emph{reweight--propagate} evolution of measures:
first, $\gamma_{t-1}$ is tilted on $E_{t-1}$ by the potential $G_{t-1}$,
and then the reweighted mass is pushed forward to $E_t$ by the Markov
kernel $M_t$, yielding an unnormalized measure. Equivalently,
$\gamma_t(dy)=\int \gamma_{t-1}(dx)\,G_{t-1}(x)\,M_t(x,dy)$.
The normalized flow $\eta_t=\gamma_t/\gamma_t(1)$ is the corresponding
probability law after this reweighting and propagation.}
Building upon this, one may also consider the normalized semigroup, i.e. incorporating the normalizing step into a nonlinear operator 
\[
\Phi_{p,n}(\mu)(f) := \frac{\mu(Q_{p,n}(f))}{\mu(Q_{p,n}(1))},
\qquad (0\le p\le n\le T),
\]
so that we have $\eta_t=\Phi_t(\eta_{t-1})$, where $\Phi_t:=\Phi_{t-1,t}$.

\begin{algorithm}[t]
\caption{SMC for a FK model with $(G_t,M_t)$}
\label{alg:fk_smc_eta1}
\begin{algorithmic}[1]
\STATE \textbf{Input:} horizon $T$, \#particles $N$,
initial law $\eta_1$ on $E_1$,
potentials $\{G_{t-1}:E_{t-1}\to\mathbb R_+\}_{t=2}^T$,
kernels $\{M_t:E_{t-1}\to\mathcal P(E_t)\}_{t=2}^T$.
\STATE Sample $X_1^i\sim \eta_1$ independently for $i=1,\dots,N$.
\FOR{$t=2,3,\dots,T$}
  \STATE $w_{t-1}^j \gets G_{t-1}(X_{t-1}^j)$,\quad$\bar w_{t-1}^j \gets \frac{w_{t-1}^j}{\sum_{\ell=1}^N w_{t-1}^\ell}$.
  \STATE Resample $A_{t-1}^i \sim \mathrm{Cat}(\bar w_{t-1}^{1:N})$ i.i.d. for $i\in[N]$.
  \STATE Draw $X_t^i\! \sim\! M_t(X_{t-1}^{A_{t-1}^i},\cdot)$ independently for $i\!\in\![N]$.
\ENDFOR
\STATE Draw $I\sim \mathrm{Unif}(\{1,\dots,N\})$ and \textbf{Output} $X_T^I$.
\end{algorithmic}
\end{algorithm}

Now we describe a SMC algorithm  using the nonlinear
FK flow $\eta_{p+1}=\Phi_{p+1}(\eta_p)$.
Namely, for each $p\in\{0,\dots,T-1\}$ and each $\eta\in\mathcal P(E_p)$,
let $K_{p+1,\eta}$ be a Markov kernel from $E_p$ to $E_{p+1}$ such that
\begin{equation}\label{eq:mckean-consistency}
\eta\,K_{p+1,\eta} \;=\; \Phi_{p+1}(\eta).
\end{equation}
SMC simulate the Feyman-Kac flow via particles $\{X_p^i\}_{i=1}^N\subset E_p$ and their empirical measure
$\eta_p^N:=\frac1N\sum_{i=1}^N\delta_{X_p^i}$,
the corresponding interacting particle update samples the next generation by the product kernel
\begin{equation}\label{eq:ips-product-kernel}
(X_{p+1}^1,\dots,X_{p+1}^N)
\ \sim\
\prod_{i=1}^N K_{p+1,\eta_p^N}(X_p^i, d x_{p+1}^i),
\end{equation}
conditionally independently given $\mathcal F_p^N$, with $\mathcal F_p^N:=\sigma\bigl(X_q^i:\ 0\le q\le p,\ 1\le i\le N\bigr)$
be the smallest $\sigma$-algebra that makes all particles up to time $p$ measurable.

Equation \eqref{eq:mckean-consistency} generally admits many choices of $K_{p+1,\eta}$.
One simple choice is an $x$-independent kernel
\[
K_{p+1,\eta}(x,dy) = \Phi_{p+1}(\eta)(dy),
\]
in which case particles are drawn i.i.d.\ from the current FK target $\Phi_{p+1}(\eta_p^N)$, and the process can be described in Algorithm~\ref{alg:fk_smc_eta1}.
{Intuitively, from the formulation of SMC sampler, it can be viewed as a particle-based \emph{discrete simulation} of this reweighting-and-normalization step of Feynman-Kac flow: it replaces the (intractable) weighted measure update by resampling on finitely many particles, followed by propagation.}

\section{Problem Setting}\label{sec:problem_setting}

In this paper, we model inference-time reasoning as \emph{reward-tilted sampling} over reasoning trajectories.
Let $\mathcal S_0$ be the prompt space, and let $\mathcal S_t$ be the reasoning state space (token/sentence) at step $t$.
We write $\mathcal S_{1:t}:=\mathcal S_{1:t-1}\times \mathcal S_t$ (with $\mathcal S_{1:0}:=\{\varnothing\}$) for the space of prefixes, so a trajectory $s_{1:T}\in\mathcal S_{1:T}$ is generated from a variable-length prompt $s_0\in\mathcal S_0$ by an autoregressive base model
$\pi_{\rm ref}(s_{1:T}| s_0)=\prod_{t=1}^T \pi_{\rm ref}(s_t| s_0,s_{1:t-1})$.
Following~\cite{zhao2024probabilistic}, we treat inference-time reasoning as sampling from an energy-tilted posterior over reasoning trajectories, with $\pi_{\rm ref}$ as the prior.
The induced target distribution is given by  
\[
\tilde \pi(s_{1:T}   |   s_0) \propto \pi_{\mathrm{ref}}(s_{1:T}   |   s_0) \, \phi(s_{1:T}),
\]  
where $\pi_{\mathrm{ref}}$ is the pretrained model's intrinsic prior over natural-language reasoning paths, and $\phi(s)$ is a task-dependent utility (or energy) function (e.g. human preference model or verifiable rewards) that biases inference toward high-quality solutions.  This formulation is often referred to as \emph{test-time alignment}. For any $t<T$, we also define the intermediate reasoning distribution
\begin{equation}
\tilde\pi_t(s_{1:t}| s_0)\ \propto\ \pi_{\rm ref}(s_{1:t}| s_0)\,\hat V(s_0,s_{1:t}),\nonumber
\end{equation}
where $\hat V(s_0,s_{1:t})$ evaluates the quality of partial reasoning prefixes
and $\hat V(s_0,s_{1:T})=\phi(s_{1:T})$. Our goal is to fit a Feynman-Kac model to this sequence of intermediate distributions, then our previously introduced Algorithm~\ref{alg:fk_smc_eta1} can be applied directly to this language model reasoning setting. 

Before getting into details of the Feynman-Kac model and its induced SMC process, we briefly show the general procedure of SMC used in language model reasoning: Suppose the samples are drawn from a proposal distribution $\pi_{\rm p}$, we maintain $N$ particles that approximate $\tilde\pi_{t-1}$ and iteratively update them to target $\tilde\pi_t$ via two steps:
\begin{itemize}[leftmargin=*, topsep=0pt, itemsep=0pt, parsep=0pt, partopsep=0pt]
    \item \textbf{Resampling:} given prefixes $s_{1:t-1}^{1:N}$, compute weights $w_{t-1}^{1:N}$, normalize $\bar w_{t-1}^j:=\frac{w_{t-1}^j}{\sum_{\ell=1}^N w_{t-1}^\ell}$, sample $A_{t-1}^i\sim\mathrm{Cat}(\bar w_{t-1}^{1:N})$ i.i.d., and set $\tilde s_{1:t-1}^i:=s_{1:t-1}^{A_{t-1}^i}$.
    \item \textbf{Propagating:} sample $s_t^i\sim \pi_{\rm p}(\cdot| s_0,\tilde s_{1:t-1}^i)$ for each $i\in[N]$, and form $s_{1:t}^i:=(\tilde s_{1:t-1}^i,s_t^i)$.
\end{itemize}
And the above procedure is preformed iteratively until the terminal time step.

\paragraph{Feynman-Kac model and the SMC sampler.}
With the previous idea, we first recast
$\{\tilde\pi_t\}_{t=1}^T$ in the standard Feynman--Kac (FK) notation to streamline
the presentation and facilitate rigorous theoretical analysis.
We set $E_t:=\mathcal S_{1:t}$ and introduce a proposal kernel
$\pi_{\rm p}(\cdot| s_{1:t})$ that defines the Markov propagation
$M_t(s_{1:t},ds_{1:t+1}) := \pi_{\rm p}(ds_{t+1}| s_{1:t})\,\delta_{s_{1:t}}(d s_{1:t})$. In other words, the SMC sampler sample from the proposal $\pi_{\rm p}$ at each step.
Then, as in~\cite{zhao2024probabilistic}, the incremental weight (potential) is
\[
G_t(s_{1:t})
:=\frac{\pi_{\rm ref}(s_t\mid s_{1:t-1})}{\pi_{\rm p}(s_t\mid s_{1:t-1})}
\cdot
\frac{\hat V(s_0,s_{1:t})}{\hat V(s_0,s_{1:t-1})}.
\]
With an additional terminal step, i.e. a step $T{+}1$ with the \emph{identity} Markov kernel $M_{T+1}=\mathrm{Id}$ on $E_T$ (so $E_{T+1}=E_T$),
and take the last potential to be $G_T$, the target
$\tilde\pi_T(s_{1:T}| s_0)\propto \pi_{\rm ref}(s_{1:T}| s_0)\hat V(s_0,s_{1:T})$
is realized as the terminal distribution of the induced FK flow, and any intermediate distribution $\tilde\pi_t$ equals $\eta_t$ reweighted by $G_t$. 
Then, the aforementioned \textbf{``Resampling''} and \textbf{``Propogating''} procedures can be rigorously described by
Algorithm~\ref{alg:fk_smc_eta1}, with the specification of the $(G_t,M_t)$ pairs. By construction, there's one dimension of freedom in specifying $(G_t,M_t)$, namely the proposal distribution $\pi_{\rm p}$, which we discuss further in Appendix~\ref{Appendix:optimal_proposal_SMC} and in Proposition~\ref{prop:number_of_particles_arbitrary_proposal} of Appendix~\ref{Appendix:Proofs_5}. Among our main theorems, we focus on the simplest and the most straightforward choice of proposal $\pi_{\rm p}$, namely the naive proposal $\pi_{\rm p}=\pi_{\rm ref}$. That is to say, the SMC sampler samples from the output distribution of the pre-trained model itself. Under this circumstance, we have \[G_t(s_{1:t})=\frac{\hat V(s_0,s_{1:t})}{\hat V(s_0,s_{1:t-1})}.\]
In Appendix~\ref{Appendix:algorithms} in specific, we instantiate Algorithm~\ref{alg:fk_smc_eta1} for the
\emph{naive} proposals in
Algorithms~\ref{alg:smc_naive_Vs0_1}.

\paragraph{Optimal Twists.} \cite{zhao2024probabilistic} 
shows that the twist function $V^\ast(s_0,s_{1:t})$ that recover the intermediate marginals of the reward-tilted distribution $\tilde\pi_{s_{1:t}}(s_{1:t}|s_0):= \sum_{s_{t+1:T}} \tilde\pi(s_{1:T}|s_0)$ should be the solution to the following bellman equation
\begin{equation}\label{eq:optimal_twist}
    \begin{aligned}
    V^\ast(s_0,s_{1:t})&\propto\sum_{s_{t+1:T}} \pi_{\text{ref}}(s_{t+1:T}|s_0,s_{1:t})\phi(s_{1:T})\\
    &\propto \sum_{s_{t+1}} \pi_{\text{ref}}(s_{t+1}|s_0,s_{1:t})  V^\ast(s_0,s_{1:t+1}).
    \end{aligned}
\end{equation}
Intuitively, the twist function plays the role of a ``look-ahead'' evaluator: given the current partial trajectory $(s_0, s_{1:t})$, it estimates how good this prefix is by averaging over all possible future continuations under the reference policy, weighted by their final rewards. Since this quantity measures the expected desirability of the current state in terms of future outcomes, it naturally corresponds to a value function.

In practice however, computing the optimal twist function $V^\ast(s_0, s_{1:t})$ is intractable, and therefore approximate methods are employed to learn an approximate reward model. Some of these methods include:
\begin{itemize}[leftmargin=*, topsep=0pt, itemsep=0pt, parsep=0pt, partopsep=0pt]
\item \textbf{Contrastive twist learning (CTL).}
\cite{zhao2024probabilistic} propose \emph{contrastive twist learning}, which learns time-indexed prefix twists (reward-to-go) $\{\hat V_t^\theta\}_{t=1}^T$ by matching the target prefix marginals.
Defining the twisted prefix model
$\tilde\pi_t^\theta(s_{1:t}| s_0)\!\propto\! \pi_{\rm ref}(s_{1:t}| s_0)\hat V_t^\theta(s_0,s_{1:t})$,
CTL solves
$\min_\theta\sum_{t=1}^T D_{\rm KL}\!(\tilde\pi_t(s_{1:t}| s_0)\Vert\tilde\pi_t^\theta(s_{1:t}| s_0))$,
yielding a contrastive (positive--negative) update whose expectations are estimated via importance sampling / SMC~\cite{zhao2024probabilistic}.

\item \textbf{Soft-RL.} Recall that the optimal twist satisfies a recursive Bellman equation defined by~\eqref{eq:optimal_twist}. Therefore, Soft-RL type method~\cite{levine2018reinforcement} fit a value/twist function to satisfy a soft Bellman-type recursion (or path-consistency constraint~\cite{nachum2017bridging}) implied by the reward-tilted objective. The idea has been used in language model literature like~\cite{lioutas2023critic, mudgal2024controlled, guo2022efficient, hu2024amortizing}.
\item \textbf{Density-ratio twist learning (e.g., SIXO).}
SIXO~\cite{lawson2022sixo} views twist learning as density-ratio estimation via noise-contrastive (binary) classification.
The optimal prefix twist can be written as a posterior-to-prior ratio
$V_t^\ast(s_0,s_{1:t})\propto\frac{\tilde\pi_t(s_{1:t}| s_0)}{\pi_{\rm ref}(s_{1:t}| s_0)}$,
where $\tilde\pi_t$ is the $t$-step marginal induced by the reward-tilted target $\tilde\pi(s_{1:T}| s_0)\propto
\pi_{\rm ref}(s_{1:T}| s_0)\phi(s_0,s_{1:T})$.
SIXO fits a classifier that distinguishes \emph{positive} prefixes $s_{1:t}\sim \tilde\pi_t(\cdot| s_0)$ from
\emph{negative} prefixes $s_{1:t}\sim \pi_{\rm ref}(\cdot| s_0)$; the optimal logit recovers
$\log V_t^\ast(s_0,s_{1:t})$, yielding a learned twist.
\item \textbf{CD-FUDGE.} The method proposed by~\cite{yang2021fudge} employs a function approximation type algorithm to preform a direct regression on the objective $\min_{V\in\mathcal V}\!{\sum_{t=1}^{T-1}}\mathbb E_{\pi_{\rm ref}(\cdot|s_0,s_{1:t})}\!\left[(V(s_0,s_{1:t})\!-\!\phi(s_{1:T}))^2\right]$,
which is equivalent to solving a on-policy evaluation problem using Monte Carlo for any given prefix. In this case, error may be induced by the insufficiency of function class, finite sample error and optimization gap.
\end{itemize}
\textbf{Connection to Bellman error assumptions.}
These objectives can be interpreted by the Bellman error they implicitly control.
Soft-RL / path-consistency targets a \emph{local} (one-step) consistency of the prefix reward-to-go along the FK flow, matching Assumption~\ref{assump:bellman_error_uniform_bound}.
In contrast, CD-FUDGE regresses directly to the terminal potential $\phi(s_{1:T})$ via Monte-Carlo rollouts, aligning more naturally with the \emph{global} assumption set in Appendix~\ref{Appendix:assumption_Global_bellman_error} (Assumption~\ref{assump:global_bellman_error_uniform_bound}).
Potentially, the FUDGE-style regression can also be reformulated as a local Bellman-error objective using double sampling.


Despite strong empirical success, the interplay between inference-time compute (the number of sampled reasoning paths), reward-model quality, and final performance remains poorly understood. In particular, existing methods show large gains from sampling more trajectories even with only approximate rewards, but it is unclear when such approximation is sufficient for effective scaling and when improving the reward model actually yields additional performance gains—insights that are key to allocating modeling effort and inference resources efficiently.
To characterize the reward model, and the metric of how well the model approximates the optimal twist, we introduce the following two assumptions.

\begin{assumption}[Reward model ratio bound]\label{assump:reward_model_ratio_bound}
We assume for some $L>0$,
\begin{align*}
\max_{t\in[T]}\max_{s_{1:t}\in\mathcal S_{1:t}}\max\Bigg\{\frac{\hat V(s_0,s_{1:t-1})}{\hat V(s_0,s_{1:t})},\frac{\hat V(s_0,s_{1:t})}{\hat V(s_0,s_{1:t-1})}\Bigg\}\leq L.
\end{align*}
\end{assumption}

\begin{assumption}[Bellman error uniform bound]\label{assump:bellman_error_uniform_bound}
We assume for some $\varepsilon>0$, the following holds:
\begin{align*}
\max_{t\in[T]}\max_{s_{1:t}\in\mathcal S_{1:t}}&\;\max\Bigg\{\frac{\hat V(s_0,s_{1:t})}{\mathbb{E}_{s_{t+1}\sim\pi_{\text{ref}}(\cdot|s_0,s_{1:t})}[\hat V(s_0,s_{1:t+1})]},\\&\frac{\mathbb{E}_{s_{t+1}\sim\pi_{\text{ref}}(\cdot|s_0,s_{1:t})}[\hat V(s_0,s_{1:t+1})]}{\hat V(s_0,s_{1:t})}\Bigg\}\leq 1 + \varepsilon
\end{align*}
\end{assumption}
\begin{remark}
The bellman error characterizes the quality of the reward model in the sense that the perfect reward model should satisfy the optimal twist condition~\eqref{eq:optimal_twist}, consequently has zero bellman error, and their distinction is therefore naturally characterized by the bellman error itself. We show later in Theorem~\ref{thm:ALS_TV_error} that single-particle guided SMC is exact when the bellman error equals zero, i.e. with a perfect reward model. Additionally, the assumption here takes the form of a \emph{local bellman error}, and a different assumption to be consider is the \emph{global bellman error}, which is adopted by~\cite{rohatgi2025taming}. We leave the discussion of the latter to Appendix~\ref{Appendix:assumption_Global_bellman_error}.
\end{remark}

We also remark on the algorithms discussed in this paper.
\begin{remark}[Token-level vs.\ sentence-level reasoning]
In \emph{token-level} reasoning, each step appends a single token, so the base model probability
$\pi_{\rm ref}(s_t| s_0,s_{1:t-1})$ is explicitly available from the transformer and can be evaluated exactly.
In \emph{sentence-level} reasoning, a single step appends a longer text span (a whole sentence/segment) as one “reasoning move”;
the probability of such a span under $\pi_{\rm ref}$ is typically not tractable to compute exactly and one usually only has
sampling access to it. Algorithms designed for sentence-level reasoning can be applied verbatim to token-level reasoning
(by taking each “sentence-step” to be one token). Throughout this paper, we focus on the sentence-level setting.
\end{remark}

\section{Lower Bounds With and Without Approximate Reward Model}\label{sec:LBs}

In this section, we establish \emph{exponential-in-horizon} lower bounds on the sampling time complexity, both \emph{with} and \emph{without} an approximate reward model. These bounds identify a \emph{small Bellman error} regime as the only setting in which sub-exponential complexity can be achieved, aligning with the requirement that also emerges from our single-particle guided SMC error analysis. Later in Section~\ref{sec:SMC}, Remark~\ref{rem:lb2_vs_ub}, we compare our lower bound with the upper bounds.

\subsection{Information-theoretical lower bound}
To isolate fundamental limitations, we work with a minimal interface that abstracts away all algorithmic
details. Concretely, we allow a sampler to (i) draw next-token samples from the base model given any prefix,
and (ii) query the reward model on prefixes. The resulting lower bounds depend only on this information access
and therefore hold for any inference-time sampling strategy. We formalize this access pattern, and the corresponding lower bound result through an oracle model.
\paragraph{Oracle model.}
Fix $B\in\mathbb N$ and horizon $T$.
Let $U:=\bigcup_{t=0}^{T-1}[B]^t$ be the set of prefixes.
An oracle is a table $o=\{o(u)[k]\}_{u\in U,k\ge1}\in[B]^{U\times\mathbb N}$.
Equip $\mathcal O_{B,T}:=[B]^{U\times\mathbb N}$ with the product $\sigma$-algebra.
Given $\pi_{\rm ref}(\cdot\mid u)$ on $[B]$ for each $u\in U$, let $\mathbb P_{\pi_{\rm ref}}$ be the product measure under which
$\{o(u)[k]\}_{k\ge1}$ are i.i.d.\ with law $\pi_{\rm ref}(\cdot\mid u)$ for each fixed $u$, and are independent across distinct $u$.

\paragraph{Inputs and targets.}
An input is $I=\langle B,\pi_{\rm ref},\hat V\rangle$ where $\hat V$ is a reward model on prefixes with $\hat V(\varnothing)\!=\!1$.
It induces a target distribution on leaves
$
p_I(s_{1:T})\propto \pi_{\rm ref}(s_{1:T})\hat V(s_{1:T}).
$

\paragraph{Algorithms and output laws.}
A deterministic algorithm $\mathsf a$ takes $(B,\hat V)$ and oracle access to $o$, adaptively queries finitely many coordinates $o(u)[k]$,
and outputs $\widehat S_{1:T}\in[B]^T$.
Equivalently, it induces a measurable map $\mathsf a:\hat{\mathcal V}\times\mathbb N\times\mathcal O_{B,T}\to[B]^T$.
A randomized algorithm $A$ is a distribution $R\in\Delta(\mathcal A)$ over such $\mathsf a$'s.
Let $\mathcal A$ denote those satisfying the \emph{no-guess} constraint: whenever $\mathsf a$ outputs $\widehat S_{1:T}$,
it must have queried along that path, i.e.\ every prefix $\widehat S_{1:p}$ for all $p\le T$.
For $A\in\Delta(\mathcal A)$ and input $I$, define the output distribution
\[
q_{A,I}(\cdot)
:=\mathbb E_{\mathsf a\sim R}\Big[\mathbb P_{o\sim\mathbb P_{\pi_{\rm ref}}}\big(\mathsf a(\hat V,B,o)\in\cdot\big)\Big].
\]
\begin{theorem}[LB1: worst-case time complexity]
\label{thm:LB1_final_style}
Any randomized algorithm $A\in\mathcal A$ such that for every input
$I=\langle B,\pi_{\rm ref},\hat V\rangle$ satisfying Assumption~\ref{assump:reward_model_ratio_bound} with $L>1$, the output leaf distribution obeys
$\|q_{A,I}-p_I\|_{\rm TV}\le \frac13$, if $A$ always stops before a time complexity $\mathrm{TC}$, then
$\mathrm{TC}=\Omega\bigl(L^{2T/3}\bigr)$.
\end{theorem}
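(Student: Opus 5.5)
We construct a hard family of inputs in which the reward model gives no useful intermediate guidance. The target then places most of its mass on a single leaf that $\pi_{\rm ref}$ reaches only with exponentially small probability. The no-guess constraint then forces any accurate algorithm to query oracle entries until it actually observes that leaf, which takes exponentially many queries.

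Take a binary alphabet $B=2$ with uniform base model $\pi_{\rm ref}(\cdot\mid u)=\mathrm{Unif}([2])$ at every prefix. For each hidden leaf $s^\star\in[2]^T$ define a reward model $\hat V_{s^\star}$ that increases by a factor $L$ at each step along the prefix path of $s^\star$ and is identically $1$ off that path. This choice is too informative, because the algorithm reads $\hat V$ directly and the reward at each prefix reveals the path.

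To hide the path, the factor-$L$ growth must sit only at the final $\Theta(T)$ steps, which the algorithm cannot see before reaching depth. Concretely, I would let $\hat V_{s^\star}(s_{1:t})=1$ for all $t<T$ and put the mass at the leaf. A one-step jump of size $L^{\Theta(T)}$ violates Assumption~\ref{assump:reward_model_ratio_bound}. So instead I ramp the reward geometrically with ratio at most $L$ over the last $2T/3$ steps along $s^\star$: the reward stays flat on the first $T/3$ levels, and along $s^\star$ it multiplies by $L$ per step on the last $2T/3$ levels.

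The difficulty is that an algorithm could query $\hat V$ on prefixes of depth $T/3$ and detect which one begins to grow. To defeat this, give the growth to every continuation: on the last $2T/3$ levels, let $\hat V$ multiply by $L$ on every branch, and let $\hat V$ divide by $L$ only at the single final step off $s^\star$ at depth $T$. A cleaner variant uses $B$ large. Let the top $T/3$ levels be uniform over $B$ symbols, with the hidden leaf determined only by an unqueried random permutation. Choose the oracle samples so that the identity of the symbols is only revealed through sampling.

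The key quantitative step is the following. Under the construction, $p_I$ puts mass at least $1/2$ on the set of leaves (or the subtree) that $\pi_{\rm ref}$ hits with probability at most $L^{-2T/3}$. By the no-guess constraint, the output lies in this set only if some oracle query landed there. Each query $o(u)[k]$ is an independent draw, so by a union bound the probability of hitting the set within $\mathrm{TC}$ queries is at most $\mathrm{TC}\cdot L^{-2T/3}$ (after accounting for the tree structure, where each step's conditional hit probability is the ratio of subtree masses). Averaging over the hidden $s^\star$ with Yao's principle, the achieved TV is at least $\tfrac12-\mathrm{TC}\,L^{-2T/3}$. Requiring this to be at most $\tfrac13$ gives $\mathrm{TC}=\Omega(L^{2T/3})$.

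The main obstacle is the design in which $\hat V$ satisfies the ratio bound $L$ yet leaks no location information before the search reaches depth. I would try a symmetric construction in which $\hat V$ at depth $t<T$ depends only on $t$. The sole asymmetry then sits in the final-step potentials, and the final-step ratio must be at most $L$. To reconcile this with exponential target concentration, I would make the base model $\pi_{\rm ref}$ itself exponentially biased. Each step puts mass $L^{-1}$-ish on the hidden child, but the oracle is the only way to learn which symbol that is. The reward $\hat V$ then tilts by $L$ per step on a fraction $2/3$ of the steps, which is where the exponent $2T/3$ arises from balancing the number of tilted steps against the $L^{\pm1}$ ratio budget.
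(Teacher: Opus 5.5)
There is a genuine gap: the proposal never arrives at a hard family that works, and the symmetric construction you end with provably cannot work. Suppose $\hat V$ at each depth $t<T$ depends only on $t$. Then $\hat V(s_{1:T-1})\equiv c$, and Assumption~\ref{assump:reward_model_ratio_bound} forces $\hat V(s_{1:T})\in[c/L,cL]$, hence $dp_I/d\pi_{\rm ref}\in[L^{-2},L^2]$. Two consequences follow:
\begin{itemize}
\item Every set of $p_I$-mass at least $1/2$ has $\pi_{\rm ref}$-mass at least $1/(2L^2)$, which contradicts your key quantitative step.
\item Rejection sampling over full oracle rollouts (accept $s_{1:T}$ with probability $\hat V(s_{1:T})/(cL)$) respects the no-guess constraint and reaches TV $\le 1/3$ with $O(L^2T)$ queries.
\end{itemize}
Biasing $\pi_{\rm ref}$ cannot rescue this, because both the density-ratio bound and the rejection sampler hold for every $\pi_{\rm ref}$. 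Your intermediate variants fail as well. Multiplying by $L$ on every branch and dividing only at the last step off $s^\star$ gives $s^\star$ an $L^2$ relative weight against $B^T-1$ competitors, so there is no concentration. The ``flat for $T/3$, tilt for $2T/3$'' version leaks the path once the right depth-$T/3$ node has been visited, as you note yourself, so it can give at most roughly $B^{T/3}$.

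The missing idea is that $\hat V$ must be path-dependent, since that is the only way to concentrate $p_I$. The dependence should be hidden behind a \emph{long} flat stretch placed \emph{first}. The paper takes $T=3m$, $B=L$ and uniform $\pi_{\rm ref}$. For each $u\in[B]^{2m}$ it sets the ratio $\hat V_u(s_{1:t})/\hat V_u(s_{1:t-1})$ to:
\begin{itemize}
\item $1$ for $t\le 2m$;
\item $L$ for $t>2m$ when $s_{1:2m}=u$;
\item $1/L$ otherwise.
\end{itemize}
Leaves below $u$ then get weight $L^m$ and all others $L^{-m}$, so $p_{I_u}(A_u)=1/(2-L^{-2m})>1/2$ for $A_u=\{s_{1:2m}=u\}$. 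The $L^{2m}$ tilt exactly offsets the $B^{2m}=L^{2m}$ competing depth-$2m$ nodes. This is what dictates the $2T/3$ versus $T/3$ split: a flat stretch of length $k$ needs $L^{2(T-k)}\gtrsim B^k$, and the resulting bound is $B^k$.

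The inputs $\hat V_u$ and $\hat V_{u'}$ differ only strictly below $u$ and $u'$. Reading the access model so that $\hat V$ is evaluated only on generated prefixes, the algorithm's run is therefore identical for every $u$ it has not yet visited. Take $U\sim\mathrm{Unif}([B]^{2m})$ and a deterministic no-guess algorithm visiting at most $Q$ depth-$2m$ nodes. It outputs a leaf in $A_U$ with probability at most $Q/B^{2m}$, while TV $\le 1/3$ forces this probability to exceed $1/6$. Yao's principle then gives $Q\ge L^{2T/3}/6$.

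Your counting step also puts the hardness in the wrong randomness. It is false that each oracle query lands in the target set with probability at most $L^{-2T/3}$. A query $o(w)[k]$ at the parent $w$ of $u$ lands in $A_u$ with probability $1/B$, and an algorithm that knew $u$ would reach it in about $BT$ queries by resampling at the current prefix. The bound has to come from the uniformly random hidden $U$, and it holds for every fixed oracle table. The exponent is the logarithm of the number of indistinguishable candidates, which is also why the alphabet must be $B=L$ rather than binary.
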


\begin{corollary}[LB2: worst-case complexity under a Bellman-error bound]
\label{cor:LB2_final_style}
Any randomized algorithm $A\in\mathcal A$ such that for every input
$I=\langle B,\pi_{\rm ref},\hat V\rangle$ satisfying Assumption~\ref{assump:reward_model_ratio_bound} and~\ref{assump:bellman_error_uniform_bound} with $L\!>\!1$ and $\varepsilon\in(0,L-1]$,
the output leaf distribution obeys
$\|q_{A,I}-p_I\|_{\rm TV}\le \frac13$, if $A$ always stops before a time complexity $\mathrm{TC}$, then
$\mathrm{TC}=\Omega\bigl((1+\varepsilon)^{2T/3}\bigr)$.
\end{corollary}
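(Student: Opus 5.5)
The plan is to deduce Corollary~\ref{cor:LB2_final_style} from Theorem~\ref{thm:LB1_final_style} by a reduction on the parameter. The key observation is that if a reward model satisfies Assumption~\ref{assump:reward_model_ratio_bound} with ratio constant $L'$, then it also satisfies Assumption~\ref{assump:bellman_error_uniform_bound} with $1+\varepsilon = L'$. Indeed, $\mathbb E_{s_{t+1}\sim\pi_{\rm ref}}[\hat V(s_0,s_{1:t+1})]$ is a convex combination of values $\hat V(s_0,s_{1:t+1})$, each lying in $[\hat V(s_0,s_{1:t})/L',\,L'\hat V(s_0,s_{1:t})]$. Hence the expectation lies in the same interval, and both Bellman ratios are at most $L'$.

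The proof then proceeds in the following steps.

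\emph{Step 1.} Fix $L>1$ and $\varepsilon\in(0,L-1]$, and set $L':=1+\varepsilon\in(1,L]$. By the observation above, every input satisfying Assumption~\ref{assump:reward_model_ratio_bound} with constant $L'$ satisfies both assumptions with constants $(L,\varepsilon)$. This uses $L'\le L$ and the fact that the ratio bound is monotone in its constant.

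\emph{Step 2.} Let $A$ be any algorithm that succeeds, with TV error at most $\tfrac13$, on all inputs obeying both assumptions with $(L,\varepsilon)$. By Step 1, $A$ in particular succeeds on every input obeying Assumption~\ref{assump:reward_model_ratio_bound} with constant $L'>1$. That is precisely the hypothesis of Theorem~\ref{thm:LB1_final_style} with $L$ replaced by $L'$.

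\emph{Step 3.} Invoking Theorem~\ref{thm:LB1_final_style} then gives $\mathrm{TC}=\Omega(L'^{2T/3})=\Omega((1+\varepsilon)^{2T/3})$.

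The only point needing care is making sure the hard instances in the proof of Theorem~\ref{thm:LB1_final_style} are quantified so that the theorem genuinely applies to the class with constant $L'$, rather than some fixed $L$. Since the theorem is stated for every $L>1$, this is automatic. If one also wants the hidden constant to be uniform in $L$, I would check that the lower-bound instances (a needle-in-a-haystack tree whose reward ratios are exactly $L'$ per step) have constants independent of $L'$. The convexity argument in Step 1 is the substantive step, and it is elementary. I do not expect a real obstacle, beyond confirming that Assumption~\ref{assump:bellman_error_uniform_bound} is stated with the expectation taken under $\pi_{\rm ref}$, which the convex-combination argument requires.
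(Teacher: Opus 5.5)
Your argument is correct and is essentially the paper's own proof. The paper also reduces to Theorem~\ref{thm:LB1_final_style} by taking its hard family with $L$ replaced by $1+\varepsilon$. Your convex-combination observation is exactly the check the paper leaves as ``one checks as before'': a ratio bound $L'$ implies a Bellman bound with $1+\varepsilon=L'$, and monotonicity in $L$ follows from $\varepsilon\le L-1$. You package it as a class inclusion, so the theorem is used as a black box rather than rerun verbatim.

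One caveat is shared by both versions. Here $1+\varepsilon$ plays the role of the branching factor $B$ in the hard instance, so the conclusion relies on Theorem~\ref{thm:LB1_final_style} holding for non-integer parameters, including those in $(1,2)$. The paper only asserts that this extension is easy and does not prove it.
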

\begin{proof}[Sketch of proof]
See Appendix~\ref{Appendix:proof} for details.
We build a hard family $\{I_u\}$ such that for each $u$ the target leaf distribution $p_{I_u}$ places $>\!1/2$ mass on an event
$A_u\subseteq [B]^T$ (the set of maximal-probability leaves). Hence $\|q_{A,I_u}-p_{I_u}\|_{\rm TV}\le 1/3$ forces any valid algorithm to hit $A_u$
with constant probability.
By Yao's min-max principle, it suffices to bound the \emph{average} success probability of any deterministic algorithm $\mathsf a$
under a random hard input $U\!\sim\!{\rm Unif}([B]^{2T/3})$.
With time budget $Q$ and the no-guess constraint, $\mathsf a$ can only ``try'' at most $Q$ candidate leaves/prefix-paths, so by a union bound
$\Pr[\mathsf a\ \text{hits}\ A_U]\le Q/B^{2T/3}$.
Therefore achieving constant success probability requires $Q=\Omega(B^{2T/3})$, which gives $\mathrm{TC}=\Omega(L^{2T/3})$ (and
$\Omega((1+\varepsilon)^{2T/3})$ under Assumption~\ref{assump:bellman_error_uniform_bound}).
\end{proof}

\subsection{What can we learn from single-particle guided SMC}\label{subsec:ALS}
A common and naive inference-time guidance primitive performs \emph{local, step-wise reward-tilted sampling} and appears under different names across communities, including
\emph{action-level sampling}~\cite{yang2021fudge,rohatgi2025taming},
\emph{locally constrained decoding}~\cite{loula2025syntactic, shin2021constrained, scholak2021picard, poesia2022synchromesh, willard2023efficient, ugare2024syncode},
and \emph{value-based decoding / SVDD} in diffusion guidance~\cite{li2024derivative,uehara2025inference}.
To avoid terminology overload, we refer to this primitive as \emph{single-particle guided SMC} (SP-gSMC), since it is exactly the $N=1$ specialization of an SMC sampler using the following optimal proposal (Appendix~\ref{Appendix:optimal_proposal_SMC}):
$\pi_{\rm p}(s_{t+1}| s_0,s_{1:t})
\propto
\pi_{\rm ref}(s_{t+1}| s_0,s_{1:t})\,\hat V(s_0,s_{1:t+1})$.

Let $\hat\pi_t(\cdot| s_0)$ denote the resulting joint distribution on $s_{1:t}$ generated by sequentially sampling $s_{k}\sim \hat\pi(\cdot\mid s_0,s_{1:k-1})$ for $k=1,\dots,t$.
Thus, single-particle guided SMC can be viewed as the single-trajectory analogue of the optimal-proposal SMC update, bridging particle-based guidance and chain-based approach later introduced in Section~\ref{sec:value_based}.
Also later in Remark~\ref{remark:SMC_vs_ALC}, we further discuss the efficiency of single-particle guided SMC comparing to its multi-particle counterpart, subject to the bellman error being small enough.

We have the following TV error control regarding the single-particle guided SMC.

\begin{theorem}[SP-gSMC TV error]\label{thm:ALS_TV_error}
Assume Assumption~\ref{assump:bellman_error_uniform_bound} holds.
Then for every $t\in[T]$,
$\bigl\|\tilde\pi_t(\cdot| s_0)-\hat\pi_t(\cdot| s_0)\bigr\|_{\rm TV}
\ \le\ 2t\varepsilon $.
\end{theorem}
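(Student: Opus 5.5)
The plan is to write $\hat\pi_t$ in closed form, show that its density ratio to $\tilde\pi_t$ is pinned between $(1+\varepsilon)^{-2t}$ and $(1+\varepsilon)^{2t}$, and then convert that multiplicative bound into a TV bound through the one-sided form of TV.

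\textbf{Step 1 (closed form for $\hat\pi_t$).} Write
\[
Z_{k-1}(s_{1:k-1}):=\mathbb E_{s_k\sim\pi_{\rm ref}(\cdot| s_0,s_{1:k-1})}[\hat V(s_0,s_{1:k})]
\]
for the local normalizer of the SP-gSMC proposal. Then
\[
\hat\pi_t(s_{1:t}| s_0)=\prod_{k=1}^t\frac{\pi_{\rm ref}(s_k| s_0,s_{1:k-1})\,\hat V(s_0,s_{1:k})}{Z_{k-1}(s_{1:k-1})}.
\]
Multiplying and dividing by $\hat V(s_0,s_{1:k-1})$ in each factor makes the reward terms telescope:
\[
\hat\pi_t(s_{1:t}| s_0)=\frac{\pi_{\rm ref}(s_{1:t}| s_0)\,\hat V(s_0,s_{1:t})}{\hat V(s_0,\varnothing)}\;R_t(s_{1:t}),
\qquad
R_t(s_{1:t}):=\prod_{k=1}^t\frac{\hat V(s_0,s_{1:k-1})}{Z_{k-1}(s_{1:k-1})}.
\]

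\textbf{Step 2 (bounding the ratio).} By Assumption~\ref{assump:bellman_error_uniform_bound}, each factor $\hat V(s_0,s_{1:k-1})/Z_{k-1}(s_{1:k-1})$ lies in $[(1+\varepsilon)^{-1},1+\varepsilon]$. Hence $R_t\in[(1+\varepsilon)^{-t},(1+\varepsilon)^{t}]$ pointwise.

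Since $\tilde\pi_t\propto\pi_{\rm ref}\hat V$, Step 1 gives $\hat\pi_t(s_{1:t})=\tilde\pi_t(s_{1:t})\,R_t(s_{1:t})/\mathbb E_{\tilde\pi_t}[R_t]$. Here I use that $\hat\pi_t$ is a probability measure, so the constant is forced to be $1/\mathbb E_{\tilde\pi_t}[R_t]$. This expectation lies in the same interval as $R_t$. Therefore the likelihood ratio $r:=\hat\pi_t/\tilde\pi_t$ satisfies
\[
(1+\varepsilon)^{-2t}\le r\le(1+\varepsilon)^{2t}
\]
everywhere.

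\textbf{Step 3 (TV bound).} I would use the one-sided form
\[
\|\tilde\pi_t-\hat\pi_t\|_{\rm TV}=\mathbb E_{\tilde\pi_t}\big[(1-r)_+\big]\le 1-(1+\varepsilon)^{-2t}.
\]
Then $1-e^{-x}\le x$ applied with $x=2t\log(1+\varepsilon)$, together with $\log(1+\varepsilon)\le\varepsilon$, gives
\[
1-(1+\varepsilon)^{-2t}\le 2t\log(1+\varepsilon)\le 2t\varepsilon,
\]
which is the claim.

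\textbf{Main obstacle.} The step to handle carefully is the choice of TV representation in Step 3. The symmetric bound $\sup|r-1|$ would only give $(1+\varepsilon)^{2t}-1$, which is not linear in $t\varepsilon$ without an extra smallness assumption on $t\varepsilon$. Using the lower side $(1-r)_+$ avoids this issue. A secondary detail is the normalization: whether $\hat V(s_0,\varnothing)$ matches $Z_0$ exactly does not matter, because it is absorbed into the constant. As a fallback, a hybrid (telescoping-coupling) argument over steps $k=1,\dots,t$, each costing at most $2\varepsilon$, gives the same bound.
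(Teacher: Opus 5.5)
Your proof is correct, but it takes a different route from the paper. The paper never writes $\hat\pi_t$ in closed form. Instead it introduces the auxiliary prefix law $q_t\propto\pi_{\rm ref}(s_{1:t}\mid s_0)\,\mathbb E_{s_{t+1}}[\hat V(s_0,s_{1:t+1})]$ and uses the factorizations $\tilde\pi_{t+1}=q_t\,\hat\pi(\cdot\mid s_{1:t})$ and $\hat\pi_{t+1}=\hat\pi_t\,\hat\pi(\cdot\mid s_{1:t})$. Triangle inequality plus data processing then give $\|\tilde\pi_{t+1}-\hat\pi_{t+1}\|_{\rm TV}\le\|\tilde\pi_t-\hat\pi_t\|_{\rm TV}+\|q_t-\tilde\pi_t\|_{\rm TV}$, and the last term is bounded by Lemma~\ref{lem:tv_control} as $\varepsilon/(1-\varepsilon)\le 2\varepsilon$. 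This is essentially the hybrid argument you mention as a fallback.

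Your global argument (a two-sided likelihood-ratio bound, then the one-sided representation $\mathbb E_{\tilde\pi_t}[(1-r)_+]$) gives the sharper bound $1-(1+\varepsilon)^{-2t}<1$. It also avoids the side conditions the paper uses tacitly: Lemma~\ref{lem:tv_control} needs $\varepsilon<1$, and $\varepsilon/(1-\varepsilon)\le 2\varepsilon$ needs $\varepsilon\le 1/2$. These are harmless, since otherwise the claim is trivial, but your route never has to invoke that. Your pointwise ratio bound is also the same estimate the paper re-derives later for the MH contraction (Lemma~\ref{lem:ALS_exact_MH_contraction}, with $b=(1+\varepsilon)^T$), so your approach serves both results at once.

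What the paper's stepwise argument buys in return is locality. Only the per-step quantity $\|q_t-\tilde\pi_t\|_{\rm TV}$ enters, so it adapts readily to an averaged Bellman-error condition, e.g. $Z_t/\hat V$ close to $1$ in $L^1(\tilde\pi_t)$. Your argument instead relies on a pointwise lower bound on $r$, and hence on the uniform form of Assumption~\ref{assump:bellman_error_uniform_bound}.

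One small tidy-up. Assumption~\ref{assump:bellman_error_uniform_bound} ranges over $t\in[T]$, so it does not control the $k=1$ factor $\hat V(s_0,\varnothing)/Z_0$ of $R_t$. That factor is a constant, so drop it from $R_t$ before bounding rather than claiming it lies in $[(1+\varepsilon)^{-1},1+\varepsilon]$; it cancels in $r$ anyway. The remaining $t-1$ factors are covered by the assumption, and you even get $1-(1+\varepsilon)^{-2(t-1)}$.
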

This result shows that, under naive guidance, the single-particle guided SMC output distribution $\hat\pi_T$ moves closer to the target $\tilde\pi_T$ (i.e., achieves $D_{\rm TV}(\hat\pi_T,\tilde\pi_T)<1$) only when the Bellman error satisfies $\varepsilon < \frac{1}{2T}$, equivalently $\varepsilon=O(1/T)$. Once $\varepsilon$ exceeds this threshold, the naive guidance becomes too weak to produce a meaningful improvement in total variation.
Moreover, if the target sampling accuracy is $\delta_{\rm TV}$, then the naive single-particle guided SMC cannot attain the desired TV error if $\varepsilon \ge \frac{\delta_{\rm TV}}{2T}$, motivating the need to consider alternative algorithms beyond naive guidance.

The small bellman error $\varepsilon=O(1/T)$ regime also coincides with the exponential lower bound in Corollary~\ref{cor:LB2_final_style}, which indicates that this regime is critical for a sub-exponential complexity guarantee; see Remark~\ref{rem:lb2_vs_ub} for further details.

\section{Complexity of SMC based Inference Time Scaling}\label{sec:SMC}
In this section, we analyze the computational complexity of the SMC sampler with
respect to the number of particles, the time horizon, and the cost of Markov
propagation and reward evaluation. This analysis is carried out under an LLM
inference time-scaling regime, where performance is governed by the number of
reward-model evaluations and the extent of the trajectory search.

\subsection{SMC complexity}
The idea of a SMC sampler is as mentioned in Section~\ref{sec:problem_setting}, using a particle system to mimic the Feynman-Kac flow.
We design the Feynman-Kac model with a terminal step such that the FK flow of which matches our target at step $T+1$, i.e. 
$\eta_{T+1}=\tilde\pi_T$.
Accordingly, the sampling distribution of our SMC sampler is defined as
$\hat\pi_T := \bar\eta_{T+1}^N := \mathbb E[\eta_{T+1}^N]$,
namely, we output a particle \emph{after} this terminal resampling step. Specifically, detailed algorithms for SMC with the naive proposal is in Algorithm~\ref{alg:smc_naive_Vs0_1}. As a first step towards time complexity, we propose the following theorem about particle complexity, i.e. the number of particles to maintain for an SMC sampler to sample within a desired accuracy.

\begin{theorem}[Particles Complexity]
\label{thm:number_of_particles_local_bias}
Fix the horizon $T\ge 2$ and a target accuracy $\delta_{\rm TV}\in(0,1)$.
If Assumptions~\ref{assump:reward_model_ratio_bound} and~\ref{assump:bellman_error_uniform_bound} hold with $L,\varepsilon>0$,
let $\bar\eta_{T+1}^N:=\mathbb E[\eta_{T+1}^N]$ be the output distribution of an SMC sampler
(after the terminal resampling step $T{+}1$ with $M_{T+1}=\mathrm{Id}$).
Then the following sufficient conditions guarantee
$\|\bar\eta_{T+1}^N-\eta_{T+1}\|_{\rm TV}\le\delta_{\rm TV}$ under naive proposal. That is, the number of particles
$N\ge\tfrac{L^6\,T\,(1+\varepsilon)^{6(T-1)}}{2\,\delta_{\rm TV}}$.
\end{theorem}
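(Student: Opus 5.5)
The plan is the standard bias bound for the expected empirical measure of an interacting particle system. The bound has the form $\|\mathbb E[\eta_n^N]-\eta_n\|_{\rm TV}\lesssim \frac{1}{N}\sum_{p} \beta_{p,n}^2$-type quantities, and the constant is controlled by oscillations of the normalized potentials $Q_{p,n}(1)/\eta_p(Q_{p,n}(1))$. Concretely, I would start from the telescoping decomposition
\[
\eta_{T+1}^N-\eta_{T+1}=\sum_{p=1}^{T+1}\bigl[\Phi_{p,T+1}(\eta_p^N)-\Phi_{p,T+1}(\Phi_p(\eta_{p-1}^N))\bigr],
\]
with the convention $\Phi_1(\eta_0^N):=\eta_1$. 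Taking expectations, each term is a nonlinear functional of the local fluctuation $\eta_p^N-\Phi_p(\eta_{p-1}^N)$. Conditionally on $\mathcal F_{p-1}^N$, this fluctuation is centered with conditional variance $O(1/N)$.

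The next step is a second-order expansion of each term. For $f$ with $\|f\|_\infty\le1$, write
\[
\Phi_{p,n}(\mu)(f)-\Phi_{p,n}(\nu)(f)=\frac{(\mu-\nu)(Q_{p,n}(f-\Phi_{p,n}(\nu)f))}{\mu(Q_{p,n}1)}.
\]
Replace the random denominator $\mu(Q_{p,n}1)$ by $\nu(Q_{p,n}1)$ plus a remainder. The first-order term has zero conditional mean, so only the remainder survives in expectation. That remainder is a product of two fluctuations divided by $\mu(Q_{p,n}1)$. Its expectation is bounded by $\frac{1}{N}\,\bigl(\sup Q_{p,n}1/\inf Q_{p,n}1\bigr)^{2}\cdot(\text{osc})$, and cruder ratios appear once the denominator is lower bounded uniformly.

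The key model-specific input is bounding $\sup_x Q_{p,n}(1)(x)/\inf_x Q_{p,n}(1)(x)$ under the naive proposal, where $G_t=\hat V(s_{1:t})/\hat V(s_{1:t-1})$. Here $Q_{p,n}(1)(s_{1:p})$ telescopes to
\[
\hat V(s_{1:p})^{-1}\,\mathbb E_{\pi_{\rm ref}}[\hat V(s_{1:n-1})\mid s_{1:p}]
\]
(up to the index shift from the potential placement). Iterating Assumption~\ref{assump:bellman_error_uniform_bound} $(n-1-p)$ times places this ratio in $[(1+\varepsilon)^{-(n-p)},(1+\varepsilon)^{n-p}]$. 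The boundary mismatch of one potential, together with the terminal step $G_T$, is absorbed by Assumption~\ref{assump:reward_model_ratio_bound} with a factor $L$. The sup/inf ratio is therefore at most $L^{2}(1+\varepsilon)^{2(T-1)}$. Since the remainder involves the cube of such ratios (two from the squared fluctuation normalization and one from the denominator), each term is at most $L^6(1+\varepsilon)^{6(T-1)}/N$, up to the factor $1/2$ from the TV normalization. Summing over the $T$ relevant times gives
\[
\|\bar\eta^N_{T+1}-\eta_{T+1}\|_{\rm TV}\le \frac{L^6T(1+\varepsilon)^{6(T-1)}}{2N},
\]
and setting this $\le\delta_{\rm TV}$ yields the stated $N$.

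The main obstacle will be the second-order remainder: controlling the expectation of a ratio with a random denominator without losing a factor beyond $1/N$. I would handle it by bounding the denominator below deterministically using $\inf Q_{p,n}1$, which is exactly where the extra sup/inf powers, and hence the sixth powers, enter. Careful bookkeeping of the index shift between $G_{t-1}$ and $M_t$ and of the terminal identity step is needed to get exactly $(1+\varepsilon)^{T-1}$ per ratio rather than $(1+\varepsilon)^{T}$.
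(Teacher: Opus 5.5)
Your proposal is correct and follows essentially the same route as the paper. Both arguments telescope over $p$, then use the first-order expansion, whose linear term has zero conditional mean. The second-order remainder $-\mu(G_{p,n,\nu})^{-1}(\mu-\nu)(G_{p,n,\nu})\,(\mu-\nu)(D_\nu\Phi_{p,n}f)$ is bounded by Cauchy--Schwarz together with a deterministic lower bound on the denominator, which gives $\frac{1}{2N}q_{p,n}(q_{p,n}^2-1)\le \frac{q_{p,n}^3}{2N}$. Finally the naive-proposal bound $q_{p,T+1}\le L^2(1+\varepsilon)^{2(T-p)}$ is cubed and summed. The paper keeps the $p$-dependence and sums with $\sum_{j<T}a^j\le Ta^{T-1}$, which lands on the same constant as your uniform bound.

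The only blemish is the index bookkeeping you already flagged. The correct identity is $Q_{p,T+1}(1)(s_{1:p})=\mathbb E[\hat V(S_{1:T})\mid s_{1:p}]/\hat V(s_{1:p-1})$, which is your expression times $G_p$; that $G_p$ factor is where the $L$ comes from. Also, the $T-p$ Bellman steps give exponent $T-p$, not $n-p$, and this is exactly what yields $(1+\varepsilon)^{T-1}$ per ratio.
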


The proof of this theorem uses the following Theorem~\ref{thm:TV_bias_local}, along with Lemma~\ref{lem:control_constants}. The detailed proof can be found in Appendix~\ref{Appendix:Proofs_5}.

We also provide Theorem~\ref{thm:number_of_particles_local_bias_optimal_proposal} and Proposition~\ref{prop:number_of_particles_arbitrary_proposal} that generalize the particle complexity result to arbitrary proposal. Note that the particle complexity $N$ naturally induces an $NT$-type baseline cost for SMC-style algorithms, since one must propagate $N$ particles across $T$ steps (with resampling/reweighting performed at each step).
Additional overhead depends on the per-step cost of (i) sampling from the proposal $\pi_{\rm p}$ and (ii) computing the importance ratio $\pi_{\rm ref}/\pi_{\rm p}$ (and any auxiliary quantities used in reweighting).
These results therefore allow flexibility to balance proposal difficulty, importance-weight computation, and the required number of particles; see Remark~\ref{remark:trade-off_particle_proposal}.

Although additional computational cost may arise for more sophisticated proposals $\pi_{\rm p}$,
the naive-proposal SMC admits a direct particle-to-time conversion: each of the $T$ steps performs $O(N)$ work,
hence Theorem~\ref{thm:number_of_particles_local_bias} immediately implies the following time complexity corollary.
\begin{corollary}[Time complexity for naive proposal SMC]\label{cor:TC_SMC_naive}
Under Assumption~\ref{assump:reward_model_ratio_bound} and~\ref{assump:bellman_error_uniform_bound}, and we consider the regime in which $\varepsilon=O(1/T)$.
The time complexity for a SMC sampler with naive proposal to achieve $\delta_{\rm TV}$ accuracy, i.e. $\|\tilde\pi_T-\hat\pi_T\|\leq \delta_{\rm TV}$ is $O\big(\frac{L^6T^2}{\delta_{\rm TV}}\big)$.
\end{corollary}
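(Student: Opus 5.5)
The corollary follows by converting the particle count of Theorem~\ref{thm:number_of_particles_local_bias} into runtime. Two things need to be shown: in the regime $\varepsilon=O(1/T)$ the exponential factor $(1+\varepsilon)^{6(T-1)}$ is bounded by an absolute constant, and the naive-proposal SMC sampler (Algorithm~\ref{alg:smc_naive_Vs0_1}) spends $O(N)$ work per step over $T+1$ steps.

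\emph{Controlling the exponential factor.} Write $\varepsilon\le c/T$ for some constant $c>0$. Using $1+x\le e^{x}$ gives
\[
(1+\varepsilon)^{6(T-1)}\le \exp\bigl(6(T-1)\varepsilon\bigr)\le \exp\bigl(6c(T-1)/T\bigr)\le e^{6c}=O(1).
\]
Substituting into Theorem~\ref{thm:number_of_particles_local_bias}, the choice
\[
N=\Bigl\lceil \frac{L^6\,T\,e^{6c}}{2\,\delta_{\rm TV}}\Bigr\rceil=O\!\Bigl(\frac{L^6T}{\delta_{\rm TV}}\Bigr)
\]
satisfies the sufficient condition. Hence $\|\bar\eta^N_{T+1}-\eta_{T+1}\|_{\rm TV}\le\delta_{\rm TV}$. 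Since $\hat\pi_T=\bar\eta^N_{T+1}$ and $\eta_{T+1}=\tilde\pi_T$ by construction of the terminal step, this is exactly the accuracy required in the corollary.

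\emph{Per-step cost accounting.} Under the naive proposal $\pi_{\rm p}=\pi_{\rm ref}$, the potential reduces to $G_t=\hat V(s_0,s_{1:t})/\hat V(s_0,s_{1:t-1})$, so no likelihood of a sentence-level span ever has to be evaluated. Each iteration of the algorithm does three things:
\begin{itemize}
\item it makes $N$ reward-model queries to form the weights (reusing the stored $\hat V$ of each parent);
\item it normalizes the weights and draws $N$ i.i.d.\ categorical ancestors, which costs $O(N)$ with the alias method or $O(N\log N)$ by binary search (or $O(N)$ with systematic resampling);
\item it makes $N$ calls to the base-model sampler.
\end{itemize}
Taking a single sampler call or reward query as unit cost, as in the paper's convention, each step costs $O(N)$. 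Over the $T$ propagation steps plus the terminal resampling step, the total is $O(NT)=O(L^6T^2/\delta_{\rm TV})$.

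\emph{Main obstacle.} The only delicate point is the cost model, namely treating oracle calls as $O(1)$ and making sure resampling adds no superlinear factor. If a logarithmic factor from categorical sampling appears, I would remove it by invoking alias or systematic resampling, so that the stated bound holds without a $\tilde O$. Everything else is direct substitution into Theorem~\ref{thm:number_of_particles_local_bias}.
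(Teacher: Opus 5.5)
Your proposal is correct and matches the paper's argument: you absorb $(1+\varepsilon)^{6(T-1)}\le e^{6c}$ into a constant in Theorem~\ref{thm:number_of_particles_local_bias} to get $N=O(L^6T/\delta_{\rm TV})$, then multiply by the $O(N)$ per-step cost over the $T+1$ steps. One small caution: to remove any $\log N$ factor, use the alias method rather than systematic resampling, because systematic resampling makes the offspring correlated and so falls outside the conditional-independence assumption of Lemma~\ref{lem:one_step_covariance}, on which the particle bound depends.
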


With a matching algorithmic guarantee in hand, we can now relate our complexity upper bound achieved by the SMC sampler with naive proposal to the worst-case limitations captured by our lower bounds.
\begin{remark}[Lower vs.\ upper bounds]\label{rem:lb2_vs_ub}
Our LB2 shows that with an approximate reward/value model (captured by a Bellman-error bound $\varepsilon$), any sampler still needs worst-case
$\mathrm{TC}=\Omega\big((1+\varepsilon)^{2T/3}\big)$ to achieve constant TV accuracy. This highlights the key gain of approximation: the exponential base becomes $1+\varepsilon$, which can be made arbitrarily close to $1$ as the reward model improves, even if the ratio bound constant $L$ is large. In contrast, without such approximation (only a ratio bound), the best available lower bound remains $\Omega(L^{2T/3})$.

Conversely, our upper bounds attain this favorable base (up to constant powers): the particle-complexity Theorem~\ref{thm:number_of_particles_local_bias} yield $N$ scaling as $(1+\varepsilon)^{O(T)}$ (times $\mathrm{Poly}(T,L,\delta_{\rm TV}^{-1})$), matching LB2 up to constant-factor gaps in the exponent. The same qualitative $(1+\varepsilon)^{O(T)}$ dependence also holds for single-particle guided SMC with Metropolis-Hastings correcting (or SP-gSMC+MH, see the proof of Theorem~\ref{Thm:TC_ALS_Resampling+MH} and~\ref{Thm:TC_ALS+MH}) which we will introduce in the next section. In the regime $\varepsilon=O(1/T)$, we have $(1+\varepsilon)^{O(T)}={O(1)}$, so these exponential factors are absorbed and the overall complexity becomes polynomial in $(T,\delta_{\rm TV}^{-1})$. 
\end{remark}

\subsection{On the choice of McKean kernels.}
Our main TV guarantees are derived from Theorem~\ref{thm:TV_bias_local}, which is \emph{kernel-agnostic}:
it provides a uniform upper bound that holds for any McKean transition kernel $K_{p,\eta}$ satisfying the
consistency condition $\eta K_{p+1,\eta}=\Phi_{p+1}(\eta)$, and all of our previous results build on this
worst-case control.
However, the same FK model can admit many different McKean kernels, and their particle systems may
exhibit different second-order behavior (hence potentially improved
rates in structured cases).
For completeness, Appendix~\ref{Appendix:Proofs_5} develops a \emph{kernel-sensitive} refinement
(Proposition~\ref{prop:TV_bias_McKeanKernel}) and illustrates it with a concrete example of a
\emph{quantile-based stratified} transition (Corollary~\ref{cor:quantile_kernel_example}),
which can be viewed as a special stratification rule adapted to the quantiles of a designated function.
Stratified resampling schemes are classical in the particle filtering literature; see, e.g.,~\cite{douc2005comparison}.


\section{Beyond SMC: Chain based approach}\label{sec:value_based}

In this section, we will discuss another popular family of inference time scaling algorithm~\cite{li2024derivative,uehara2025inference, shin2021constrained, scholak2021picard, poesia2022synchromesh, willard2023efficient, ugare2024syncode,rohatgi2025taming} other than SMC based sampling. These family of algorithms takes single-particle guided SMC as a starting point, using various methods such as backtracking~\cite{rohatgi2025taming} or Metropolis-Hastings~\cite{karan2025reasoning} to correct the biased sampler. These corrections induce Markov chain mixing towards the target distribution, resulting in a sampling distribution arbitrarily close to the target distribution. This viewpoint treats sampling as a \emph{mixing} problem rather than a pure Monte-Carlo averaging problem whose accuracy is dominated by the control of variance. Thus it will lead to a qualitatively different dependence on the desired TV accuracy $\delta_{\rm TV}$.
At a high level, we consider two ways to build an MH-corrected SP-gSMC sampler, where we focus on the resampling-based proposal for SP-gSMC in this section, resembling the SVDD~\cite{li2024derivative,uehara2025inference} in the diffusion guidance. The other method is discussed in detail in Appendix~\ref{Appendix:Rejection_MH}.

\subsection{Resampling-based SP-gSMC sampling with Metropolis-Hastings correction}

\paragraph{Resampling-pool MH.}
The \emph{ideal} SP-gSMC proposal at step $t$ samples
$s_t\!\sim\! \pi_{\rm ref}(\cdot| s_0,s_{1:t-1})$ tilted by the local score $\hat V(s_0,s_{1:t})$.
We approximate this step using a finite \emph{resampling pool} of size $M$:
draw $M$ i.i.d. candidates $s_t^{(1:M)}\!\sim\!\pi_{\rm ref}(\cdot| s_0,s_{1:t-1})$,
compute their scores $v_t^{(j)}$ using reward model, form the empirical normalizer
$\bar Z_t\!=\!\frac1M\sum_{j=1}^M v_t^{(j)}$, and select an index
$A_t\sim{\sf Cat}(v_t^{(1:M)}/\sum_{\ell=1}^M v_t^{(\ell)})$, setting $s_t=s_t^{(A_t)}$.
Although the resulting marginal law of $s_t$ is only an approximation of the ideal tilt when $M<\infty$,
the \emph{entire} randomized procedure has an exactly known probability on the \emph{augmented} space
that includes the pool and index variables. Consequently, the likelihood ratio between the target
(reward-tilted) path law and this resampling-pool proposal is available in closed form: along a proposed
trajectory we can update
$w \leftarrow w\cdot {\hat V(s_0,s_{1:t})}/{\bar Z_t}$,
so that the final $w$ is the exact Radon--Nikodym correction induced by replacing $Z_t$ with $\bar Z_t$.
We then apply a standard MH accept--reject step \emph{at the end of the trajectory} using this exact ratio,
yielding a valid MH kernel with invariant distribution proportional to
$\pi_{\rm ref}(s_{1:T}| s_0)\,\hat V(s_0,s_{1:T})$.
Algorithm~\ref{alg:ValueBased_MResampling_MH_at_End_compact} summarizes the full procedure over a total of $H$ MH iterations.

Then, the first result shows that, despite using a finite resampling pool $M$, we can retain an exponentially fast mixing behavior in $\delta_{\rm TV}$.

\begin{algorithm}[t]
\caption{SP-gSMC $M$-Resampling MH correction}
\label{alg:ValueBased_MResampling_MH_at_End_compact}
\begin{algorithmic}[1]
\STATE \textbf{Input:} $T,H,M,\hat V,\pi_{\rm ref},s_0,\delta$.
\STATE Init $(s^{\rm acc}_{1:T},w^{\rm acc})\leftarrow(\varnothing,1)$.
\FOR{$h=1,\dots,H$}
  \STATE Init $(s_{1:T},w)\leftarrow(\varnothing,1)$.
  \FOR{$t=1,\dots,T$}
    \STATE Sample $s_t^{(j)}\stackrel{iid}{\sim}\pi_{\rm ref}(\cdot| s_0,s_{1:t-1})$, $j=1,\dots,M$.
    \STATE $v_t^{(j)}\!\leftarrow\! \hat V\!\big(s_0,(s_{1:t-1},s_t^{(j)})\big)$,\;$\bar Z_t\!\leftarrow\! \frac1M\sum_{j=1}^M v_t^{(j)}$.
    \STATE Draw $A_t\sim{\sf Cat}\Big(\frac{v_t^{(1:M)}}{\sum_{\ell=1}^M v_t^{(\ell)}}\Big)$; set $s_t\leftarrow s_t^{(A_t)}$.
    \STATE $w\leftarrow w\cdot \frac{\hat V(s_0,s_{1:t})}{\bar Z_t}$.
  \ENDFOR
  \IF{$h=1$}
    \STATE $(s^{\rm acc}_{1:T},w^{\rm acc})\leftarrow (s_{1:T},w)$; \textbf{continue}
  \ENDIF
  \STATE Draw $\alpha\sim{\sf Unif}[0,1]$.
  \IF{$\alpha< 1\wedge \frac{w^{\rm acc}\hat V(s_{1:T})}{w\,\hat V(s^{\rm acc}_{1:T})}$}
    \STATE $(s^{\rm acc}_{1:T},w^{\rm acc})\leftarrow (s_{1:T},w)$.
  \ENDIF
\ENDFOR
\STATE \textbf{Output:} $s^{\rm acc}_{1:T}$.
\end{algorithmic}
\end{algorithm}

\begin{theorem}\label{Thm:TC_ALS_Resampling+MH}
For any $0<\delta\lesssim\delta_{\rm TV}$, using time $\tilde O(LT^3\log(\delta^{-1})\log(\delta_{\rm TV}^{-1}))$, there exist a good event $\mathcal E$ with $\mathbb P(\mathcal E)\geq 1-\delta$, conditioning on which Algorithm~\ref{alg:ValueBased_MResampling_MH_at_End_compact} achieves $\delta_{\rm TV}$ accuracy, i.e. $\|\tilde\pi_T-\hat\pi_T|_{\mathcal E}\|_{\rm TV}\leq \delta_{\rm TV}$.
\end{theorem}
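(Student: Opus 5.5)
The plan is to view Algorithm~\ref{alg:ValueBased_MResampling_MH_at_End_compact} as an independence Metropolis--Hastings sampler on the augmented space of pools, indices, and trajectory. The augmented proposal $Q$ draws $(s^{(1:M)}_{1:T},A_{1:T})$ by the resampling procedure. The augmented target $\Pi$ is chosen so that its $s_{1:T}$-marginal equals $\tilde\pi_T$, and so that $d\Pi/dQ \propto w\cdot\hat V(s_{1:T})$ up to a constant. This is the standard conditional-SMC / particle-independent-MH construction: $\Pi$ draws the path from $\tilde\pi_T$, places it at a uniformly chosen index in each pool, and fills the remaining slots i.i.d.\ from $\pi_{\rm ref}$. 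A direct computation then shows that the acceptance ratio in the algorithm is exactly $\frac{\Pi(y)Q(x)}{\Pi(x)Q(y)}$. Hence the kernel is $\Pi$-reversible, which also gives invariance of $\tilde\pi_T$ for the output marginal.

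For independence MH, the classical bound is
\[
\|\Pi K^{H-1}\nu-\Pi\|_{\rm TV}\le (1-1/W^\ast)^{H-1},
\qquad W^\ast:=\sup d\Pi/dQ .
\]
A supremum over all pools is too crude, since $\bar Z_t$ can be atypically small. I will therefore introduce the good event $\mathcal E$ on which every pool normalizer concentrates. By Assumption~\ref{assump:reward_model_ratio_bound}, the scores $v_t^{(j)}/\hat V(s_0,s_{1:t-1})\in[L^{-1},L]$. Hoeffding's inequality plus a union bound over $t\le T$ and over the $H$ iterations then give $\bar Z_t\ge(1-\rho)\,\mathbb E_{\pi_{\rm ref}}[\hat V(s_0,s_{1:t})]$ for all $t$, with probability $\ge1-\delta$. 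This requires $M\gtrsim L^2\rho^{-2}\log(TH/\delta)$.

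On $\mathcal E$, I telescope $w=\prod_t \hat V(s_0,s_{1:t})/\bar Z_t$. I combine Assumption~\ref{assump:bellman_error_uniform_bound} with $\hat V(\varnothing)=1$, which gives $\mathbb E[\hat V(s_{1:t})]/\hat V(s_{1:t-1})\ge(1+\varepsilon)^{-1}$. The normalized importance weight is then bounded by $W^\ast\le\big((1+\varepsilon)/(1-\rho)\big)^{2T}$. With $\varepsilon=O(1/T)$ and $\rho=1/T$ this is $O(1)$. Consequently $H=O(\log(\delta_{\rm TV}^{-1}))$ MH iterations suffice for the conditional chain.

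The main obstacle is making "conditioning on $\mathcal E$" rigorous. Conditioning destroys exact reversibility, so I will compare the conditioned chain with a modified kernel whose proposals are restricted to $\mathcal E$. The restricted kernel is still an independence MH kernel, with target $\Pi$ restricted to $\mathcal E$ and ratio bound $W^\ast$. The restricted target differs from $\Pi$ in TV by at most $\Pi(\mathcal E^c)$. I need to bound $\Pi(\mathcal E^c)\lesssim L^{O(1)}\delta$ via the likelihood ratio, and this is why the statement requires $\delta\lesssim\delta_{\rm TV}$. The $\delta_{\rm TV}$ budget then splits as $\delta_{\rm TV}/2$ for mixing and $\delta_{\rm TV}/2$ for this bias.

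Finally, the cost is $H\cdot T\cdot M$ reward and sample calls. With $M=\tilde O(L^2T^2\log(\delta^{-1}))$ this gives $\tilde O(L^{2}T^3\log\delta^{-1}\log\delta_{\rm TV}^{-1})$. To reach the stated linear dependence on $L$, I would try a multiplicative Bernstein bound exploiting the bounded ratio, which has variance at most $L$ times the squared mean. This lets $M=\tilde O(LT^2\log\delta^{-1})$ suffice and gives the claimed $\tilde O(LT^3\log(\delta^{-1})\log(\delta_{\rm TV}^{-1}))$.
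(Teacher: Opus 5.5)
Your architecture matches the paper's proof. It has the augmented target with the selected path at a uniform index and the other slots i.i.d.\ from $\pi_{\rm ref}$, a good set $\mathsf G$, and the restricted independence MH kernel $K_{\mathsf G}$; the paper shows $K_{\mathsf G}$ is \emph{exactly} the law of the chain conditioned on $\mathcal E$, so no comparison argument is needed. It then uses Doeblin minorization and Lemma~\ref{lem:freedman} to get $M=\tilde O(LT^2\log\delta^{-1})$.

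\textbf{Gap: the density ratio is inverted.} Write $Z_t:=\mathbb E_{s_t\sim\pi_{\rm ref}(\cdot\mid s_0,s_{1:t-1})}\hat V(s_0,s_{1:t})$ and $Z:=\mathbb E_{\pi_{\rm ref}}\hat V(s_0,s_{1:T})$. Since $Q(y)=\prod_t\frac{\hat V(s_0,s_{1:t})}{M\bar Z_t}\prod_j\pi_{\rm ref}(s_t^{(j)}\mid s_0,s_{1:t-1})$, one gets
\[
\frac{d\Pi}{dQ}(y)=\frac{\hat V(s_0,s_{1:T})}{Z\,w}=\frac{Z_1}{Z}\prod_{t=1}^{T}\frac{\bar Z_t}{Z_t}\prod_{t=1}^{T-1}\frac{Z_{t+1}(s_{1:t})}{\hat V(s_0,s_{1:t})}.
\]
This is not proportional to $w\,\hat V$, and only this version reproduces the algorithm's ratio $\frac{w^{\rm acc}\hat V(s_{1:T})}{w\,\hat V(s^{\rm acc}_{1:T})}$.

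The weight \emph{increases} with $\bar Z_t$. So the harmful pools are those with atypically \emph{large} normalizers, not small ones. Your one-sided event $\bar Z_t\ge(1-\rho)Z_t$ still lets each factor $\bar Z_t/Z_t$ be as large as $C_{\rm act}$, so it gives no polynomial bound on $W^\ast$.

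You need the two-sided event $|\bar Z_t/Z_t-1|\le\xi$, as in the paper. Then the display and Assumption~\ref{assump:bellman_error_uniform_bound} give $\sup_{\mathsf G}\frac{d\Pi}{dQ}\big/\inf_{\mathsf G}\frac{d\Pi}{dQ}\le\bigl(\frac{1+\xi}{1-\xi}\bigr)^{T}(1+\varepsilon)^{2(T-1)}=O(1)$, which is what the minorization of $K_{\mathsf G}$ needs. The rest of your cost accounting is unaffected.

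\textbf{The bias term.} Here you are more careful than the paper. $\|\Pi-\Pi(\cdot\mid\mathsf G)\|_{\rm TV}=\Pi(\mathsf G^c)$ is a target-side probability, whereas the paper bounds it by the proposal-side $\mathbb P(Y\notin\mathsf G)$ without comment. However, moving from $Q$ to $\Pi$ through a uniform bound on $d\Pi/dQ$ over all pools costs a factor of order $C_{\rm act}^{T}$, not $L^{O(1)}$.

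Bound it directly instead. Under $\Pi$, given the selected path, the $M-1$ unselected slots at step $t$ are i.i.d.\ from $\pi_{\rm ref}(\cdot\mid s_0,s_{1:t-1})$. The selected slot shifts $\bar Z_t/Z_t$ by at most $C_{\rm act}/M$ relative to their average. The same concentration bound plus a union bound over $t$ then gives $\Pi(\mathsf G^c)\lesssim T\delta'$.
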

The proof can be found in Appendix~\ref{Appendix:Proofs_6}.

\textbf{Why this is possible.}
The key point is that the resampling procedure admits an \emph{exact probability computation on an augmented space}:
even though the marginal proposal on tokens is only approximate when $M<\infty$, the algorithm keeps track of the pool-dependent likelihood ratio (via $\bar Z_t$'s), so the MH acceptance ratio is \emph{exact}.
Under Assumptions~\ref{assump:reward_model_ratio_bound}--\ref{assump:bellman_error_uniform_bound}, the acceptance probability is furthermore bounded away from $0$ and $1$ on a good event, which implies a \emph{constant-order Dobrushin contraction coefficient}.
As a result, $H=O(\log(\delta_{\rm TV}^{-1}))$ MH steps suffice.
Finally, since we only need a constant-order contraction, it is enough to estimate each $\bar Z_t$ to relative error $O(1/T)$, which is achieved with $M=O(T^2)$ samples.

\subsection{Relation to multi-particle SMC and Takeaway}
The single-particle guided SMC viewpoint clarifies that chain-based methods and particle methods share the same underlying update: both implement a one-particle SMC step, and differ mainly in \emph{how} additional computation is used to drive the error down.
This perspective provides a convenient bridge between \emph{particle-based} guidance, which reduces error by increasing the number of particles, and \emph{chain-based} guidance, which reduces error via Markov chain mixing.
SMC can achieve arbitrary $\delta_{\rm TV}$ accuracy by scaling the particle count, typically leading to a polynomial dependence on $\delta_{\rm TV}^{-1}$.
By contrast, Metropolis--Hastings corrections target the same limit through \emph{mixing}: the relevant computational knob is the number of MH steps~$H$.
When the acceptance probability is exact and uniformly bounded on a high-probability good event, the resulting kernel contracts geometrically in TV, yielding
$H=O\!\left(\log(\delta_{\rm TV}^{-1})\right)$.
Our resampling-pool MH construction preserves this logarithmic dependence, whereas the rejection-based construction discussed in Appendix~\ref{Appendix:Rejection_MH} generally loses it due to linear accumulation of acceptance-ratio \emph{estimation} errors.


\normalsize
\bibliography{main}






\clearpage
\appendix
\onecolumn

\section*{Table of Contents}
\addcontentsline{toc}{section}{Table of Contents}
\begin{itemize}[label={}, leftmargin=3pt]
\item \textbf{1. Introduction} \dotfill \pageref{sec:intro}
\item \textbf{2. Preliminaries} \dotfill \pageref{sec:prelim}
\item \textbf{3. Problem Setting} \dotfill \pageref{sec:problem_setting}
\item \textbf{4. Lower Bounds With and Without Approximate Reward Model} \dotfill \pageref{sec:LBs}
\begin{itemize}[label={}, leftmargin=14pt]
    \item {Theorem~\ref{thm:LB1_final_style}} \dotfill \pageref{thm:LB1_final_style}
    \item {Corollary~\ref{cor:LB2_final_style}} \dotfill \pageref{cor:LB2_final_style}
    \item {Theorem~\ref{thm:ALS_TV_error}} \dotfill \pageref{thm:ALS_TV_error}
\end{itemize}
\item \textbf{5. Complexity of SMC based Inference Time Scaling} \dotfill \pageref{sec:SMC}
\begin{itemize}[label={}, leftmargin=14pt]
    \item {Theorem~\ref{thm:number_of_particles_local_bias}} \dotfill \pageref{thm:number_of_particles_local_bias}
    \item {Corollary~\ref{cor:TC_SMC_naive}} \dotfill \pageref{cor:TC_SMC_naive}
    \item {Remark~\ref{rem:lb2_vs_ub}} \dotfill \pageref{rem:lb2_vs_ub}
\end{itemize}
\item \textbf{6. Beyond SMC: Chain based approach} \dotfill \pageref{sec:value_based}
\begin{itemize}[label={}, leftmargin=14pt]
    \item {Theorem~\ref{Thm:TC_ALS_Resampling+MH}} \dotfill \pageref{Thm:TC_ALS_Resampling+MH}
\end{itemize}
\item \textbf{\ref{Appendix:notations}. Notations} \dotfill \pageref{Appendix:notations}
\item \textbf{\ref{Appendix:algorithms}. Algorithms} \dotfill \pageref{Appendix:algorithms}
\begin{itemize}[label={}, leftmargin=14pt]
    \item {Theorem~\ref{thm:exact-conditional-accept}} \dotfill \pageref{thm:exact-conditional-accept}
    \item {Remark~\ref{rem:rs_variant} (Remark on the version of rejection sampling and a special requirement)} \dotfill \pageref{rem:rs_variant}
    \item {Freedman's Lemma~\ref{lem:primary_freedman}} \dotfill \pageref{lem:primary_freedman}
    \item {Lemma~\ref{lem:relative_error_concentration_helper} and \textit{proof}} \dotfill \pageref{lem:relative_error_concentration_helper}
    \item {Lemma~\ref{lem:freedman} and \textit{proof}} \dotfill \pageref{lem:freedman}
\end{itemize}
\item \textbf{\ref{Appendix:proof}. Proofs in Section~\ref{sec:LBs}} \dotfill \pageref{Appendix:proof}
\begin{itemize}[label={}, leftmargin=14pt]
    \item \textit{Proof of Theorem~\ref{thm:LB1_final_style}} \dotfill \pageref{proof:LB1_final_style}
    \item \textit{Proof of Corollary~\ref{cor:LB2_final_style}} \dotfill \pageref{proof:LB2_final_style}
    \item {Lemma~\ref{lem:tv_control} and \textit{proof}} \dotfill \pageref{lem:tv_control}
    \item \textit{Proof of Theorem~\ref{thm:ALS_TV_error}} \dotfill \pageref{proof:ALS_TV_error}
\end{itemize}
\item \textbf{\ref{Appendix:optimal_proposal_SMC}. Optimal proposal SMC} \dotfill \pageref{Appendix:optimal_proposal_SMC}
\begin{itemize}[label={}, leftmargin=14pt]
    \item {Theorem~\ref{thm:number_of_particles_local_bias_optimal_proposal} (Particle complexity for optimal proposal SMC)} \dotfill \pageref{thm:number_of_particles_local_bias_optimal_proposal}
    \item {Remark~\ref{rem:adv_disadv_SMC_optimal} (Advantage and disadvantage of optimal proposal)} \dotfill \pageref{rem:adv_disadv_SMC_optimal}
    \item {Remark~\ref{remark:SMC_vs_ALC} (Number of particles: $o(T)$ worse than $1$)} \dotfill \pageref{remark:SMC_vs_ALC}
    \item {Proposition~\ref{prop:TC_SMC_optimal_proposal} (Time complexity for optimal proposal SMC)} \dotfill \pageref{prop:TC_SMC_optimal_proposal}
    \item {Remark~\ref{rem:inefficiency_SMC_optimal} (Inefficiency of optimal proposal SMC)} \dotfill \pageref{rem:inefficiency_SMC_optimal}
\end{itemize}
\item \textbf{\ref{Appendix:Proofs_5}. Proofs in Section~\ref{sec:SMC} and Appendix~\ref{Appendix:optimal_proposal_SMC}} \dotfill \pageref{Appendix:Proofs_5}
\begin{itemize}[label={}, leftmargin=14pt]
    \item {Lemma~\ref{lem:FK_constants_control} and \textit{proof} (First-order stability of Feynman-Kac model)} \dotfill \pageref{lem:FK_constants_control}
    \item {Lemma~\ref{lem:control_constants} and \textit{proof} (Proposal specified FK constants)} \dotfill \pageref{lem:control_constants}
    \item {Lemma~\ref{lem:random_telescoping} and \textit{proof} (Measure telescoping)} \dotfill \pageref{lem:random_telescoping}
    \item {Lemma~\ref{lem:one_step_covariance} and \textit{proof} (One-step local bias and variance)} \dotfill \pageref{lem:one_step_covariance}
    \item {Lemma~\ref{lem:local_remainder_bias} and \textit{proof} (Horizon amplified local variance)} \dotfill \pageref{lem:local_remainder_bias}
    \item {Theorem~\ref{thm:TV_bias_local} and \textit{proof} (SMC TV bias bound)} \dotfill \pageref{thm:TV_bias_local}
    \item \textit{Proof of Theorem~\ref{thm:number_of_particles_local_bias} and Theorem~\ref{thm:number_of_particles_local_bias_optimal_proposal}} \dotfill \pageref{proof:number_of_particles_local_bias}
    \item {Proposition~\ref{prop:number_of_particles_arbitrary_proposal} and \textit{proof} (Particle complexity for arbitrary proposal)} \dotfill \pageref{prop:number_of_particles_arbitrary_proposal}
    \item {Remark~\ref{remark:trade-off_particle_proposal} (Trade-offs in selecting proposal)} \dotfill \pageref{remark:trade-off_particle_proposal}
    \item \textit{Proof of Proposition~\ref{prop:TC_SMC_optimal_proposal}} \dotfill \pageref{proof:TC_SMC_optimal_proposal}
    \item {Proposition~\ref{prop:TV_bias_McKeanKernel} and \textit{proof} (SMC TV bias bound for given McKean kernel)} \dotfill \pageref{prop:TV_bias_McKeanKernel}
    \item {Corollary~\ref{cor:quantile_kernel_example} and \textit{proof} (Example stratified kernel)} \dotfill \pageref{cor:quantile_kernel_example}
    \item {Remark~\ref{rem:prop_to_cor_stratified} (From i.i.d. McKean transitions to stratified resampling)} \dotfill \pageref{rem:prop_to_cor_stratified}
\end{itemize}
\item \textbf{\ref{Appendix:Proofs_6}. Proofs in Section~\ref{sec:value_based}} \dotfill \pageref{Appendix:Proofs_6}
\begin{itemize}[label={}, leftmargin=14pt]
    \item {Lemma~\ref{lem:MH_contraction_res21} and \textit{proof}} \dotfill \pageref{lem:MH_contraction_res21}
    \item \textit{Proof of Theorem~\ref{Thm:TC_ALS_Resampling+MH}} \dotfill \pageref{proof:TC_ALS_Resampling+MH}
\end{itemize}
\item \textbf{\ref{Appendix:Rejection_MH}. Rejection-based SP-gSMC with Metropolis-Hastings correction} \dotfill \pageref{Appendix:Rejection_MH}
\begin{itemize}[label={}, leftmargin=14pt]
    \item {Lemma~\ref{lem:ALS_exact_MH_contraction} and \textit{proof}} \dotfill \pageref{lem:ALS_exact_MH_contraction}
    \item {Theorem~\ref{Thm:TC_ALS+MH} and \textit{proof}} \dotfill \pageref{Thm:TC_ALS+MH}
    \item {Remark~\ref{rem:rej-base_worse} (Comparing Theorem~\ref{Thm:TC_ALS_Resampling+MH} and~\ref{Thm:TC_ALS+MH})} \dotfill \pageref{rem:rej-base_worse}
\end{itemize}
\item \textbf{\ref{Appendix:assumption_Global_bellman_error}. New assumption: Global bellman error} \dotfill \pageref{Appendix:assumption_Global_bellman_error}
\begin{itemize}[label={}, leftmargin=14pt]
    \item {Proposition~\ref{prop:SMC_naive_tvbound_global_bellman_error} and \textit{proof}} \dotfill \pageref{prop:SMC_naive_tvbound_global_bellman_error}
    \item {Proposition~\ref{prop:number_of_particles_arbitrary_proposal_global_bellman_error} and \textit{proof}} \dotfill \pageref{prop:number_of_particles_arbitrary_proposal_global_bellman_error}
\end{itemize}
\end{itemize}
\clearpage

\section{Notations}\label{Appendix:notations}
\begin{center}
\begin{tabularx}{\linewidth}{@{}lX@{}}
\toprule
Symbol & Meaning \\
\midrule
\multicolumn{2}{@{}l}{\textbf{Reasoning / LM setup}}\\
$T$ & Horizon (number of reasoning steps).\\
$[T]$ & The set $\{1,2,\dots,T\}$.\\
$\mathcal S_t$ / $\mathcal S_0$ & Reasoning/token space at time $t$ / Initial prompt space.\\
$\mathcal S_{1:t}$ & Prefix space up to time $t$, recursively $\mathcal S_{1:t}:=\mathcal S_{1:t-1}\times \mathcal S_t$.\\
$s_{1:t}$ / $s_0$ & A prefix (trajectory) in $\mathcal S_{1:t}$ / The initial prompt.\\
$\pi_{\rm ref}(\cdot\mid s_{1:t-1})$ & Reference / proposal policy (e.g., base language model).\\
$\pi_{\rm p}(\cdot\mid s_{1:t-1})$ & Proposal distribution, the sampler draw sample from this distribution. (naive case being $\pi_{\rm p}=\pi_{\rm ref}$)\\
$\phi(s_{1:T})$ & Task-dependent final reward function.\\
$\hat V(s_0,s_{1:t})$ & (Imperfect) reward/value model defined on prefixes. $\hat V(s_0,s_{1:T})=\phi(s_{1:T})$\\
$\tilde\pi(s_{1:T}|s_0)$ & Tilted target distribution $\tilde\pi(s_{1:T}|s_0)\propto \pi_{\mathrm{ref}}(s_{1:T}   |   s_0)\, \phi(s_{1:T})$, $\tilde\pi_T(s_{1:T}|s_0)=\tilde\pi(s_{1:T}|s_0)$.\\
$\tilde\pi_t(s_{1:t}|s_0)$ & Tilted intermediate distribution $\tilde\pi_t(s_{1:t}|s_0)\propto\pi_{\rm ref}(s_{1:t}| s_0)\hat V(s_0,s_{1:t})$, $\tilde\pi_T(s_{1:T}|s_0)=\tilde\pi(s_{1:T}|s_0)$.\\
$\hat\pi_T$ & Sampling distribution of any specified algorithm. For SMC, $\hat\pi_T := \bar\eta_{T+1}^N := \mathbb E[\eta_{T+1}^N]$.\\
$L$ & Reward model ratio bound (Assumption~\ref{assump:reward_model_ratio_bound}).\\
$\varepsilon$ & Bellman error bound (Assumption~\ref{assump:bellman_error_uniform_bound}).\\
$C_{\rm act}$ &
Local one-step ratio constant:
$C_{\rm act}:=\sup_{t\in[T]}\sup_{s_{1:t}\in\mathcal S_{1:t}}
\dfrac{\hat V(s_0,s_{1:t})}{\mathbb E_{s_t\sim\pi_{\rm ref}(\cdot\mid s_{1:t-1})}\hat V(s_0,s_{1:t})}$, $C_{\rm act}\le L(1+\varepsilon)$ under Assumptions~\ref{assump:reward_model_ratio_bound} and~\ref{assump:bellman_error_uniform_bound}.\\
$\delta_{\rm TV}$ & The desired TV error between the sampling distribution $\hat\pi_T$ and the target distribution $\tilde\pi_T$.\\

\midrule
\multicolumn{2}{@{}l}{\textbf{Information-theoretic lower bound}}\\
$B$ & Branching factor (alphabet size) of the abstract token space $[B]$.\\
$U$ & Set of prefixes: $U:=\bigcup_{t=0}^{T-1}[B]^t$.\\
$o(u)[k]$ & The $k$-th oracle next-token draw associated with prefix $u\in U$.\\
$\mathcal O_{B,T}$ & Oracle space $\mathcal O_{B,T}:=[B]^{U\times\mathbb N}$ (equipped with the product $\sigma$-algebra).\\
$\mathbb P_{\pi_{\rm ref}}$ & Product measure on $\mathcal O_{B,T}$ induced by the base model $\pi_{\rm ref}(\cdot\mid u)$.\\
$I$ & Input instance $I=\langle B,\pi_{\rm ref},\hat V\rangle$ (with $\hat V(\varnothing)=1$).\\
$p_I$ & Target leaf distribution $p_I(s_{1:T})\propto \pi_{\rm ref}(s_{1:T})\,\hat V(s_{1:T})$.\\
$\mathcal A$ & Class of (deterministic) algorithms satisfying the \emph{no-guess} constraint.\\
$q_{A,I}$ & Output distribution of a randomized algorithm $A$ on input $I$.\\

\midrule
\multicolumn{2}{@{}l}{\textbf{Feynman--Kac model}}\\
$E_t$ & FK state space at time $t$; in our language model setting $E_t:=\mathcal S_{1:t}$.\\
$\mathcal P(E)$ & Probability measures on a measurable space $E$.\\
$M_t(x,dy)$ / $M_t(f)(x)$ & Markov kernel from $E_{t-1}$ to $E_t$ (proposal dynamics), where $M_t(f)(x)=\int f(y)\,M_t(x,dy)$.\\
$G_t:E_t\to[0,\infty)$ & Potential / incremental weight function (often derived from $\hat V$).\\
$Q_t$ & Unnormalized FK operator: $Q_t(f)(x):=G_{t-1}(x)\,M_t(f)(x)$.\\
$Q_{p,n}$ & Multi-step unnormalized semigroup: $Q_{p,n}:=Q_{p+1}\cdots Q_n$.\\
$\eta_t$ & Limiting FK flow (target) distribution on $E_t$.\\
$\Phi_t$ & Normalized FK map: $\eta_t = \Phi_t(\eta_{t-1})$; more generally
$\Phi_t(\mu)(f)=\dfrac{\mu(Q_t(f))}{\mu(Q_t(1))}$.\\
$\Phi_{p,n}$ &
Normalized semigroup:
$\Phi_{p,n}(\mu)(f):=\dfrac{\mu(Q_{p,n}(f))}{\mu(Q_{p,n}(1))}$.
(So $\Phi_{p,n}=\Phi_n\circ\cdots\circ\Phi_{p+1}$.)\\

\midrule
\multicolumn{2}{@{}l}{\textbf{Particle system (SMC)}}\\
$N$ & Number of particles.\\
$\xi_t^{(N,i)}$ &  The $i$-th within the collection of $N$ particles at time $t$.\\
$\eta_t^N$ & Empirical measure of particles at time $t$ (random approximation of $\eta_t$).\\
$\bar\eta_t^N$ & Mean measure $\bar\eta_t^N := \mathbb E[\eta_t^N]$.\\
$\mathcal F_{t}^N$ & Natural filtration generated by the particle system up to time $t$.\\
$\nu_t^N$ & One-step predictive measure $\nu_t^N:=\Phi_t(\eta_{t-1}^N)$.\\

\midrule
\multicolumn{2}{@{}l}{\textbf{Mixing / stability coefficients and oscillation norms}}\\
$\mathrm{osc}(f)$ & Oscillation: $\sup f-\inf f$.\\
$\mathrm{Osc}_1(E)$ & Unit oscillation class: $\{f:E\to\mathbb R\,:\,\mathrm{osc}(f)\le 1\}$.\\
$\beta(K)$ & Dobrushin coefficient of an integral operator/kernel $K$ (used to control contraction).\\
\bottomrule
\end{tabularx}

\end{center}

\begin{center}
\begin{tabularx}{\linewidth}{@{}p{0.173\linewidth}X@{}}
\toprule
Symbol & Meaning \\

\midrule
\multicolumn{2}{@{}l}{\textbf{FK first-order expansion (used in Appendix)}}\\
$D_\eta\Phi$ & First-order integral operator at base point $\eta$, with $(D_\eta\Phi)(1)=0$.\\
$\beta(D\Phi)$ & Uniform Dobrushin bound: $\beta(D\Phi):=\sup_{\eta}\beta(D_\eta\Phi)<\infty$.\\
$ R^\Phi(\mu,\eta)$ & Second-order remainder signed measure in the above expansion.\\

\midrule
\multicolumn{2}{@{}l}{\textbf{FK semigroup-specific auxiliaries}}\\
$P_{p,n}$ &
Markov kernel induced by $Q_{p,n}$:
$P_{p,n}(f):=\dfrac{Q_{p,n}(f)}{Q_{p,n}(1)}$.\\
$G_{p,n,\eta}$ &
Normalized potential:
$G_{p,n,\eta}:=\dfrac{Q_{p,n}(1)}{\eta(Q_{p,n}(1))}$.\\
$q_{p,n}$ &
Ratio constant:
$q_{p,n}:=\sup_{x,y\in E_p}\dfrac{Q_{p,n}(1)(x)}{Q_{p,n}(1)(y)}\in[1,\infty]$.\\

\bottomrule
\end{tabularx}

\end{center}

\clearpage

\section{Algorithms}\label{Appendix:algorithms}

\subsection{Rejection Sampling}\label{subsec:rejection_sampling}
We use rejection sampling as a basic primitive for drawing \emph{(conditionally) exact} samples from \emph{tilted} distributions
\[
\nu(\mathrm dz)\ \propto\ g(z)\,\mu(\mathrm dz),
\]
where $\mu$ is a tractable base law (e.g., $\pi_{\rm ref}(\cdot| s_0,s_{1:t})$) and $g$ is a nonnegative score
(e.g., a reward). Conceptually, rejection sampling turns \emph{unnormalized} weights $g$ into \emph{exact sampling}
from $\nu$ using only proposals from $\mu$ and accept--reject decisions.

\begin{algorithm}[ht]
\caption{Rejection Sampling~\cite{rohatgi2025taming}}
\label{alg:rejection-sampling_compact}
\begin{algorithmic}[1]
\STATE \textbf{Input:} function $g:\mathcal Z\to\mathbb R_{\ge 0}$, base distribution $\mu$, threshold $M>0$, failure probability $\delta$
\STATE \textbf{Output:} sample $\hat z\in\mathcal Z$

\STATE Set $N \leftarrow 4M\log(4/\delta)$
\STATE Sample $z_1,\dots,z_N \sim \mu$ i.i.d.
\STATE Compute $\widehat Z \leftarrow \frac1N\sum_{i=1}^N g(z_i)$

\FOR{$i=1$ to $N$}
  \STATE Sample $z \sim \mu$
  \STATE Sample $\xi \sim \mathrm{Bernoulli}\!\left(\min\!\left\{\frac{g(z)}{M\widehat Z},\,1\right\}\right)$
  \IF{$\xi = 1$}
    \STATE \textbf{return} $z$
  \ENDIF
\ENDFOR

\STATE Sample $z \sim \mu$
\STATE \textbf{return} $z$ \COMMENT{failure case}
\end{algorithmic}
\end{algorithm}

The key guarantee is that, although the algorithm uses an
empirical normalizer $\widehat Z$, its output is \emph{exact on a high-probability event}. This is characterized by the following theorem.

\begin{theorem}[Theorem E.1 in~\cite{foster2025good}]
\label{thm:exact-conditional-accept}
Let $g:\mathcal Z\to[0,\infty)$ and $\mu\in\Delta(\mathcal Z)$, and define
$\nu(\mathrm dz):=\frac{g(z)}{Z}\mu(\mathrm dz)$ with $Z:=\mathbb E_{z\sim\mu}[g(z)]\in(0,\infty)$.
Let $C_\infty:=\bigl\|\frac{g}{Z}\bigr\|_\infty$, fix $\delta\in(0,1)$, and assume $M\ge 4C_\infty$.
Run Algorithm~\ref{alg:rejection-sampling_compact} with $(g,\mu,M,\delta)$ and let $\hat z$ be its output.
Then there exists an event $\mathcal E_{\rm accept}$ with $\mathbb P(\mathcal E_{\rm accept})\ge 1-\delta$ such that
$\mathbb P(\hat z\in\cdot\mid \mathcal E_{\rm accept})=\nu(\cdot)$.
\end{theorem}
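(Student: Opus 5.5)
We prove the statement (Theorem E.1 of \cite{foster2025good}) in three steps. First we condition on the preliminary normalizer $\widehat Z$. Then we use the standard fact that a rejection-sampling proposal, conditioned on acceptance, has law exactly $\nu$ whenever the acceptance ratio never clips at $1$. Finally, the good event is the intersection of ``$\widehat Z$ is not too small'' with ``some proposal in the loop is accepted''.

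\textbf{Step 1 (concentration of $\widehat Z$).} Fix $N=4M\log(4/\delta)$. The values $g(z_i)/Z$ are i.i.d., lie in $[0,C_\infty]$, and have mean $1$. A multiplicative Chernoff bound, or Freedman/Bernstein with variance at most $C_\infty$, gives
\[
\mathbb P\bigl(\widehat Z< Z/4\bigr)\le \exp\!\bigl(-c\,N/C_\infty\bigr)\le \delta/4 .
\]
The last inequality uses $N/C_\infty\ge 16\log(4/\delta)$, since $M\ge 4C_\infty$. Let $\mathcal E_1:=\{\widehat Z\ge Z/4\}$. On $\mathcal E_1$,
\[
\frac{g(z)}{M\widehat Z}\le \frac{4g(z)}{MZ}\le \frac{4C_\infty}{M}\le 1
\]
for every $z$, so the $\min\{\cdot,1\}$ is never active.

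\textbf{Step 2 (exactness given acceptance).} Condition on $\widehat Z$ with $\mathcal E_1$ holding. The loop proposals are independent of $\widehat Z$, so each iteration accepts with probability $p:=Z/(M\widehat Z)$. Conditional on acceptance, the accepted point has density $g(z)\mu(dz)/(M\widehat Z\,p)=\nu(dz)$. Let $\mathcal E_2$ be the event that some iteration among the $N$ accepts. Conditional on $\mathcal E_2$ (and on $\widehat Z$), the output is the first accepted proposal, whose law is $\nu$ independently of its index. Averaging over $\widehat Z\in\mathcal E_1$, we get $\mathbb P(\hat z\in\cdot\mid \mathcal E_{\rm accept})=\nu$ for $\mathcal E_{\rm accept}:=\mathcal E_1\cap\mathcal E_2$.

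\textbf{Step 3 (probability of the good event).} This is the main obstacle: we also need an upper bound on $\widehat Z$ to keep $p$ bounded below. Let $\mathcal E_1':=\{\widehat Z\le 2Z\}$. The same Chernoff/Bernstein argument (upper tail, variance at most $C_\infty$) gives $\mathbb P(\mathcal E_1'^c)\le\delta/4$. On $\mathcal E_1'$ we have $p\ge 1/(2M)$, so
\[
\mathbb P(\mathcal E_2^c\mid\widehat Z)\le (1-1/(2M))^{N}\le e^{-2\log(4/\delta)}\le \delta/4 .
\]
Conditioning on $\widehat Z\in\mathcal E_1'$ does not break the exactness argument of Step 2, since Step 2 conditions on the value of $\widehat Z$ anyway. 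We therefore redefine $\mathcal E_{\rm accept}:=\mathcal E_1\cap\mathcal E_1'\cap\mathcal E_2$. Exactness still holds, and
\[
\mathbb P(\mathcal E_{\rm accept}^c)\le 3\delta/4\le\delta .
\]
The only delicate points are the constants in the two-sided concentration step, which the choice $N=4M\log(4/\delta)$ and $M\ge 4C_\infty$ are designed to absorb. If the constants turn out to be tight, the fix is to loosen the thresholds $1/4$ and $2$.
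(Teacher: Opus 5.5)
Your proof is correct. The paper itself gives no proof of this statement; it is quoted verbatim from \cite{foster2025good}, so there is no in-paper argument to compare against.

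Your route is the standard one for truncated, self-normalized rejection sampling:
\begin{itemize}
\item a two-sided constant-factor bound on $\widehat Z$, whose lower side removes the clipping in $\min\{g/(M\widehat Z),1\}$ and whose upper side keeps the per-trial acceptance probability of order $1/M$;
\item exactness of the first accepted proposal conditional on $\widehat Z$;
\item averaging over the $\sigma(\widehat Z)$-measurable good event, which is legitimate because $\mathbb P(\hat z\in A,\mathcal E_2\mid\widehat Z)=\nu(A)\,\mathbb P(\mathcal E_2\mid\widehat Z)$ there.
\end{itemize}

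Your closing hedge about tight constants is unnecessary, because there is ample slack. Since $N/C_\infty\ge 16\log(4/\delta)$, multiplicative Chernoff (applied to $g(z_i)/(ZC_\infty)\in[0,1]$) gives $\mathbb P(\widehat Z<Z/4)\le e^{-\frac{9}{32}N/C_\infty}\le(\delta/4)^{9/2}$ and $\mathbb P(\widehat Z>2Z)\le e^{-N/(3C_\infty)}\le(\delta/4)^{16/3}$. The no-acceptance probability is at most $e^{-N/(2M)}=(\delta/4)^2$, so each of the three failure terms is comfortably below $\delta/4$.

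One small tidy-up: the formula $p=Z/(M\widehat Z)$ you use in Step 3 is only valid on $\mathcal E_1$. Either bound $\mathbb P(\mathcal E_1\cap\mathcal E_1'\cap\mathcal E_2^c)$ in the union bound, or note that on $\mathcal E_1'$ monotonicity of $u\mapsto\min\{u,1\}$ already gives acceptance probability at least $\mathbb E_\mu[g/(2MZ)]=1/(2M)$. If you prefer to stay within the paper's own tools, Lemma~\ref{lem:relative_error_concentration_helper} with $\epsilon=1/2$, $R=C_\infty Z$ and confidence level $\delta/2$ gives the same two-sided constant-factor control of $\widehat Z$.
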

\begin{remark}\label{rem:rs_variant}
Algorithm~\ref{alg:rejection-sampling_compact} is a \emph{truncated, self-normalized} variant of textbook rejection sampling:
it replaces the unknown normalizer $Z=\mathbb E_\mu[g]$ by an empirical estimate $\widehat Z$, and runs for at most $N$
accept--reject trials (falling back to $z\sim\mu$ upon failure). Despite these modifications, the output is \emph{exact on a
high-probability event} (Theorem~\ref{thm:exact-conditional-accept}): conditioned on $\mathcal E_{\rm accept}$, the returned sample has law $\nu$. Notably, Algorithm~\ref{alg:rejection-sampling_compact} has a hyperparameter $M$, which is required to be greater equal to $4C_{\infty}$. Otherwise the rejection sampling fails mathematically. Therefore, approximate knowledge of $C_\infty$ must be known to preform this algorithm. \textbf{If} such $C_\infty$ is known precisely, then the \textbf{ideal} overall runtime is $O(C_\infty\log(\delta^{-1}))$.

Suppose an approximate guess of the upper bound of $C_\infty$ is available, denoted as $C_\infty^\star$. Then preforming Algorithm~\ref{alg:rejection-sampling_compact} for the purpose of sampling takes a total runtime of $O(C_\infty^\star\log(\delta^{-1}))$, where one conservatively set the hyperparameter $M=4C_\infty^\star$. A wrong guess may yield $C_\infty^\star\gg C_\infty$, resulting in a waste of computation. 


When an algorithm additionally requires estimating the normalizing constant to a small relative error $\epsilon$,
we may couple this estimation with Algorithm~\ref{alg:rejection-sampling_compact} by using the same budget $N=\Theta(M\log(\delta^{-1}))$
both to form $\widehat Z$ and to run truncated accept--reject trials. In this regime, the dominant cost is the normalization
estimation, which typically requires $N=\tilde O(C_\infty \epsilon^{-2})$. Equivalently, this corresponds to choosing
$M=\tilde O(C_\infty \epsilon^{-2})$, which (for $\epsilon$ sufficiently small, e.g. in Algorithm~\ref{alg:smc_optimal_Vs0_1_RSinit} or Algorithm~\ref{alg:SMC_MH_at_End}) automatically implies $M\ge 4C_\infty$,
so the envelope requirement for conditional exactness does not impose an additional constraint. In particular, one need not
provide a separate prior upper bound $C_\infty^\star$ on $C_\infty$ to ensure that the rejection sampler is well-defined in the target-accuracy regime. Thus, the aforementioned $C_\infty^\star\gg C_\infty$ issue is no longer problematic.

\end{remark}



\subsection{Empirical expectation}

\begin{algorithm}[ht]
\caption{Empirical expectation}
\label{alg:empirical_expectation}
\begin{algorithmic}[1]
\STATE \textbf{Input:} function $g:\mathcal S_{1:t}\to\mathbb R_{\geq0}$, pre-trained moel $\pi_{\rm ref}$, current state trajectory $s_{1:t}$, MC total step $N_m$.
\STATE Sample $N_m$ individual samples $s_t^{[i]}, i\in[N_m]$ from $\pi_{\rm ref}(\cdot|s_0,s_{1:t-1})$.
\STATE \textbf{return $\dfrac{1}{N_m}\sum_{i=1}^{N_m} g(s_t^{[i]})$}
\end{algorithmic}
\end{algorithm}
\begin{lemma}[Freedman's lemma, e.g.~\cite{agarwal2014taming}]\label{lem:primary_freedman}
Let $(X_t)_{t\leq T}$ be a sequence of real-valued martingale difference sequence adapted to a filtration $(\mathscr F_t)_{t\leq T}$. If $|X_t|\leq R$ almost surely, then for any $\eta\in (0,1/R)$, with probability at least $1-\delta$,
\begin{align*}
\sum_{t=1}^TX_t\leq \eta\sum_{t=1}^T\mathbb E_{t-1}[X_t^2]+\frac{\log(\delta^{-1})}{\eta}.
\end{align*}
where $\mathbb E_{t}[\cdot]:=\mathbb E[\cdot|\mathscr F_t]$.
\end{lemma}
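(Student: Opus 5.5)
The plan is to derive the stated inequality from a Chernoff/exponential-supermartingale argument. This is the standard route to Freedman-type bounds.

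Fix $\eta\in(0,1/R)$. Define
\[
Z_t:=\exp\Bigl(\eta\sum_{s\le t}X_s-\eta^2\sum_{s\le t}\mathbb E_{s-1}[X_s^2]\Bigr),\qquad Z_0=1.
\]
The goal is to show that $(Z_t)$ is a nonnegative supermartingale with respect to $(\mathscr F_t)$. Once this holds, $\mathbb E[Z_T]\le 1$. Markov's inequality then gives $\mathbb P(Z_T\ge \delta^{-1})\le\delta$, that is, with probability at least $1-\delta$,
\[
\eta\sum_t X_t-\eta^2\sum_t\mathbb E_{t-1}[X_t^2]<\log(\delta^{-1}).
\]
Dividing by $\eta>0$ yields exactly the claimed inequality.

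The key step is the one-step estimate $\mathbb E_{t-1}[e^{\eta X_t}]\le \exp(\eta^2\mathbb E_{t-1}[X_t^2])$. I would prove it from the elementary inequality $e^x\le 1+x+x^2$, valid for $x\le 1$. Since $|X_t|\le R$ and $\eta<1/R$, we have $\eta X_t\le 1$, so the inequality applies. Taking conditional expectations and using the martingale difference property $\mathbb E_{t-1}[X_t]=0$ gives
\[
\mathbb E_{t-1}[e^{\eta X_t}]\le 1+\eta^2\mathbb E_{t-1}[X_t^2]\le \exp\bigl(\eta^2\mathbb E_{t-1}[X_t^2]\bigr).
\]
Because $\mathbb E_{t-1}[X_t^2]$ is $\mathscr F_{t-1}$-measurable, this gives $\mathbb E_{t-1}[Z_t]\le Z_{t-1}$. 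Iterating with the tower property yields $\mathbb E[Z_T]\le 1$.

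The main obstacle is the elementary inequality $e^x\le 1+x+x^2$ on $x\le 1$. For $x\le 0$ it follows from the alternating Taylor remainder, which gives $e^x\le 1+x+x^2/2$. For $0\le x\le 1$ it follows from
\[
e^x=1+x+x^2\sum_{k\ge2}\frac{x^{k-2}}{k!}\le 1+x+x^2(e-2)\le 1+x+x^2.
\]
No other step is more than routine bookkeeping with the filtration.
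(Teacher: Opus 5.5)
Your proof is correct. The exponential supermartingale $Z_t$, the one-step bound via $e^x\le 1+x+x^2$ on $x\le 1$ (which applies since $\eta X_t\le \eta R<1$), and Markov's inequality applied to $Z_T$ give exactly the stated inequality. This is the standard argument behind the Freedman-style lemma in the cited source; the paper quotes it without proof. As a side remark, your argument only uses the one-sided bound $X_t\le R$, and keeping the constant $e-2$ from your Taylor estimate gives the slightly sharper form $\sum_t X_t\le (e-2)\eta\sum_t\mathbb E_{t-1}[X_t^2]+\log(\delta^{-1})/\eta$ that appears in the reference.
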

\begin{lemma}\label{lem:relative_error_concentration_helper}
Let $(X_t)_{t\le T}$ be an adapted sequence with
$0 \le X_t \le R$ almost surely.
Write
\[
\mu_t := \mathbb E_{t-1}[X_t],
\qquad
D_t := X_t - \mu_t.
\]
Then for any $\epsilon\in(0,1)$ and $\delta\in(0,1)$, w.p. $\geq 1-\delta$,
\begin{align}
\sum_{t=1}^T X_t
&\le (1+\epsilon)\sum_{t=1}^T \mu_t
   + \frac{R}{\epsilon}\,\log\frac{4}{\delta}, \label{eq:A3-upper-eps}
\\
\sum_{t=1}^T \mu_t
&\le \frac{1}{1-\epsilon}\,\sum_{t=1}^T X_t
   + \frac{R}{\epsilon(1-\epsilon)}\,\log\frac{4}{\delta}.
   \label{eq:A3-lower-eps}
\end{align}
In particular, when $\epsilon=1/2$ and we loosen constants, this recovers
the original form with coefficients $3/2$ and $2$ up to multiplicative
constants.
\end{lemma}
\begin{proof}
Define
\[
\mu_t := \mathbb E_{t-1}[X_t],\qquad
D_t := X_t - \mu_t.
\]
Then $(D_t)$ is a martingale difference sequence with respect to
$(\mathcal F_t)$, and since $0\le X_t\le R$ we have $|D_t|\le R$.
Moreover
\[
\mathbb E_{t-1}[D_t^2]
= \text{Var}(X_t \mid \mathcal F_{t-1})
\le \mathbb E_{t-1}[X_t^2]
\le R \mathbb E_{t-1}[X_t]
= R\mu_t.
\tag{$\ast$}
\]

Fix $\epsilon\in(0,1)$ and $\delta\in(0,1)$. We will apply Lemma~A.2 to
both $(D_t)$ and $(-D_t)$ with suitable parameters and then take a union bound.

\medskip
\noindent\textbf{Step 1: Upper bound on $\sum X_t$.}
Apply Lemma~A.2 to $(D_t)$ with
\[
\eta = \frac{\epsilon}{R},\qquad \delta' = \frac{\delta}{2}.
\]
Note that $\eta\in(0,1/R)$ since $\epsilon\in(0,1)$.
Lemma~A.2 then yields, with probability at least $1-\delta/2$,
\[
\sum_{t=1}^T D_t
\;\le\;
\eta \sum_{t=1}^T \mathbb E_{t-1}[D_t^2]
+ \frac{\log((\delta/2)^{-1})}{\eta}.
\]
Using $(\ast)$ and the choice of $\eta$,
\[
\eta \sum_{t=1}^T \mathbb E_{t-1}[D_t^2]
\;\le\;
\eta R \sum_{t=1}^T \mu_t
= \epsilon \sum_{t=1}^T \mu_t.
\]
Also
\[
\frac{\log((\delta/2)^{-1})}{\eta}
= \frac{R}{\epsilon}\,\log\frac{2}{\delta}
\le \frac{R}{\epsilon}\,\log\frac{4}{\delta}.
\]
Hence on an event $\mathcal E_1$ with $\Pr(\mathcal E_1)\ge 1-\delta/2$,
\[
\sum_{t=1}^T D_t
\le \epsilon \sum_{t=1}^T \mu_t
   + \frac{R}{\epsilon}\,\log\frac{4}{\delta}.
\]
Since $\sum X_t = \sum \mu_t + \sum D_t$, we obtain
\[
\sum_{t=1}^T X_t
\le (1+\epsilon) \sum_{t=1}^T \mu_t
   + \frac{R}{\epsilon}\,\log\frac{4}{\delta},
\]
which is exactly \eqref{eq:A3-upper-eps}.

\medskip
\noindent\textbf{Step 2: Control of $\sum \mu_t$ in terms of $\sum X_t$.}
Now apply Lemma~A.2 to the martingale difference sequence
\[
Y_t := -D_t = \mu_t - X_t,
\]
again with the same $\eta=\epsilon/R$ and $\delta'=\delta/2$.
We have $|Y_t|\le R$ and
\[
\mathbb E_{t-1}[Y_t^2] = \mathbb E_{t-1}[D_t^2]\le R\mu_t
\]
as before, so Lemma~A.2 implies that on an event $\mathcal E_2$
with $\Pr(\mathcal E_2)\ge 1-\delta/2$,
\[
\sum_{t=1}^T Y_t
\le \eta \sum_{t=1}^T \mathbb E_{t-1}[Y_t^2]
   + \frac{\log((\delta/2)^{-1})}{\eta}
\le \epsilon \sum_{t=1}^T \mu_t
   + \frac{R}{\epsilon}\,\log\frac{4}{\delta}.
\]
Since $\sum Y_t = \sum \mu_t - \sum X_t$, this becomes
\[
\sum_{t=1}^T \mu_t - \sum_{t=1}^T X_t
\le \epsilon \sum_{t=1}^T \mu_t
   + \frac{R}{\epsilon}\,\log\frac{4}{\delta},
\]
i.e.
\[
(1-\epsilon)\sum_{t=1}^T \mu_t
\le \sum_{t=1}^T X_t
   + \frac{R}{\epsilon}\,\log\frac{4}{\delta}.
\]
Dividing by $1-\epsilon$ yields
\[
\sum_{t=1}^T \mu_t
\le \frac{1}{1-\epsilon}\,\sum_{t=1}^T X_t
   + \frac{R}{\epsilon(1-\epsilon)}\,\log\frac{4}{\delta},
\]
which is \eqref{eq:A3-lower-eps}.

\medskip
\noindent\textbf{Step 3: Union bound.}
Finally, define $\mathcal E := \mathcal E_1\cap\mathcal E_2$.
By a union bound,
\[
\Pr(\mathcal E)
\ge 1 - \Pr(\mathcal E_1^c) - \Pr(\mathcal E_2^c)
\ge 1 - \frac{\delta}{2} - \frac{\delta}{2}
= 1-\delta.
\]
On $\mathcal E$, both \eqref{eq:A3-upper-eps} and \eqref{eq:A3-lower-eps} hold,
so the lemma follows.
\end{proof}

\begin{lemma}\label{lem:freedman}
Suppose $N_m\geq 8C_{\rm act}\log\dfrac{4}{\delta}$.
With probability greater than $1-\delta$, the relative error of estimating expectation in Algorithm~\ref{alg:empirical_expectation} with $g:=\hat V$ is
\begin{align*}
1-2\sqrt 2\sqrt{\dfrac{C_{\rm act}\log\frac{4}{\delta}}{N_m}}\leq \frac{\tilde V^{N_m}}{\mathbb E_{s_t\sim \pi_{\rm ref}(\cdot|s_{1:t-1},s_0)}[\hat V(s_0,s_{1:t})]}\leq 1+2\sqrt 2\sqrt{\dfrac{C_{\rm act}\log\frac{4}{\delta}}{N_m}},
\end{align*}
where $\tilde V^{N_m}:=\dfrac{1}{N_m}\sum_{i=1}^{N_m} \hat V(s_0,(s_{1:t-1},s_t^{[i]}))$ and $C_{\rm act}:=\sup_{t\in[T]}\sup_{s_{1:t}\in\mathcal S_{1:t}}
\dfrac{\hat V(s_0,s_{1:t})}{\mathbb E_{s_t\sim\pi_{\rm ref}(\cdot\mid s_{1:t-1})}\hat V(s_0,s_{1:t})}$.
\end{lemma}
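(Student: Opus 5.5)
The plan is to apply Lemma~\ref{lem:relative_error_concentration_helper} to a normalized version of the i.i.d.\ samples. Fix the prefix $s_{1:t-1}$ and write $Z:=\mathbb E_{s_t\sim\pi_{\rm ref}(\cdot\mid s_{1:t-1},s_0)}[\hat V(s_0,s_{1:t})]$. Define
\[
X_i:=\hat V\bigl(s_0,(s_{1:t-1},s_t^{[i]})\bigr)/Z,\qquad i\in[N_m].
\]
These are i.i.d., nonnegative, have mean $\mu_i=1$, and by the definition of $C_{\rm act}$ satisfy $0\le X_i\le C_{\rm act}=:R$. With the trivial filtration generated by the samples, $\mathbb E_{i-1}[X_i]=1$. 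The lemma then applies with $\sum_i\mu_i=N_m$, and in particular the variance bound $\mathbb E[X_i^2]\le R$ holds.

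Next I invoke both inequalities of Lemma~\ref{lem:relative_error_concentration_helper} on a common event of probability at least $1-\delta$. Dividing by $N_m$ gives
\[
\tfrac{1}{N_m}\sum_i X_i\le 1+\epsilon+\tfrac{R\log(4/\delta)}{\epsilon N_m},
\qquad
1\le \tfrac{1}{1-\epsilon}\,\tfrac{1}{N_m}\sum_i X_i+\tfrac{R\log(4/\delta)}{\epsilon(1-\epsilon)N_m}.
\]
The second inequality rearranges to $\tilde V^{N_m}/Z\ge 1-\epsilon-\frac{R\log(4/\delta)}{\epsilon N_m}$.

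Now I optimize over $\epsilon$. Write $a:=R\log(4/\delta)/N_m$. Choosing $\epsilon=\sqrt{a}$ makes the deviation in both directions at most $2\sqrt a$. The hypothesis $N_m\ge 8C_{\rm act}\log(4/\delta)$ gives $a\le 1/8$, so $\epsilon=\sqrt a<1$ is admissible. The resulting bound $2\sqrt a$ is already stronger than the claimed $2\sqrt2\sqrt a$. The slack could alternatively be spent by taking $\epsilon=\sqrt{2a}$, which still gives deviation at most $(\sqrt2+1/\sqrt2)\sqrt a\le 2\sqrt2\sqrt a$ and keeps $\epsilon\le 1/2$. Either choice yields the two-sided relative error in the statement.

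The only delicate point is the bookkeeping needed to stay inside the lemma's range $\eta=\epsilon/R<1/R$. That amounts to verifying $\epsilon<1$ under the stated sample-size condition, which is exactly what the constant $8$ guarantees.
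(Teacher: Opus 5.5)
Your proposal is correct and follows the paper's proof. Like the paper, you apply Lemma~\ref{lem:relative_error_concentration_helper} to the $N_m$ i.i.d.\ reward evaluations, convert the range bound into a relative one via $C_{\rm act}$, and then balance $\epsilon$; the statement's constants $8$ and $2\sqrt2$ match the choice $\epsilon=\sqrt{2a}\le 1/2$ in the paper's looser form. The differences are cosmetic: you normalize by $Z$ up front so that $R=C_{\rm act}$, and you rearrange the lower inequality as $1-\epsilon\le \tfrac{1}{N_m}\sum_i X_i + a/\epsilon$ instead of keeping the paper's $a/(\epsilon(1-\epsilon))$ term, which is why $\epsilon=\sqrt a$ already gives the sharper constant $2$.
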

\begin{proof}[Proof of Lemma~\ref{lem:freedman}]
Applying Lemma~\ref{lem:relative_error_concentration_helper} for any fixed prefix $s_{1:t-1}$, we get with probability at least $1-\delta$, 
\begin{align*}
&\tilde V^{N_m}
\le (1+\epsilon)\mathbb E[\hat V(s_0,s_{1:t})]
   + \frac{\sup_{s_{t}} \hat V(s_0,s_{1:t})}{N_m\epsilon}\,\log\frac{4}{\delta},
\\
&(1-\epsilon)\mathbb E[\hat V(s_0,s_{1:t})]
\le \tilde V^{N_m}
   + \frac{\sup_{s_{t}} \hat V(s_0,s_{1:t})}{N_m\epsilon(1-\epsilon)}\,\log\frac{4}{\delta}.
\end{align*}
Thus the relative error satisfies:
\begin{align*}
1-\epsilon-\frac{C_{\rm act}}{N_m\epsilon(1-\epsilon)}\,\log\frac{4}{\delta}\leq\frac{\tilde V^{N_m}}{\mathbb E[\hat V(s_0,s_{1:t})]}\leq 1+\varepsilon+\frac{C_{\rm act}}{N_m\epsilon}\,\log\frac{4}{\delta},
\end{align*}
and balancing the error we have for any $N_m\geq 8C_{\rm act}\log\dfrac{4}{\delta}$
\begin{align*}
1-2\sqrt 2\sqrt{\dfrac{C_{\rm act}\log\frac{4}{\delta}}{N_m}}\leq \frac{\tilde V^{N_m}}{\mathbb E_{s_t\sim \pi_{\rm ref}(\cdot|s_{1:t-1},s_0)}[\hat V(s_0,s_{1:t})]}\leq 1+2\sqrt 2\sqrt{\dfrac{C_{\rm act}\log\frac{4}{\delta}}{N_m}}
\end{align*}
\end{proof}

\begin{algorithm}[t]
\caption{Naive-proposal SMC (with $V(s_0):=1$, start from $\eta_1$, terminal $t=T{+}1$ with $M_{T+1}=\mathrm{Id}$)}
\label{alg:smc_naive_Vs0_1}
\begin{algorithmic}[1]
\STATE \textbf{Input:} prompt $s_0$, horizon $T$, \#particles $N$, base model $\pi_{\rm ref}$, score $V$.
\STATE \textbf{Convention:} $V(s_0):=1$ and $V(S_{1:0}):=V(s_0)=1$.
\STATE \textbf{Init ($t=1$):} sample $S_1^i \sim \eta_1(\cdot)=\pi_{\rm ref}(\cdot\mid s_0)$ i.i.d. for $i=1,\dots,N$.
\FOR{$t=2,3,\dots,T$}
  \STATE \textbf{Weights:} $w_{t-1}^j \gets \dfrac{V(S_{1:t-1}^j)}{V(S_{1:t-2}^j)}$,\quad
  $\bar w_{t-1}^j \gets w_{t-1}^j/\sum_{\ell=1}^N w_{t-1}^\ell$.
  \STATE \textbf{Resample:} $A_{t-1}^i \sim \mathrm{Cat}(\bar w_{t-1}^{1:N})$ i.i.d. for $i=1,\dots,N$.
  \STATE \textbf{Propagate (naive proposal):} sample $S_t^i \sim \pi_{\rm ref}(\cdot\mid s_0,S_{1:t-1}^{A_{t-1}^i})$,
        set $S_{1:t}^i \gets (S_{1:t-1}^{A_{t-1}^i},S_t^i)$.
\ENDFOR
\STATE \textbf{Terminal ($t=T{+}1$, $M_{T+1}=\mathrm{Id}$):}
       $w_T^j \gets \dfrac{V(S_{1:T}^j)}{V(S_{1:T-1}^j)}$,\ 
       $\bar w_T^j \gets w_T^j/\sum_{\ell=1}^N w_T^\ell$.
\STATE \textbf{Resample:} $A_T^i \sim \mathrm{Cat}(\bar w_T^{1:N})$ i.i.d. for $i=1,\dots,N$.
\STATE \textbf{Identity propagate:} set $S_{1:T}^i \gets S_{1:T}^{A_T^i}$ for all $i=1,\dots,N$.
\STATE Draw $I\sim \mathrm{Unif}(\{1,\dots,N\})$ and \textbf{Output} $S_{1:T}^I$.
\end{algorithmic}
\end{algorithm}

\begin{algorithm}[t]
\caption{Optimal-proposal SMC (RS proposal + MC reweight; terminal $t=T{+}1$ with $M_{T+1}=\mathrm{Id}$)}
\label{alg:smc_optimal_Vs0_1_RSinit}
\begin{algorithmic}[1]
\STATE \textbf{Input:} prompt $s_0$, horizon $T$, \#particles $N$, MC budget $N_m$,
base model $\pi_{\rm ref}$, score $V$, RS threshold $M$, RS failure prob. $\delta$.
\STATE \textbf{Convention:} $V(s_0):=1$ and $V(S_{1:0}):=1$.
\STATE \textbf{Init ($t=1$) via RS:} for each $i=1,\dots,N$, set
\[
S_1^i \leftarrow \textsc{RejectionSampling}\Big(
g_1(\cdot)=V(\cdot),\ \mu=\pi_{\rm ref}(\cdot\mid s_0),\ M,\ \delta/O(NT)
\Big),
\]
where \textsc{RejectionSampling} is Algorithm~\ref{alg:rejection-sampling_compact}.
\FOR{$t=2,3,\dots,T$}
  \FOR{$j=1,\dots,N$}
    \STATE \textbf{MC estimate of $Z_{t-1}$ on prefix $S_{1:t-2}^j$:}
    \[
    \widehat Z_{t-1}(S_{1:t-2}^j)\gets \frac1{N_m}\sum_{m=1}^{N_m} V\big(S_{1:t-2}^j,\tilde s_{t-1}^{(j,m)}\big),
    \quad \tilde s_{t-1}^{(j,m)}\sim \pi_{\rm ref}(\cdot\mid s_0,S_{1:t-2}^j)\ \text{i.i.d.}
    \]
    \STATE \textbf{Weight (prefix shifted back by one):}
    \[
    w_{t-1}^j \gets G_{t-1}(S_{1:t-1}^j)
    :=\frac{\widehat Z_{t-1}(S_{1:t-2}^j)}{V(S_{1:t-2}^j)}.
    \]
  \ENDFOR
  \STATE Normalize $\bar w_{t-1}^j \gets w_{t-1}^j/\sum_{\ell=1}^N w_{t-1}^\ell$.
  \STATE \textbf{Resample:} $A_{t-1}^i \sim \mathrm{Cat}(\bar w_{t-1}^{1:N})$ i.i.d. for $i=1,\dots,N$.
  \STATE \textbf{Propagate via RS (optimal proposal):} for each $i=1,\dots,N$, set
  \[
  S_t^i \leftarrow \textsc{RejectionSampling}\Big(
  g_t(\cdot)=V(S_{1:t-1}^{A_{t-1}^i},\cdot),\
  \mu=\pi_{\rm ref}(\cdot\mid s_0,S_{1:t-1}^{A_{t-1}^i}),\
  M,\ \delta/O(NT)
  \Big),
  \]
  and set $S_{1:t}^i \gets (S_{1:t-1}^{A_{t-1}^i},S_t^i)$.
\ENDFOR
\STATE \textbf{Terminal ($t=T{+}1$, $M_{T+1}=\mathrm{Id}$):}
\FOR{$j=1,\dots,N$}
  \STATE \textbf{MC estimate of $Z_T$ on prefix $S_{1:T-1}^j$:}
  \[
  \widehat Z_T(S_{1:T-1}^j)\gets \frac1{N_m}\sum_{m=1}^{N_m} V\big(S_{1:T-1}^j,\tilde s_{T}^{(j,m)}\big),
  \quad \tilde s_T^{(j,m)}\sim \pi_{\rm ref}(\cdot\mid s_0,S_{1:T-1}^j)\ \text{i.i.d.}
  \]
  \STATE \textbf{Terminal weight:}
  \[
  w_T^j \gets G_T(S_{1:T}^j):=\frac{\widehat Z_T(S_{1:T-1}^j)}{V(S_{1:T-1}^j)}.
  \]
\ENDFOR
\STATE Normalize $\bar w_T^j \gets w_T^j/\sum_{\ell=1}^N w_T^\ell$.
\STATE \textbf{Resample:} $A_T^i \sim \mathrm{Cat}(\bar w_T^{1:N})$ i.i.d. for $i=1,\dots,N$.
\STATE \textbf{Identity propagate:} set $S_{1:T}^i \gets S_{1:T}^{A_T^i}$ for all $i=1,\dots,N$.
\STATE Draw $I\sim \mathrm{Unif}(\{1,\dots,N\})$ and \textbf{Output} $S_{1:T}^I$.
\end{algorithmic}
\end{algorithm}


\begin{algorithm}[ht]
\caption{SP-gSMC $M$-Resampling with Metropolis--Hastings Corrections: Detailed}
\begin{algorithmic}[1]
\STATE \textbf{Input:}
Horizon $T$, number of Metropolis--Hastings steps $H$, resampling batch size $M$,
reward model $\hat V$, base model $\pi_{\rm ref}$, initial prompt $s_0$,
failure probability $\delta$.

\STATE Initialize accepted trajectory $s^{\rm acc}_{1:T}\leftarrow\varnothing$ and its weight $\;w^{\rm acc}\leftarrow 1$.

\FOR{$h=1,\ldots,H$}
\STATE Initialize proposal trajectory container $s^{[h]}_{1:T}\leftarrow\varnothing$ and weight $\;w^{[h]}\leftarrow 1$.
\FOR{$t=1,\ldots,T$}
\STATE Draw $M$ candidates
$\{s^{(h,j)}_{t}\}_{j=1}^M \stackrel{iid}{\sim}\pi_{\rm ref}(\cdot\mid s_0,s^{[h]}_{1:t-1})$.
\STATE Compute rewards $v^{(h,j)}_{t}:=\hat V\!\big(s_0,(s^{[h]}_{1:t-1},s^{(h,j)}_{t})\big)$ for $j=1,\ldots,M$.
\STATE Form empirical normalizer $\;\bar Z^{[h]}_{t}:=\frac{1}{M}\sum_{j=1}^M v^{(h,j)}_{t}$.
\STATE Resample one action index $A_t\sim \mathrm{Cat}\!\left(\left\{\frac{v^{(h,j)}_{t}}{\sum_{\ell=1}^M v^{(h,\ell)}_{t}}\right\}_{j=1}^M\right)$
and set $s^{[h]}_{t}\leftarrow s^{(h,A_t)}_{t}$.
\STATE Update proposal weight
$\;w^{[h]}\leftarrow w^{[h]}\cdot \dfrac{\hat V(s_0,s^{[h]}_{1:t})}{\bar Z^{[h]}_{t}}$.
\ENDFOR

\IF{$h=1$}
\STATE Set $s^{\rm acc}_{1:T}\leftarrow s^{[h]}_{1:T}$ and $w^{\rm acc}\leftarrow w^{[h]}$.
\STATE \textbf{continue for}
\ENDIF

\STATE Draw $\alpha\sim{\sf Unif}[0,1]$.
\IF{$\alpha < 1\wedge \dfrac{w^{\rm acc}\hat V(s^{[h]}_{1:T})}{w^{[h]}\hat V(s^{\rm acc}_{1;T})}$}
\STATE Set $s^{\rm acc}_{1:T}\leftarrow s^{[h]}_{1:T}$ and $w^{\rm acc}\leftarrow w^{[h]}$.
\ENDIF
\ENDFOR

\STATE \textbf{Output:} $s^{\rm acc}_{1:T}$.
\end{algorithmic}
\label{alg:ValueBased_MResampling_MH_at_End}
\end{algorithm}

\begin{algorithm}[ht]
\caption{SP-gSMC with Metropolis-Hastings corrections}
\begin{algorithmic}[1]
\STATE \textbf{Input:} 
Total horizon $T$, number of particles $N$, number of Metropolis Hasting steps $H$, reward model $\hat V$, pre-trained base model $\pi_{\rm ref}$, initial
prompt $s_0$, failure event probability $\delta$. 

\STATE Initialize current acceptance trajectory container $s^{\rm acc}_{1:T}\leftarrow\varnothing$
\FOR{$h=1,\cdots,H$}
\STATE Initialize cumulative rejection weight $w\leftarrow 1$,  $w'\leftarrow 1$
\FOR{$t=1,\cdots,T$}
\STATE $s^{[h]}_{t}\leftarrow$ \textbf{Call} {Rejection Sampling (Algorithm~\ref{alg:rejection-sampling_compact})} \\{$(\hat V(s_0,(s_{1:t-1},\cdot)),\pi_{\rm ref}(\cdot|s_0,s_{1:t-1}),O(L)>C_{\rm act},O(\delta/T\log(\delta^{-1})))$}  \textit{\quad We specify $s_{1:0}:=\varnothing$.}
\STATE $w\leftarrow w\cdot \hat V(s^{[h]}_{1:t})/\hat {\mathbb{E}}_{s'_{t}\sim\pi_{\rm ref}(\cdot|s_0,s^{[h]}_{1:t-1})}[\hat V(s_0,(s^{[h]}_{1:t-1}, s'_{t}))]$ \textit{\qquad $\hat {\mathbb{E}}$ is the empirical expectation from Algorithm~\ref{alg:empirical_expectation}.}
\STATE $w'\leftarrow w'\cdot \hat V(s^{\rm acc}_{1:t})/\hat {\mathbb{E}}_{s'_{t}\sim\pi_{\rm ref}(\cdot|x_0,s^{\rm acc}_{1:t-1})}[\hat V(s_0,(s^{\rm acc}_{1:t-1}, s'_{t}))]$ \textit{\qquad If $s^{\rm acc}_{1:T}=\varnothing$, we define $\hat Q(\varnothing)\equiv 1$}
\ENDFOR
\IF{$h=1$}
\STATE \textbf{Continue for} 
\ENDIF
\IF{$\alpha\sim{\sf Uniform}[0,1]$, $\alpha<\frac{w'}{\hat V(s^{\rm acc}_{1:T})}\cdot\frac{\hat V(s^{[h]}_{1:T})}{w}$}
\STATE Set $s^{\rm acc}_{1:T}\leftarrow s^{[h]}_{1:T}$
\ENDIF
\ENDFOR
\STATE \textbf{Output:} $s^{\rm acc}_{1:T}$
\end{algorithmic}

\label{alg:SMC_MH_at_End}
\end{algorithm}

\clearpage
\section{Proofs in Section~\ref{sec:LBs}}\label{Appendix:proof}
\phantomsection\label{proof:LB1_final_style}
\begin{proof}[Proof of Theorem~\ref{thm:LB1_final_style}]
Write $T=3m$. We assume $L$ is integer, and can be easily generalized to any real number greater equal to $1$.
Fix an arbitrary randomized algorithm $A$, and view it as $R\in\Delta(\mathcal A)$ over deterministic algorithms $\mathsf a$.
Measure time by the number of visited depth-$2m$ prefixes.
Assume that every $\mathsf a$ in the support of $R$ visits at most $Q$ depth-$2m$ prefixes on every run.

\smallskip
\noindent\textbf{Hard input family.}
Set $B:=L$ and let $\pi_{\rm unif}$ be the uniform reference on leaves $[B]^T$.
For each $u\in[B]^{2m}$ define $\hat V_u$ by $\hat V_u(\varnothing)=1$ and
\begin{equation}\label{eq:lb1_hf_oracle}
\frac{\hat V_u(s_{1:t})}{\hat V_u(s_{1:t-1})}
=
\begin{cases}
1, & 1\le t\le 2m,\\
L, & 2m<t\le 3m \ \text{and}\ s_{1:2m}=u,\\
1/L, & 2m<t\le 3m \ \text{and}\ s_{1:2m}\neq u.
\end{cases}
\end{equation}
Let $I_u:=\langle B,\pi_{\rm unif},\hat V_u\rangle$ and denote by $p_{I_u}$ the induced target distribution on leaves.

For each $u\in[B]^{2m}$ define
\[
A_u:=\{s_{1:T}\in[B]^T:\ s_{1:2m}=u\}.
\]
Under \eqref{eq:lb1_hf_oracle}, every leaf in $A_u$ has unnormalized weight proportional to $L^{m}$,
while every leaf in $A_u^c$ has unnormalized weight proportional to $L^{-m}$; hence $A_u$ is exactly the set of
\emph{maximal-probability leaves} under $p_{I_u}$.
A direct computation gives
\begin{equation}\label{eq:lb1_mass_Au_oracle}
p_{I_u}(A_u)=\frac{1}{2-L^{-2m}}>\frac12.
\end{equation}
Let $\widehat U:=\widehat S_{1:2m}$. Then $\{\widehat U=u\}$ is exactly the event $\{\widehat S_{1:T}\in A_u\}$.

\smallskip
\noindent\textbf{Step 1 (TV control forces constant probability of sampling maximal leaves).}
By assumption, $\|q_{A,I_u}-p_{I_u}\|_{\rm TV}\le 1/3$ for every $u\in[B]^{2m}$.
Applying the definition of total variation to the event $A_u$ and using \eqref{eq:lb1_mass_Au_oracle}, we obtain
\begin{equation}\label{eq:lb1_q_mass_Au_oracle}
q_{A,I_u}(A_u)\ \ge\ p_{I_u}(A_u)-\frac13\ >\ \frac16.
\end{equation}
Equivalently, for every $u\in[B]^{2m}$,
\begin{equation}\label{eq:lb1_identify_lower_oracle}
\mathbb E_{\mathsf a\sim R}\ \mathbb P_{o\sim\mathbb P_{\pi_{\rm unif}}}\!\big(\widehat U=u \mid \mathsf a,\ I_u\big)\ >\ \frac16.
\end{equation}

\smallskip
\noindent\textbf{Step 2 (Yao / inequality form via an explicit union bound).}
Define an input distribution $\mathcal D$ as follows:
$B=L$ with probability $1$, $\pi_{\rm ref}=\pi_{\rm unif}$ with probability $1$,
and $\hat V=\hat V_U$ where $U\sim{\rm Unif}([B]^{2m})$.
Then $\mathcal D\in\Delta(\mathcal I)$, where $\mathcal I:=\{I_u:\ u\in[B]^{2m}\}$.

Fix any deterministic $\mathsf a\in\mathcal A$ that visits at most $Q$ depth-$2m$ prefixes.
Let $\mathcal U_o(\mathsf a)[1],\dots,\mathcal U_o(\mathsf a)[Q]\in[B]^{2m}$ denote the (oracle-dependent) list of visited
depth-$2m$ prefixes when running $\mathsf a$ with oracle $o$.
By the no-guess constraint, the event $\{\widehat U=U\}$ implies that the visited list must contain $U$:
\[
\{\widehat U=U\}\ \subseteq\ \bigcup_{i=1}^Q \{U=\mathcal U_o(\mathsf a)[i]\}.
\]
Therefore, using independence of $U$ and $o$ under $\mathcal D$,
\begin{equation}\label{eq:lb1_union_bound_chain_oracle}
\begin{aligned}
\mathbb E_{U\sim{\rm Unif}}\Big[\mathbb P_{o\sim\mathbb P_{\pi_{\rm unif}}}\big(\widehat U=U \mid \mathsf a, I_U\big)\Big]
&\le
\mathbb E_U\Big[\mathbb P_o\Big(\bigcup_{i=1}^Q \{U=\mathcal U_o(\mathsf a)[i]\}\Big)\Big]\\
&\le
\mathbb E_U\Big[\sum_{i=1}^Q \mathbb P_o\big(U=\mathcal U_o(\mathsf a)[i]\big)\Big]\\
&=
\sum_{i=1}^Q \mathbb E_o\Big[\mathbb P_U\big(U=\mathcal U_o(\mathsf a)[i]\mid o\big)\Big]\\
&=
\sum_{i=1}^Q \mathbb E_o\Big[\frac{1}{B^{2m}}\Big]
\ =\ \frac{Q}{B^{2m}}.
\end{aligned}
\end{equation}
Now average over $\mathsf a\sim R$ and use linearity:
\begin{equation}\label{eq:lb1_avg_over_R_oracle}
\mathbb E_{I\sim\mathcal D}\ \mathbb E_{\mathsf a\sim R}\,
\mathbb P_{o\sim\mathbb P_{\pi_{\rm unif}}}\big(\widehat U=U\mid \mathsf a,I\big)
\ \le\ \frac{Q}{B^{2m}}.
\end{equation}
Finally, since $\inf_{I\in\mathcal I} f(I)\le \mathbb E_{I\sim\mathcal D}[f(I)]$ for any measurable $f$,
we obtain the inequality-form Yao step
\begin{equation}\label{eq:lb1_yao_inf_oracle}
\inf_{I\in\mathcal I}\ \mathbb E_{\mathsf a\sim R}\,
\mathbb P_{o\sim\mathbb P_{\pi_{\rm unif}}}\big(\widehat U=u(I)\mid \mathsf a,I\big)
\ \le\ \frac{Q}{B^{2m}},
\end{equation}
where $u(I)$ denotes the unique index $u$ with $I=I_u$.

\smallskip
\noindent\textbf{Step 3 (Combine).}
From \eqref{eq:lb1_identify_lower_oracle} and the definition $\mathcal I:=\{I_u:\ u\in[B]^{2m}\}$, we have
\begin{equation}\label{eq:lb1_inf_lower_oracle}
\inf_{I\in\mathcal I}\ \mathbb E_{\mathsf a\sim R}\,
\mathbb P_{o\sim\mathbb P_{\pi_{\rm unif}}}\big(\widehat U=u(I)\mid \mathsf a,I\big)\ >\ \frac16.
\end{equation}
Combining \eqref{eq:lb1_yao_inf_oracle} and \eqref{eq:lb1_inf_lower_oracle} yields $Q/B^{2m}\ge 1/6$, hence $Q\ge B^{2m}/6$.
Recalling $B=L$ and $2m=2T/3$, we conclude that the worst-case time complexity is $\Omega(L^{2T/3})$.
\end{proof}

\phantomsection\label{proof:LB2_final_style}
\begin{proof}[Proof of Corollary~\ref{cor:LB2_final_style}]
We reduce to Theorem~\ref{thm:LB1_final_style} by instantiating a hard family that satisfies
Assumption~\ref{assump:reward_model_ratio_bound} and~\ref{assump:bellman_error_uniform_bound}.

Let $B:=1+\varepsilon$ and let $\pi_{\rm unif}$ be the uniform reference on $[B]^T$.
For each $u\in[B]^{2m}$, define $\hat V_u$ by the same construction as \eqref{eq:lb1_hf_oracle} but with
$L$ replaced everywhere by $1+\varepsilon$:
\begin{equation}\label{eq:lb2_hf_eps_oracle}
\frac{\hat V_u(s_{1:t})}{\hat V_u(s_{1:t-1})}
=
\begin{cases}
1, & 1\le t\le 2m,\\
1+\varepsilon, & 2m<t\le 3m \ \text{and}\ s_{1:2m}=u,\\
(1+\varepsilon)^{-1}, & 2m<t\le 3m \ \text{and}\ s_{1:2m}\neq u.
\end{cases}
\end{equation}
(One checks as before that each $I_u=\langle B,\pi_{\rm unif},\hat V_u\rangle$ satisfies Assumption~\ref{assump:reward_model_ratio_bound} and~\ref{assump:bellman_error_uniform_bound}.)

Now apply the proof of Theorem~\ref{thm:LB1_final_style} verbatim (with $L$ replaced by $1+\varepsilon$ and
$\mathbb P_{\pi_{\rm unif}}$ as the oracle measure induced by $\pi_{\rm ref}=\pi_{\rm unif}$):
the TV condition $\|q_{A,I_u}-p_{I_u}\|_{\rm TV}\le 1/3$ forces
$q_{A,I_u}(A_u)>1/6$, and the same union-bound/Yao argument yields a lower bound
$Q\ge \tfrac16 B^{2m}$ on the worst-case number of visited depth-$2m$ prefixes.
Since $2m=2T/3$, this gives the stated $\Omega((1+\varepsilon)^{2T/3})$ lower bound.
\end{proof}

\begin{lemma}\label{lem:tv_control}
Let $(\Omega,\mathcal F,\mu)$ be a measure space. 
Let $\pi,\phi,\psi:\Omega\to[0,\infty)$ be measurable with
\[
0<Z_\phi:=\int \pi(x)\phi(x)\,d\mu(x)<\infty,
\qquad
0<Z_\psi:=\int \pi(x)\psi(x)\,d\mu(x)<\infty.
\]
Define probability measures $P,Q$ by
\[
P(dx)=\frac{\pi(x)\phi(x)}{Z_\phi}\,d\mu(x),
\qquad
Q(dx)=\frac{\pi(x)\psi(x)}{Z_\psi}\,d\mu(x).
\]
Assume that for some $\varepsilon\in(0,1)$,
\[
\operatorname*{ess\,sup}_{x\in\Omega}
\max\left\{\frac{\phi(x)}{\psi(x)},\frac{\psi(x)}{\phi(x)}\right\}
\le 1+\varepsilon.
\]
Then
\[
\|P-Q\|_{\mathrm{TV}}
\le \frac{\varepsilon}{1-\varepsilon}.
\]
\end{lemma}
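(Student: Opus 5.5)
We bound the normalizers first and then the pointwise density ratio. Throughout, write $p(x):=\pi(x)\phi(x)/Z_\phi$ and $q(x):=\pi(x)\psi(x)/Z_\psi$ for the $\mu$-densities of $P$ and $Q$, so that $\|P-Q\|_{\rm TV}=\int (p-q)_+\,d\mu$. Equivalently, $\|P-Q\|_{\rm TV}=\int (1-q/p)_+\,p\,d\mu$ on $\{p>0\}$; the set $\{p=0\}$ can be ignored because there $\psi=0$ as well, $\mu$-a.e., by the ratio assumption.

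\emph{Step 1: normalizers.} From $\phi/(1+\varepsilon)\le\psi\le(1+\varepsilon)\phi$ $\mu$-a.e., integrating against $\pi\,d\mu$ gives
\[
Z_\phi/(1+\varepsilon)\le Z_\psi\le(1+\varepsilon)Z_\phi .
\]

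\emph{Step 2: pointwise density ratio.} On $\{p>0\}$ we have
\[
\frac{q}{p}=\frac{\psi}{\phi}\cdot\frac{Z_\phi}{Z_\psi}\ \ge\ \frac{1}{1+\varepsilon}\cdot\frac{1}{1+\varepsilon}=(1+\varepsilon)^{-2}.
\]
Hence
\[
\Bigl(1-\frac{q}{p}\Bigr)_+\le 1-(1+\varepsilon)^{-2}=\frac{2\varepsilon+\varepsilon^2}{(1+\varepsilon)^2}.
\]
Integrating against $P$ gives
\[
\|P-Q\|_{\rm TV}\le\frac{\varepsilon(2+\varepsilon)}{(1+\varepsilon)^2}.
\]

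\emph{Step 3: compare with the stated constant.} The claim $\frac{\varepsilon(2+\varepsilon)}{(1+\varepsilon)^2}\le\frac{\varepsilon}{1-\varepsilon}$ is equivalent to $(2+\varepsilon)(1-\varepsilon)\le(1+\varepsilon)^2$, i.e. $2-\varepsilon-\varepsilon^2\le 1+2\varepsilon+\varepsilon^2$, i.e. $2\varepsilon^2+3\varepsilon-1\ge 0$. This holds only for $\varepsilon\ge(\sqrt{17}-3)/4\approx0.28$, so for small $\varepsilon$ the crude bound $\approx 2\varepsilon$ does not imply the stated $\varepsilon/(1-\varepsilon)\approx\varepsilon$.

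Closing this gap is the main obstacle, and it cannot be done by a sharper argument. Take $\Omega=\{a,b\}$ with counting measure, $\pi\equiv1$, $\phi\equiv1$, $\psi(a)=1+\varepsilon$ and $\psi(b)=(1+\varepsilon)^{-1}$. Then $P=(1/2,1/2)$ and
\[
Q(a)=\frac{(1+\varepsilon)^2}{(1+\varepsilon)^2+1},
\qquad
\|P-Q\|_{\rm TV}=\frac{\varepsilon(2+\varepsilon)}{2\bigl((1+\varepsilon)^2+1\bigr)}=\varepsilon+O(\varepsilon^2)
\]
as $\varepsilon\to0$. The $O(\varepsilon^2)$ term is $-\varepsilon^2/2+O(\varepsilon^3)$, while $\varepsilon/(1-\varepsilon)=\varepsilon+\varepsilon^2+O(\varepsilon^3)$. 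So this example does not violate the stated bound, but it shows the constant is nearly tight at first order: the sharp constant is at least $1$, and by Step 2 it is at most $2$.

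To prove the stated $\varepsilon/(1-\varepsilon)$, I would use a two-sided symmetrization rather than the one-sided bound of Step 2. Write
\[
\|P-Q\|_{\rm TV}=\tfrac12\int|p-q|\,d\mu=\tfrac12\int\Bigl|\frac{\psi}{\phi}\frac{Z_\phi}{Z_\psi}-1\Bigr|\,dP ,
\]
and set $r:=\psi/\phi\in[(1+\varepsilon)^{-1},1+\varepsilon]$, noting that $c:=Z_\psi/Z_\phi=\mathbb E_P[r]$. Then
\[
\|P-Q\|_{\rm TV}=\frac{1}{2c}\,\mathbb E_P|r-c|\le\frac{1}{2c}\,\bigl(\max r-\min r\bigr).
\]
Using $\max r-\min r\le(1+\varepsilon)-(1+\varepsilon)^{-1}$ and $c\ge(1+\varepsilon)^{-1}$ bounds this by
\[
\frac{(1+\varepsilon)^2-1}{2}=\varepsilon+\frac{\varepsilon^2}{2}.
\]
Finally, $\varepsilon+\varepsilon^2/2\le\varepsilon/(1-\varepsilon)$ is equivalent to $(1+\varepsilon/2)(1-\varepsilon)\le1$, which holds for $\varepsilon\in(0,1)$. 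This route gives the lemma; the key points to verify carefully are the identity $c=\mathbb E_P[r]$ and the bound $\mathbb E_P|r-c|\le\max r-\min r$.
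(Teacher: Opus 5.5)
Your final symmetrized argument is correct and is essentially the paper's proof with the roles of $P$ and $Q$ swapped: the paper writes $\|P-Q\|_{\rm TV}=\frac{1}{2\mathbb E_Q a}\,\mathbb E_Q|a-\mathbb E_Q a|$ with $a=\phi/\psi$, bounds the deviation by $2\varepsilon$ via $|a-1|\le\varepsilon$ and bounds the denominator below by $1-\varepsilon$, whereas your range bound gives the slightly sharper intermediate constant $\varepsilon+\varepsilon^2/2$. Drop the tightness digression, though, because it is miscomputed: your two-point example has $\|P-Q\|_{\rm TV}=\varepsilon/2+O(\varepsilon^2)$, not $\varepsilon+O(\varepsilon^2)$, and the sharp bound is in fact $\varepsilon/(2+\varepsilon)$ (attained by putting $P$-mass $1/(2+\varepsilon)$ on the point where $\psi/\phi=1+\varepsilon$), so the remarks that the sharp constant is at least $1$ and that the gap cannot be closed by a sharper argument are false, although nothing in your proof depends on them.
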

\begin{proof}[Proof of Lemma~\ref{lem:tv_control}]
Set
\[
Z_\phi=\int \pi\phi\,d\mu,
\qquad
Z_\psi=\int \pi\psi\,d\mu.
\]
On the one hand,
\[
\frac{dQ}{d\mu}(x)=\frac{\pi(x)\psi(x)}{Z_\psi},
\]
hence
\[
\frac{dP}{dQ}(x)
=\frac{\frac{dP}{d\mu}(x)}{\frac{dQ}{d\mu}(x)}
=\frac{\pi(x)\phi(x)/Z_\phi}{\pi(x)\psi(x)/Z_\psi}
=\frac{\phi(x)}{\psi(x)}\cdot\frac{Z_\psi}{Z_\phi}.
\]
On the other hand,
\[
Z_\phi
=\int \pi(x)\phi(x)\,d\mu(x)
=\int \frac{\phi(x)}{\psi(x)}\,\pi(x)\psi(x)\,d\mu(x)
=Z_\psi\,\mathbb{E}_Q\!\left[\frac{\phi}{\psi}\right],
\]
so
\[
\frac{Z_\psi}{Z_\phi}
=\frac{1}{\mathbb{E}_Q(\phi/\psi)}.
\]
Thus
\[
\frac{dP}{dQ}(x)
=\frac{\phi(x)/\psi(x)}{\mathbb{E}_Q(\phi/\psi)}.
\]

Define
\[
a(x):=\frac{\phi(x)}{\psi(x)}.
\]
From the assumption
\[
\max\Big\{\frac{\phi(x)}{\psi(x)},\frac{\psi(x)}{\phi(x)}\Big\}\le 1+\varepsilon
\]
we obtain
\[
\frac{1}{1+\varepsilon}\le a(x)\le 1+\varepsilon
\quad\text{a.e.,}
\]
and hence
\[
|a(x)-1|
\le \varepsilon
\quad\text{a.e.}
\]
Therefore
\[
|\,\mathbb{E}_Q(a)-1\,|
\le \mathbb{E}_Q|a-1|
\le \varepsilon,
\qquad
\mathbb{E}_Q(a)\ge 1-\varepsilon>0.
\]

Now
\[
\frac{dP}{dQ}(x)
=\frac{a(x)}{\mathbb{E}_Q(a)},
\]
so
\[
\|P-Q\|_{\mathrm{TV}}
=\frac12 \int\bigg|\frac{dP}{dQ}-1\bigg|\,dQ
=\frac12\,\mathbb{E}_Q\!\left|\frac{a}{\mathbb{E}_Q(a)}-1\right|
=\frac12\,\mathbb{E}_Q\!\left|\frac{a-\mathbb{E}_Q(a)}{\mathbb{E}_Q(a)}\right|.
\]
Using $\mathbb{E}_Q(a)\ge 1-\varepsilon$ and the triangle inequality,
\[
\mathbb{E}_Q\!\left|\frac{a-\mathbb{E}_Q(a)}{\mathbb{E}_Q(a)}\right|
\le \frac{1}{1-\varepsilon}\,\mathbb{E}_Q|a-\mathbb{E}_Q(a)|
\le \frac{1}{1-\varepsilon}\,\mathbb{E}_Q\bigl(|a-1|+|\mathbb{E}_Q(a)-1|\bigr)
\le \frac{1}{1-\varepsilon}\,(\varepsilon+\varepsilon)
= \frac{2\varepsilon}{1-\varepsilon}.
\]
Hence
\[
\|P-Q\|_{\mathrm{TV}}
\le \frac12 \cdot \frac{2\varepsilon}{1-\varepsilon}
= \frac{\varepsilon}{1-\varepsilon},
\]
which is the desired bound.
\end{proof}

\phantomsection\label{proof:ALS_TV_error}
\begin{proof}[Proof of Theorem~\ref{thm:ALS_TV_error}]
For each $t\ge 0$, define the intermediate distribution on prefixes
\begin{equation}\label{eq:q_t_def}
q_t(s_{1:t}\mid s_0)
:=
\frac{
\pi_{\rm ref}(s_{1:t}\mid s_0)\,
\mathbb E_{s_{t+1}\sim\pi_{\rm ref}(\cdot\mid s_0,s_{1:t})}\big[\hat V(s_0,s_{1:t+1})\big]
}{
\mathbb E_{s_{1:t+1}\sim\pi_{\rm ref}(\cdot\mid s_0)}\big[\hat V(s_0,s_{1:t+1})\big]
}.
\end{equation}
By construction,
\begin{equation}\label{eq:target_factorization}
\tilde\pi_{t+1}(s_{1:t+1}\mid s_0)
=
q_t(s_{1:t}\mid s_0)\,\hat\pi(s_{t+1}\mid s_0,s_{1:t}).
\end{equation}
Also, by the definition of the sequential sampler,
\begin{equation}\label{eq:hat_factorization}
\hat\pi_{t+1}(s_{1:t+1}\mid s_0)
=
\hat\pi_t(s_{1:t}\mid s_0)\,\hat\pi(s_{t+1}\mid s_0,s_{1:t}).
\end{equation}

Using \eqref{eq:target_factorization}--\eqref{eq:hat_factorization} and the triangle inequality,
\begin{align*}
\|\tilde\pi_{t+1}-\hat\pi_{t+1}\|_{\rm TV}
&=
\bigl\|q_t\,\hat\pi(\cdot\mid s_{1:t})-\hat\pi_t\,\hat\pi(\cdot\mid s_{1:t})\bigr\|_{\rm TV}\\
&\le
\bigl\|\tilde\pi_t\,\hat\pi(\cdot\mid s_{1:t})-\hat\pi_t\,\hat\pi(\cdot\mid s_{1:t})\bigr\|_{\rm TV}
+
\bigl\|q_t\,\hat\pi(\cdot\mid s_{1:t})-\tilde\pi_t\,\hat\pi(\cdot\mid s_{1:t})\bigr\|_{\rm TV}.
\end{align*}
For the two terms, since applying the same Markov kernel cannot increase total variation
(data processing inequality),
\[
\bigl\|\tilde\pi_t\,\hat\pi(\cdot\mid s_{1:t})-\hat\pi_t\,\hat\pi(\cdot\mid s_{1:t})\bigr\|_{\rm TV}
\le
\|\tilde\pi_t-\hat\pi_t\|_{\rm TV},
\qquad
\bigl\|q_t\,\hat\pi(\cdot\mid s_{1:t})-\tilde\pi_t\,\hat\pi(\cdot\mid s_{1:t})\bigr\|_{\rm TV}
\le
\|q_t-\tilde\pi_t\|_{\rm TV}.
\]
By Lemma~\ref{lem:tv_control} together with Assumption~\ref{assump:bellman_error_uniform_bound},
we have $\|q_t-\tilde\pi_t\|_{\rm TV}\le 2\varepsilon$. Hence,
\[
\|\tilde\pi_{t+1}-\hat\pi_{t+1}\|_{\rm TV}
\le
\|\tilde\pi_t-\hat\pi_t\|_{\rm TV}+2\varepsilon.
\]
With the base case $\|\tilde\pi_0-\hat\pi_0\|_{\rm TV}=0$, telescoping over $t$ yields
$\|\tilde\pi_t-\hat\pi_t\|_{\rm TV}\le 2t\varepsilon$.
\end{proof}

\clearpage
\section{Optimal proposal SMC}\label{Appendix:optimal_proposal_SMC}
\paragraph{Selection of proposal $\pi_{\rm p}$: Naive v.s. Optimal.} As mentioned in~\cite{zhao2024probabilistic}, the different choice of proposal $\pi_{\rm p}$ from which the samples are drawn 
may lead to different variance in the estimation of normalizing factor. We claim that this is the exact same case for sampling. The simplest and most straightforward choice is the naive proposal, which is when one set $\pi_{\rm p}=\pi_{\rm ref}$, i.e. we sample from the output distribution of the pre-trained model itself. Under this circumstance, $G_t(s_{1:t})=\frac{\hat V(s_0,s_{1:t})}{\hat V(s_0,s_{1:t-1})}$. Another choice, namely the optimal proposal that minimizes the variance of empirical normalizing factor~\cite{zhao2024probabilistic} is $\pi_{\rm p}(s_{t+1}| s_0,s_{1:t})\propto {\pi_{\rm ref}(s_{t+1}| s_0,s_{1:t})\,\hat V(s_0,s_{1:t+1})}$, in which case $G_t(s_{1:t})=\frac{\mathbb{E}_{s_{t}\sim\pi_{\rm ref}(\cdot|s_0,s_{1:t-1})}[\hat V(s_0,s_{1:t})]}{\hat V(s_0,s_{1:t-1})}$.
However, it's hard to get the exact optimal proposal, thus we use rejection sampling to approximate, which is shown in Algorithm~\ref{alg:rejection-sampling_compact}.
In Appendix~\ref{Appendix:algorithms} in specific, we instantiate Algorithm~\ref{alg:fk_smc_eta1} for the \emph{optimal} proposals in
Algorithms~\ref{alg:smc_optimal_Vs0_1_RSinit}.

\paragraph{Particle complexity.} Specifically for the optimal proposal, the potential function (incremental weight) 
$G_t(s_{1:t})=\frac{\mathbb{E}_{s_{t}\sim\pi_{\rm ref}(\cdot|s_0,s_{1:t-1})}[\hat V(s_0,s_{1:t})]}{\hat V(s_0,s_{1:t-1})}$ contains a conditional expectation over the next reward. In our setting, we consider using a practical monte carlo method to estimate this value, which requires an extra $N_m$ factor in the computation complexity. That is to say, every time we need to access $\mathbb{E}_{s_{t}\sim\pi_{\text{ref}}(\cdot|s_0,s_{1:t-1})}[\hat V(s_0,s_{1:t})]$, we use $\frac{1}{N_m}\sum_{i=1}^{N_m}\hat V(s_0,(s_{1:t-1},S_t))$ as an estimate, where $\{S_i\}_{i=1}^{N_m}$ are i.i.d. samples drawn independently from any other sources of randomness. Detailed algorithms for SMC with the optimal proposal is in Algorithm~\ref{alg:smc_optimal_Vs0_1_RSinit}. Correspondingly, the theorem characterizing the particle complexity of optimal proposal SMC is as followed.
\begin{theorem}[Particles Complexity]
\label{thm:number_of_particles_local_bias_optimal_proposal}
Fix the horizon $T\ge 2$ and a target accuracy $\delta_{\rm TV}\in(0,1)$.
If Assumptions~\ref{assump:reward_model_ratio_bound} and~\ref{assump:bellman_error_uniform_bound} hold with $L,\varepsilon>0$,
let $\bar\eta_{T+1}^N:=\mathbb E[\eta_{T+1}^N]$ be the output distribution of an SMC sampler
(after the terminal resampling step $T{+}1$ with $M_{T+1}=\mathrm{Id}$).
Then the following sufficient conditions guarantee
$\|\bar\eta_{T+1}^N-\eta_{T+1}\|_{\rm TV}\le\delta_{\rm TV}$ under the optimal proposal. That is, the number of particles
$N\ge\tfrac{((1+\varepsilon)^4-1)\,T^2\,(1+\varepsilon)^{6T}}{2\,\delta_{\rm TV}}$.
\end{theorem}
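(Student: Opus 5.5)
The plan mirrors the naive-proposal case (Theorem~\ref{thm:number_of_particles_local_bias}): invoke the kernel-agnostic TV bias bound Theorem~\ref{thm:TV_bias_local}, which controls $\|\bar\eta_{T+1}^N-\eta_{T+1}\|_{\rm TV}$ by a sum over $p$ of horizon-amplified local variance terms. Schematically, the bound has the form $\frac{1}{N}\sum_{p}\mathrm{osc}$-type quantities built from the ratio constants $q_{p,n}$, the Dobrushin coefficients $\beta(P_{p,n})$, and the oscillation of the normalized potentials $G_{p,n,\eta}$. The only proposal-specific input is Lemma~\ref{lem:control_constants}, which bounds these FK constants for a given $(G_t,M_t)$. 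We specialize it to the optimal proposal, where $M_t$ samples $s_t\propto\pi_{\rm ref}\hat V$ and
\[
G_t(s_{1:t})=\frac{\mathbb E_{s_t\sim\pi_{\rm ref}}[\hat V(s_0,s_{1:t})]}{\hat V(s_0,s_{1:t-1})}.
\]

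The key structural fact is that, under Assumption~\ref{assump:bellman_error_uniform_bound}, the optimal-proposal potential satisfies $G_t\in[(1+\varepsilon)^{-1},1+\varepsilon]$ pointwise. This holds because $G_t$ is exactly the inverse of the one-step Bellman ratio at prefix $s_{1:t-1}$, and it is why $L$ disappears from the bound. Consequently $\mathrm{osc}(G_t)/\inf G_t\le(1+\varepsilon)^2-1$, and each multi-step $Q_{p,n}(1)$ is a product of at most $n-p$ such factors integrated against Markov kernels. Hence $q_{p,n}\le(1+\varepsilon)^{2(n-p)}$, and likewise $\|G_{p,n,\eta}\|_\infty\le(1+\varepsilon)^{2(n-p)}$. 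I would verify these bounds by induction on $n-p$, using $Q_{p,n}(1)=G_p\,M_{p+1}(Q_{p+1,n}(1))$.

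Next I would plug these into Theorem~\ref{thm:TV_bias_local}. The local variance term at step $p$ carries a factor comparing $G_{p}$ to its mean; for the optimal proposal this is at most $(1+\varepsilon)^4-1$, replacing the $L^6$ of the naive case. The propagation factor is bounded crudely by $(1+\varepsilon)^{6T}$, collecting the worst powers of $q_{p,T+1}$ and $\|G_{p,T+1,\eta}\|_\infty$. The extra factor $T$ relative to the naive case (giving $T^2$) comes from summing over $p\le T$ while also bounding each Dobrushin-weighted term by $T$ instead of exploiting contraction. This yields $\|\bar\eta^N_{T+1}-\eta_{T+1}\|_{\rm TV}\le \frac{((1+\varepsilon)^4-1)T^2(1+\varepsilon)^{6T}}{2N}$, and requiring this to be at most $\delta_{\rm TV}$ gives the stated $N$.

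The main obstacle is bookkeeping the exponents in the amplification factors so that they come out as $(1+\varepsilon)^{6T}$. In particular one must check that the terminal identity step $T{+}1$ and the $t=1$ initialization with $\hat V(s_0)=1$ do not introduce an $L$-dependent factor. I would first verify that $G_T$ at the terminal step still has the Bellman-ratio form, so that Assumption~\ref{assump:reward_model_ratio_bound} is never invoked. I would also note that the Monte Carlo and rejection-sampling approximations of Algorithm~\ref{alg:smc_optimal_Vs0_1_RSinit} are idealized away: the statement concerns the exact optimal-proposal FK model.
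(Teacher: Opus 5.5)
Your setup is the same as the paper's. You start from Theorem~\ref{thm:TV_bias_local} and use $\beta(D\Phi_{p,T+1})\le 2q_{p,T+1}$ to reduce it to $\frac{1}{2N}\sum_{p=1}^{T} q_{p,T+1}(q_{p,T+1}^2-1)$. You then feed in $q_{p,T+1}\le(1+\varepsilon)^{2(T+1-p)}$ from Lemma~\ref{lem:control_constants}(b), which you justify correctly via $G_t\in[(1+\varepsilon)^{-1},1+\varepsilon]$; your checks on $G_1$ and $G_T$ are fine, and $G_1$ is in fact constant.

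The gap is in the last step. That step is the only place where the stated constant is produced, and your account of it would not survive being written out.
\begin{itemize}
\item The local-variance factor in Theorem~\ref{thm:TV_bias_local} is not a one-step comparison of $G_p$ with its mean. It is $q_{p,T+1}^2-1$, coming from the oscillation of the \emph{multi-step} normalized potential $G_{p,T+1,\nu}$ times $1/\eta_p^N(G_{p,T+1,\nu})\le q_{p,T+1}$. With $k:=T+1-p$, it is bounded only by $(1+\varepsilon)^{4k}-1$, not by $(1+\varepsilon)^4-1$.
\item Nothing in the bound is a ``Dobrushin-weighted term bounded by $T$.'' The coefficient $\beta(D\Phi_{p,T+1})\le 2q_{p,T+1}$ is exponential in $k$.
\end{itemize}
Your two misattributions roughly compensate each other in the final formula, but they do not constitute a derivation.

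The missing ingredient is the elementary inequality $b^k-1\le k(b-1)b^k$ for $b\ge 1$, applied with $b=(1+\varepsilon)^4$:
\[
q_{p,T+1}\bigl(q_{p,T+1}^2-1\bigr)\le(1+\varepsilon)^{2k}\bigl((1+\varepsilon)^{4k}-1\bigr)\le k\bigl((1+\varepsilon)^4-1\bigr)(1+\varepsilon)^{6k}.
\]
This single step extracts the prefactor $(1+\varepsilon)^4-1$, which is why the bound vanishes as $\varepsilon\to0$, at the price of a factor $k$. Summing with $\sum_{k=1}^{T}k\,a^k\le T^2a^T$ for $a=(1+\varepsilon)^6$ gives $\frac{((1+\varepsilon)^4-1)\,T^2(1+\varepsilon)^{6T}}{2N}$.

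So the $T^2$ is $\sum_{k\le T}k$, and the $(1+\varepsilon)^{6T}$ is $q\cdot q^2$ after this linearization. Replace your closing paragraph with this computation and the proof goes through.
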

The detained proof can be found in Appendix~\ref{Appendix:Proofs_5}.

\begin{remark}\label{rem:adv_disadv_SMC_optimal}
Theorem~\ref{thm:number_of_particles_local_bias_optimal_proposal} showcases a significant advantage of the optimal proposal, that it's particle complexity is irrelevant to $L$, potentially significantly smaller than that of naive proposal. Moreover, the particle complexity decreases as the bellman error $\varepsilon$ decreases, and specifically when the bellman error $\varepsilon=0$, the sampling is exact with only one single particle----that is----the SMC sampler with optimal proposal degenerates to an action-level sampler (SP-gSMC). However, the following Proposition~\ref{prop:TC_SMC_optimal_proposal} indicates that this efficiency in particle complexity comes at a cost----namely that the estimation of incremental weight is difficult (especially in the sentence-level setting). Therefore, as we should mention later in Remark~\ref{remark:trade-off_particle_proposal}, there's a trade-off between different sources of overhead to be taken into consideration when designing the proposal.
\end{remark}

\begin{remark}\label{remark:SMC_vs_ALC}
Comparing Theorem~\ref{thm:number_of_particles_local_bias} with Theorem~\ref{thm:ALS_TV_error}, it can be noticed that for the optimal proposal specifically, under the $\varepsilon=O(\delta_{\rm TV}/T)$ regime, the particle complexity for a SMC sampler is $O\big(\frac{\varepsilon T^2}{\delta_{\rm TV}}\big)=O(T)$, while the simplest Single-particle guided SMC sampler only uses $1$ particle. Inspecting the proof one can find that the extra $T$ complexity comes from the local variance of 
using $N$ particles amplified by the stability coefficient of the Feynman-Kac semi-group, which is proportion to $T$. On the other hand, despite that Single-particle guided SMC sampler is essentially a SMC sampler for the target Feynman-Kac flow with much variance (comparing to multi-particles), since it only has one particle, there's no actual reweighting step, so it has the flexibility to correspond to any another Feynman-Kac flows that share the same Markov transition kernel $M_t$, with differences only in $G_t$ and subsequently the local variance. Among all choices of $G_t$, a constant potential e.g. $G_t=1$ gives the Single-particle guided SMC sampler $0$ variance as a SMC sampler, trading off with a $O(\varepsilon T)$ bias. Therefore, all these observations indicate that with $o(T)$ particles, an optimal SMC sampler may not outperform the sampler using a single particle.
\end{remark}

\paragraph{Time complexity.}
From Theorem~\ref{thm:number_of_particles_local_bias} to Corollary~\ref{cor:TC_SMC_naive} is straightforward, i.e. by timing a factor $T$ representing the horizon.
However, due to the error induced by estimating the conditionally expected next reward, 
the complexity for SMC with the optimal proposal is not directly $T$ times the particle complexity, and need a more careful discussion in the following Proposition~\ref{prop:TC_SMC_optimal_proposal}.

\begin{proposition}[Time complexity for a SMC sampler with the optimal proposal]\label{prop:TC_SMC_optimal_proposal}
Under Assumption~\ref{assump:reward_model_ratio_bound} and~\ref{assump:bellman_error_uniform_bound}, and we consider the regime in which $\varepsilon=O(1/T)$ and $\delta_{\rm TV}=O(1)$. For any $0<\delta\lesssim \delta_{\rm TV}$, using time $\tilde O\!\left(
\frac{\varepsilon\,L\,T^5\log(\delta^{-1})}{\delta_{\rm TV}^3}
\right)$ there is a good event $\mathcal E$ such that $\mathbb P(\mathcal E)\geq 1-\delta$, conditioning on which a SMC sampler with the optimal proposal achieves $\delta_{\rm TV}$ accuracy, i.e. $\|\tilde\pi_T-\hat\pi_T|_{\mathcal E}\|_{\rm TV}\leq \delta_{\rm TV}$.
\end{proposition}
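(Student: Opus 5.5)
The plan is to run Algorithm~\ref{alg:smc_optimal_Vs0_1_RSinit} with $N$ particles, a Monte Carlo budget $N_m$ per weight evaluation, and rejection-sampling threshold $M=O(L)\ge 4C_{\rm act}$. We then bound its output distribution on a good event, in two parts: a \emph{particle} error, controlled by Theorem~\ref{thm:number_of_particles_local_bias_optimal_proposal}, and a \emph{weight-estimation} error coming from replacing $Z_t=\mathbb E_{\pi_{\rm ref}}[\hat V(s_0,s_{1:t})]$ by $\widehat Z_t$.

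The good event $\mathcal E$ is the intersection of two families of events.
\begin{itemize}[leftmargin=*, topsep=0pt, itemsep=0pt, parsep=0pt, partopsep=0pt]
\item The acceptance events $\mathcal E_{\rm accept}$ of Theorem~\ref{thm:exact-conditional-accept}, one for each of the $O(NT)$ rejection-sampling calls, each with failure probability $\delta/O(NT)$.
\item The concentration events of Lemma~\ref{lem:freedman}, one for each of the $O(NT)$ Monte Carlo estimates, again each with failure probability $\delta/O(NT)$.
\end{itemize}
A union bound gives $\mathbb P(\mathcal E)\ge 1-\delta$. On $\mathcal E$, every propagation step is an exact draw from the optimal proposal, and every estimate satisfies $\widehat Z_t/Z_t\in[1-\rho,1+\rho]$ with $\rho=2\sqrt2\sqrt{C_{\rm act}\log(4NT/\delta)/N_m}$. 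The samples are independent across calls, so conditioning on $\mathcal E$ only restricts each call to its own high-probability set.

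Conditioned on $\mathcal E$, the sampler is an SMC sampler for a perturbed Feynman--Kac model. Its kernels are exact, and its potentials $\widehat G_t=\widehat Z_t/\hat V$ are multiplicatively within $(1\pm\rho)$ of the ideal $G_t$. I would compare three distributions through the triangle inequality.
\begin{enumerate}[leftmargin=*, topsep=0pt, itemsep=0pt, parsep=0pt, partopsep=0pt]
\item The terminal FK flow with potentials $\widehat G$ against $\eta_{T+1}$. The unnormalized path densities differ by a factor in $[(1-\rho)^T,(1+\rho)^T]$, so Lemma~\ref{lem:tv_control} gives TV error $O(T\rho)$.
\item The SMC output against the flow it targets. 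This is handled by Theorem~\ref{thm:TV_bias_local}, in the same way as in the proof of Theorem~\ref{thm:number_of_particles_local_bias_optimal_proposal}. The FK constants of Lemma~\ref{lem:control_constants} change only by factors $(1+\rho)^{O(T)}=O(1)$ once $\rho\lesssim 1/T$. The particle bias is then $O\big(((1+\varepsilon)^4-1)T^2(1+\varepsilon)^{6T}/N\big)$, which is $O(\varepsilon T^2/N)$ when $\varepsilon=O(1/T)$.
\end{enumerate}

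Next I choose the parameters. The particle term needs $N\asymp \varepsilon T^2/\delta_{\rm TV}$. The estimation term needs $T\rho\lesssim\delta_{\rm TV}$, that is, $N_m\asymp L T^2\log(NT/\delta)/\delta_{\rm TV}^2$, using $C_{\rm act}\le L(1+\varepsilon)$.

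I then count the cost. Each of the $NT$ particle-steps costs
\begin{itemize}[leftmargin=*, topsep=0pt, itemsep=0pt, parsep=0pt, partopsep=0pt]
\item $N_m$ samples and reward calls for the weight estimate, and
\item $O(M\log(NT/\delta))=\tilde O(L\log\delta^{-1})$ for the rejection sampler, which is dominated by the first item.
\end{itemize}
The total is therefore $NT\cdot N_m=\tilde O\big(\varepsilon L T^5\log(\delta^{-1})/\delta_{\rm TV}^3\big)$, matching the claim. The condition $\delta\lesssim\delta_{\rm TV}$ lets the $\log(NT/\delta)$ factor be absorbed into $\tilde O(\log\delta^{-1})$, and $\delta_{\rm TV}=O(1)$ keeps $\rho$ small.

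I expect the main obstacle to be step 2. Theorem~\ref{thm:TV_bias_local} is stated for a fixed FK model, but the potentials here are random: the estimates $\widehat Z_t$ are drawn fresh for each particle. My approach is to treat the Monte Carlo samples as auxiliary randomness. I would define an extended state that carries the Monte Carlo draws, or equivalently a kernel-randomized potential, so that conditionally on $\mathcal E$ the particle system is an exact McKean system for a deterministic FK model on the augmented space. That model's potentials satisfy the same $(1\pm\rho)$ ratio bounds, so Lemma~\ref{lem:control_constants} applies with inflated constants. If this augmentation is awkward, a fallback is to absorb the perturbation directly into the local one-step bias of Lemma~\ref{lem:one_step_covariance} as an additive $O(\rho)$ term per step. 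That still gives $O(T\rho)$ after the telescoping of Lemma~\ref{lem:random_telescoping}, because the stability coefficients are $O(1)$ in the regime $\varepsilon=O(1/T)$.
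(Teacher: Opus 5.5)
Your fallback is the paper's proof. The paper puts the Monte Carlo estimates into $\mathcal F^{N,N}_{p-1}$, so the realized weights define a past-measurable perturbed one-step map $\Phi'_p$. It then proceeds as follows:
\begin{itemize}
\item It telescopes (Lemma~\ref{lem:random_telescoping}) against the \emph{exact} semigroup $\Phi_{p,n}$.
\item It notes that the linear term no longer has zero conditional mean; it equals $(\Phi'_p-\Phi_p)(\eta^N_{p-1})\bigl(D_{\nu_p^N}\Phi_{p,n}f\bigr)$.
\item It bounds this by $\frac{\xi}{1-\xi}\beta(D\Phi_{p,n})$, using $\|\Psi_{G'}(\mu)-\Psi_G(\mu)\|_{\rm TV}\le 2\xi/(1-\xi)$.
\item It keeps the remainder at $\frac{1}{4N}(q_{p,n}^2-1)\beta(D\Phi_{p,n})$ with the exact constants, and sums over $p$.
\end{itemize}
The parameter choices and the cost count are the same as yours. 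What this route buys is that only stability constants of the deterministic optimal-proposal model ever appear.

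Your primary route (a triangle inequality through a perturbed FK model) is genuinely different, and its step 2 does not follow as written. The bias in Theorem~\ref{thm:TV_bias_local} is driven by $q_{p,n}^2-1$, which for the exact model is small, $O(\varepsilon(T+1-p))$. Knowing $q_{p,n}$ only up to a factor $(1+\rho)^{O(T)}=O(1)$ destroys that smallness.

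Concretely, rerun Lemma~\ref{lem:control_constants} with potentials known only to within $1\pm\rho$. You get $q_{p,n}^2-1=O((\varepsilon+\rho)(T+1-p))$, hence a bias $O((\varepsilon+\rho)T^2/N)$. With $\rho\asymp\delta_{\rm TV}/T$ this forces an extra $N\gtrsim T$, i.e.\ an extra $\tilde O(LT^4/\delta_{\rm TV}^2)$ in time. That exceeds the claimed bound whenever $\varepsilon T<\delta_{\rm TV}$, which is inside the stated regime.

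The repair is unbiasedness on your augmented space.
\begin{itemize}
\item There $\mathbb E[\widehat Z_t\mid s_{1:t-1}]=Z_t$.
\item Let $\delta'=\delta/O(NT)$ be the per-call failure probability. Conditioning on a good set of mass at least $1-\delta'$ moves this mean only by a factor $1\pm O(C_{\rm act}\delta')$, since $\widehat Z_t\le C_{\rm act}Z_t$.
\item So only the current potential is perturbed at order $\rho$. This gives $q^{\rm aug}_{p,n}\le\frac{1+\rho}{1-\rho}\,q_{p,n}\,(1+O(C_{\rm act}\delta'))^{O(T)}$, and the extra bias is only $O(\rho T/N)$.
\end{itemize}
The same computation shows your step 1 is $O(TC_{\rm act}\delta')$ rather than $O(T\rho)$; this is the random-weight (pseudo-marginal) property. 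Done this way, your route needs $\rho$ only of constant order. It would then bring the time down to roughly $\tilde O\bigl(LT(\varepsilon T^2+T)/\delta_{\rm TV}\bigr)$, sharper than the proposition.

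Separately, conditioning on $\mathcal E$ does not merely ``restrict each call to its own high-probability set.'' Each call's good set depends on the current prefix, and so does its success probability. Conditioning on later successes therefore reweights earlier randomness by factors in $[1-\delta',1]$. Over $O(NT)$ calls this is an $O(\delta)$ perturbation in TV. That is harmless because $\delta\lesssim\delta_{\rm TV}$, but it must be accounted for. The paper does so by splitting $\mathcal E$ into past and future parts and paying an additive $O(NT\delta)$ in each telescoping term.
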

We postpone the proof to Appendix~\ref{Appendix:Proofs_5}.
\begin{remark}[On the inefficiency of SMC sampler with optimal proposal]\label{rem:inefficiency_SMC_optimal}
As mentioned in Remark~\ref{rem:adv_disadv_SMC_optimal}, the disadvantage of an SMC sampler with optimal proposal is that it takes too much cost to evaluate the incremental weight----requiring a Monte Carlo estimation of an empirical expectation to $O(\delta_{\rm TV}/T)$ accuracy. However, in a token level setting, things may be different, in the sense that the expectation is calculable within $O(|\mathcal S|)$ amount of time. Calculation of the time complexity within the token-level setting is a rather simple and straightforward corollary of Proposition~\ref{thm:number_of_particles_local_bias_optimal_proposal}, but is outside the scope of this paper.
\end{remark}

\clearpage
\section{Proofs in Section~\ref{sec:SMC} and Appendix~\ref{Appendix:optimal_proposal_SMC}}\label{Appendix:Proofs_5}
\paragraph{Notation for particles.} In the proof, we adopt the notation used by~\cite{del2011concentration}, where $\xi_t^{(N,i)}$ represents the $i$-th particle within the pool of $N$ particles at time step $t$. The evolution of the particles are exactly the same as in Section~\ref{sec:prelim-fk}.
\paragraph{Dobrushin (ergodic) coefficient and tensor product.}
Let $M:(E,\mathcal E)\to (F,\mathcal F)$ be a bounded integral operator with \emph{constant mass},
i.e., $M(1)(x)=M(1)(y)$ for all $x,y\in E$.
Following~\cite{del2011concentration}, define the \emph{Dobrushin coefficient}
\[
\beta(M)\;:=\;\sup_{f\in \mathrm{Osc}_1(F)} \mathrm{osc}(M f),
\qquad
\mathrm{osc}(f):=\sup f-\inf f.
\]
In particular, for a Markov kernel $P$ (so $P(1)=1$), $\beta(P)$ is also called the
\emph{Dobrushin ergodic coefficient}.

For finite signed measures $\mu,\nu$ on $(E,\mathcal E)$, define the tensor product measure
$\mu\otimes \nu$ on $(E\times E,\mathcal E\otimes \mathcal E)$ by
\[
(\mu\otimes \nu)(h)\;:=\;\iint h(x,y)\,\mu(dx)\,\nu(dy),
\]
for any bounded measurable $h$ on $E\times E$.
For measurable functions $g,h$ on $E$, we write $(g\otimes h)(x,y):=g(x)h(y)$.

\begin{lemma}[FK first-order expansion, remainder, and stability (Appendix A.3 of~\cite{del2011concentration})]\label{lem:FK_constants_control}
Consider a Feynman--Kac model with Markov kernels $(M_k)_{k\ge 1}$ and potentials $(G_k)_{k\ge 0}$.
Define
\[
Q_k(f)(x):=G_{k-1}(x)\,M_k(f)(x),\qquad
Q_{p,n}:=Q_{p+1}\cdots Q_n,\qquad Q_{n,n}:=\mathrm{Id},
\]
and the normalized FK flow
\[
\Phi_{p,n}(\mu)(f):=\frac{\mu(Q_{p,n}(f))}{\mu(Q_{p,n}(1))}.
\]
Let
\[
P_{p,n}(f)(x):=\frac{Q_{p,n}(f)(x)}{Q_{p,n}(1)(x)},
\qquad
G_{p,n,\eta}(x):=\frac{Q_{p,n}(1)(x)}{\eta(Q_{p,n}(1))},
\qquad
q_{p,n}:=\sup_{x,y\in E_p}\frac{Q_{p,n}(1)(x)}{Q_{p,n}(1)(y)}.
\]

\smallskip
\noindent\textbf{(i) First-order (linear) operator.}
For any $\eta\in\mathcal P(E_p)$, define the linear operator $D_\eta\Phi_{p,n}$ by
\begin{equation}\label{eq:D_eta_Phi_def}
D_\eta\Phi_{p,n}(f)(x)
\;:=\;
G_{p,n,\eta}(x)\,P_{p,n}\!\bigl(f-\Phi_{p,n}(\eta)(f)\bigr)(x).
\end{equation}
It is the first-order (Gateaux/Fr\'echet) linearization of the map $\mu\mapsto \Phi_{p,n}(\mu)$ at $\eta$:
for any finite signed measure $\nu$ with $\nu(1)=0$,
\[
\nu\bigl(D_\eta\Phi_{p,n}(f)\bigr)
\;=\;
\left.\frac{d}{d\epsilon}\,\Phi_{p,n}(\eta+\epsilon \nu)(f)\right|_{\epsilon=0}.
\]

\smallskip
\noindent\textbf{(ii) First-order decomposition + explicit second-order remainder.}
For any $\mu,\eta\in\mathcal P(E_p)$ and bounded measurable $f$ on $E_n$,
\begin{equation}\label{eq:FK_first_order_plus_remainder}
\Phi_{p,n}(\mu)(f)-\Phi_{p,n}(\eta)(f)
\;=\;
(\mu-\eta)\bigl(D_\eta\Phi_{p,n}(f)\bigr)
\;+\;
R^{\Phi_{p,n}}(\mu,\eta)(f),
\end{equation}
where the remainder admits the explicit tensor form
\begin{equation}\label{eq:FK_remainder_tensor}
R^{\Phi_{p,n}}(\mu,\eta)(f)
\;:=\;
-\frac{1}{\mu(G_{p,n,\eta})}\,
(\mu-\eta)^{\otimes 2}\!\Bigl(G_{p,n,\eta}\otimes D_\eta\Phi_{p,n}(f)\Bigr).
\end{equation}

\smallskip
\noindent\textbf{(iii) Dobrushin stability.}
For any $f\in \mathrm{Osc}_1(E_n)$,
\begin{equation}\label{eq:beta_DPhi_bound}
\beta(D\Phi_{p,n})\;:=\;\sup_{\eta\in\mathcal P(E_p)}\beta\!\bigl(D_\eta\Phi_{p,n}\bigr)
\;\le\; 2\,q_{p,n}\,\beta(P_{p,n})\;\le\;2\,q_{p,n}.
\end{equation}
\end{lemma}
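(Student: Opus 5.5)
We verify the three parts (i)--(iii) of Lemma~\ref{lem:FK_constants_control} by direct computation with the unnormalized semigroup, following Appendix A.3 of~\cite{del2011concentration}. Two observations do most of the work. First, $\Phi_{p,n}(\mu)(f)=\mu(G_{p,n,\eta}P_{p,n}f)/\mu(G_{p,n,\eta})$ for any fixed reference $\eta$, because $Q_{p,n}f = Q_{p,n}(1)\,P_{p,n}f$ and the factor $\eta(Q_{p,n}(1))$ cancels between numerator and denominator. Second, $\eta(G_{p,n,\eta})=1$ and $\eta(G_{p,n,\eta}P_{p,n}f)=\Phi_{p,n}(\eta)(f)$, so $\eta\bigl(D_\eta\Phi_{p,n}(f)\bigr)=0$.

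\textbf{Steps (i) and (ii).} Write $\bar f := f-\Phi_{p,n}(\eta)(f)$ and $h:=D_\eta\Phi_{p,n}(f)=G_{p,n,\eta}P_{p,n}\bar f$. Since $P_{p,n}$ is Markov, $P_{p,n}\bar f=P_{p,n}f-\Phi_{p,n}(\eta)(f)$, and hence
\[
\Phi_{p,n}(\mu)(f)-\Phi_{p,n}(\eta)(f)=\frac{\mu(h)}{\mu(G_{p,n,\eta})}.
\]
Using $\eta(h)=0$ and $\eta(G_{p,n,\eta})=1$, we rewrite this exactly as
\[
\frac{(\mu-\eta)(h)}{\mu(G_{p,n,\eta})}
=(\mu-\eta)(h)-\frac{\bigl(\mu(G_{p,n,\eta})-1\bigr)(\mu-\eta)(h)}{\mu(G_{p,n,\eta})}.
\]
Because $\mu(G_{p,n,\eta})-1=(\mu-\eta)(G_{p,n,\eta})$, the product in the last term equals $(\mu-\eta)^{\otimes2}(G_{p,n,\eta}\otimes h)$. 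This gives \eqref{eq:FK_first_order_plus_remainder} and \eqref{eq:FK_remainder_tensor}.

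For (i), substitute $\mu=\eta+\epsilon\nu$ with $\nu(1)=0$. The remainder is $O(\epsilon^2)$, since $\mu(G_{p,n,\eta})\to1$ and the tensor term is quadratic in $\epsilon$. Differentiating at $\epsilon=0$ therefore yields $\nu(h)$.

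\textbf{Step (iii), the main obstacle.} Here we must make sense of $\beta$ for $D_\eta\Phi_{p,n}$, which does not have constant mass. The plan is to bound its action on signed measures of zero mass. For $\mathrm{osc}(f)\le1$ and any $x,y\in E_p$, consider $(\delta_x-\delta_y)(h)$. Write
\[
G(x)P\bar f(x)-G(y)P\bar f(y)=G(x)\bigl(P\bar f(x)-P\bar f(y)\bigr)+\bigl(G(x)-G(y)\bigr)P\bar f(y).
\]
We bound the two terms separately.

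For the first term: $\sup_x G_{p,n,\eta}(x)\le q_{p,n}$, because $\eta(Q_{p,n}(1))\ge\inf Q_{p,n}(1)$. Also $|P\bar f(x)-P\bar f(y)|\le\beta(P_{p,n})$. The first term is thus at most $q_{p,n}\beta(P_{p,n})$.

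For the second term we cannot use a crude bound. Instead we choose the centering to kill the leading contribution: note that $\Phi_{p,n}(\eta)(f)$ is a $G$-weighted $\eta$-average of $P_{p,n}f$. Hence $|P\bar f(y)|\le \sup_z|P_{p,n}f(y)-P_{p,n}f(z)|\le\beta(P_{p,n})$, and $|G(x)-G(y)|\le q_{p,n}$. The second term is therefore also at most $q_{p,n}\beta(P_{p,n})$.

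Summing the two terms gives $\mathrm{osc}(D_\eta\Phi_{p,n}f)\le 2q_{p,n}\beta(P_{p,n})$, uniformly in $\eta$. Finally, $\beta(P_{p,n})\le1$ is trivial, which completes \eqref{eq:beta_DPhi_bound}.
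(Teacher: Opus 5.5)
Your proof is correct and follows the paper's argument essentially step for step: the exact quotient identity $\Phi_{p,n}(\mu)(f)-\Phi_{p,n}(\eta)(f)=(\mu-\eta)(D_\eta\Phi_{p,n}f)/\mu(G_{p,n,\eta})$, the rewriting $1/\mu(G_{p,n,\eta})-1=-(\mu-\eta)(G_{p,n,\eta})/\mu(G_{p,n,\eta})$ that produces the tensor remainder, and the same two-term oscillation split using $\|G_{p,n,\eta}\|_\infty\le q_{p,n}$, $\mathrm{osc}(G_{p,n,\eta})\le q_{p,n}$ and $\|P_{p,n}f-\Phi_{p,n}(\eta)(f)\|_\infty\le\beta(P_{p,n})$. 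Two small remarks: you additionally justify the derivative claim in (i), which the paper leaves implicit; and $D_\eta\Phi_{p,n}$ does have constant mass, namely $D_\eta\Phi_{p,n}(1)=0$, so the paper's definition of $\beta$ applies to it directly and your extra interpretation is unnecessary.
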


\begin{proof}
Fix $p\le n$, $\mu,\eta\in\mathcal P(E_p)$ and a bounded measurable $f$ on $E_n$.
Write for brevity
\[
g:=Q_{p,n}(1),\qquad h:=Q_{p,n}(f),\qquad
d:=\eta(g),\qquad
G:=G_{p,n,\eta}=\frac{g}{d},\qquad
P:=P_{p,n}=\frac{Q_{p,n}(\cdot)}{Q_{p,n}(1)}=\frac{\cdot}{g}.
\]
Note that $\eta(G)=1$ and $\mu(G)=\mu(g)/d>0$.
Also set $\phi:=\Phi_{p,n}(\eta)(f)=\eta(h)/d=\eta(G\,P(f))$.

\paragraph{Step 1: exact quotient identity.}
By definition,
\[
\Phi_{p,n}(\mu)(f)-\Phi_{p,n}(\eta)(f)
=
\frac{\mu(h)}{\mu(g)}-\frac{\eta(h)}{\eta(g)}
=
\frac{\mu(h)d-\eta(h)\mu(g)}{\mu(g)d}.
\]
Using $h=g\,P(f)$ and $\eta(h)=d\,\phi$, we obtain
\[
\mu(h)d-\eta(h)\mu(g)
=
d\,\mu(gP(f))-\mu(g)\,d\,\phi
=
d\,\mu\!\bigl(g(P(f)-\phi)\bigr)
=
d^2\,\mu\!\bigl(G\,P(f-\phi)\bigr).
\]
Similarly,
\[
\eta(h)d-\eta(h)\eta(g)=d\,\eta(gP(f)) - d^2\phi = 0,
\]
so $d^2\,\mu(GP(f-\phi))=d^2(\mu-\eta)(GP(f-\phi))$.
Therefore
\[
\Phi_{p,n}(\mu)(f)-\Phi_{p,n}(\eta)(f)
=
\frac{d^2}{\mu(g)d}\,(\mu-\eta)\!\bigl(G\,P(f-\phi)\bigr)
=
\frac{1}{\mu(G)}\,(\mu-\eta)\!\bigl(D_\eta\Phi_{p,n}(f)\bigr),
\]
where we used the definition
$D_\eta\Phi_{p,n}(f)=G\,P(f-\Phi_{p,n}(\eta)(f))=G\,P(f-\phi)$.
This proves the (exact) identity
\begin{equation}\label{eq:FK_exact_identity_in_proof}
\Phi_{p,n}(\mu)(f)-\Phi_{p,n}(\eta)(f)
=
\frac{1}{\mu(G)}\,(\mu-\eta)\!\bigl(D_\eta\Phi_{p,n}(f)\bigr).
\end{equation}

\paragraph{Step 2: first-order decomposition + explicit remainder.}
Since $\eta(G)=1$, we have
\[
\frac{1}{\mu(G)}-1=\frac{1-\mu(G)}{\mu(G)}=-\frac{(\mu-\eta)(G)}{\mu(G)}.
\]
Plugging this into \eqref{eq:FK_exact_identity_in_proof} yields
\[
\Phi_{p,n}(\mu)(f)-\Phi_{p,n}(\eta)(f)
=
(\mu-\eta)\!\bigl(D_\eta\Phi_{p,n}(f)\bigr)
-\frac{(\mu-\eta)(G)}{\mu(G)}\,(\mu-\eta)\!\bigl(D_\eta\Phi_{p,n}(f)\bigr).
\]
By the tensor-product convention
$(\mu-\eta)^{\otimes2}(G\otimes u)=(\mu-\eta)(G)\,(\mu-\eta)(u)$,
the last term is exactly
\[
-\frac{1}{\mu(G)}(\mu-\eta)^{\otimes2}\!\Bigl(G\otimes D_\eta\Phi_{p,n}(f)\Bigr),
\]
which is \eqref{eq:FK_remainder_tensor}. This proves
\eqref{eq:FK_first_order_plus_remainder}--\eqref{eq:FK_remainder_tensor}.

\paragraph{Step 3: Dobrushin stability.}
Let $f\in\mathrm{Osc}_1(E_n)$ and keep $\phi=\Phi_{p,n}(\eta)(f)$.
Since $P$ is a Markov operator, $P(f)$ takes values in the convex hull of $f$,
hence $\mathrm{osc}(P(f))\le \beta(P)\,\mathrm{osc}(f)\le \beta(P)$ and
$\|P(f)-c\|_\infty\le \mathrm{osc}(P(f))$ for any constant $c$.
Using $\phi=\eta(GP(f))$, we can write for any $x\in E_p$,
\[
D_\eta\Phi_{p,n}(f)(x)=G(x)\Bigl(P(f)(x)-\eta(GP(f))\Bigr).
\]
Therefore,
\[
\bigl\|D_\eta\Phi_{p,n}(f)\bigr\|_\infty
\le \|G\|_\infty\,\bigl\|P(f)-\eta(GP(f))\bigr\|_\infty
\le \|G\|_\infty\,\mathrm{osc}(P(f))
\le q_{p,n}\,\beta(P),
\]
since $\|G\|_\infty=\sup_x g(x)/\eta(g)\le \sup_x g(x)/\inf_y g(y)=q_{p,n}$.

Next, for any $x,x'\in E_p$,
\begin{align*}
&\bigl|D_\eta\Phi_{p,n}(f)(x)-D_\eta\Phi_{p,n}(f)(x')\bigr| \\
&\quad\le
|G(x)-G(x')|\,\bigl|P(f)(x)-\eta(GP(f))\bigr|
+\|G\|_\infty\,|P(f)(x)-P(f)(x')|.
\end{align*}
Taking sup over $x,x'$ and using
$\|P(f)-\eta(GP(f))\|_\infty\le \mathrm{osc}(P(f))\le \beta(P)$,
$\mathrm{osc}(P(f))\le \beta(P)$, $\|G\|_\infty\le q_{p,n}$, and
\[
\mathrm{osc}(G)=\sup_{x,x'}\Bigl|\frac{g(x)}{\eta(g)}-\frac{g(x')}{\eta(g)}\Bigr|
=\frac{\sup g-\inf g}{\eta(g)}
\le \frac{\sup g-\inf g}{\inf g}
\le q_{p,n},
\]
we get
\[
\mathrm{osc}\bigl(D_\eta\Phi_{p,n}(f)\bigr)
\le \mathrm{osc}(G)\,\beta(P)+\|G\|_\infty\,\beta(P)
\le 2\,q_{p,n}\,\beta(P).
\]
Since this holds for all $f\in\mathrm{Osc}_1(E_n)$, it follows that
$\beta(D_\eta\Phi_{p,n})\le 2q_{p,n}\beta(P)$, and taking the supremum over $\eta$
gives \eqref{eq:beta_DPhi_bound}. Finally, as $P$ is a Markov kernel, $\beta(P)\le 1$,
so $\beta(D\Phi_{p,n})\le 2q_{p,n}$.
\end{proof}

\begin{lemma}[Control of FK ratio and first-order stability]\label{lem:control_constants}
Under Assumption~\ref{assump:reward_model_ratio_bound} and
Assumption~\ref{assump:bellman_error_uniform_bound}, consider:

\smallskip
\noindent\textbf{(a) Naive proposal.}
$M_t(s_{1:t-1},ds_{1:t})=\pi_{\rm ref}(s_t|s_0,s_{1:t-1})$,
$G_t(s_{1:t})=\dfrac{\hat V(s_0,s_{1:t})}{\hat V(s_0,s_{1:t-1})}$.

\smallskip
\noindent\textbf{(b) Optimal proposal.}
$M_t(s_{1:t-1},ds_{1:t})
=\dfrac{\pi_{\rm ref}(s_t|s_0,s_{1:t-1}) \hat V(s_0,s_{1:t})}
{\mathbb{E}_{s_{t}\sim\pi_{\text{ref}}(\cdot|s_0,s_{1:t-1})}[\hat V(s_0,s_{1:t})]}$,
$G_t(s_{1:t})
=\dfrac{\mathbb{E}_{s_{t}\sim\pi_{\text{ref}}(\cdot|s_0,s_{1:t-1})}[\hat V(s_0,s_{1:t})]}
{\hat V(s_0,s_{1:t-1})}$.

Then for any $n\ge m$,
\begin{enumerate}
\item In case (a): $q_{m,n}\le L^2(1+\varepsilon)^{2(n-m-1)}$ for $n>m$, and $q_{m,m}=1$.
\item In case (b): $q_{m,n}\le (1+\varepsilon)^{2(n-m)}$ for $n>m$, and $q_{m,m}=1$.
\item In both cases: $\beta(D\Phi_{m,n})\le 2\,q_{m,n}$.
\end{enumerate}
\end{lemma}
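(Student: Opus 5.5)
Part 3 follows from Lemma~\ref{lem:FK_constants_control}(iii), since $\beta(P_{m,n})\le1$. So the real content is bounding
\[
q_{m,n}=\sup_{x,y}\frac{Q_{m,n}(1)(x)}{Q_{m,n}(1)(y)} .
\]
I will compute $Q_{m,n}(1)$ in closed form for each proposal and then apply the Bellman-error assumption to the resulting one-step conditional expectations. The case $q_{m,m}=1$ is trivial because $Q_{m,m}=\mathrm{Id}$.

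\textbf{Naive proposal (a).} Unrolling $Q_{m,n}(1)(s_{1:m})=G_m(s_{1:m})\,M_{m+1}(G_{m+1}M_{m+2}(\cdots))$ and using $M_k=\pi_{\rm ref}$, the product of potentials telescopes:
\[
Q_{m,n}(1)(s_{1:m})=\mathbb E_{\pi_{\rm ref}}\Big[\frac{\hat V(s_0,s_{1:n-1})}{\hat V(s_0,s_{1:m-1})}\,\Big|\,s_{1:m}\Big].
\]
The last index reached is governed by the convention $Q_k=G_{k-1}M_k$, so the potentials run from $G_m$ to $G_{n-1}$; I will check this carefully.

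Next, iterate the upper and lower Bellman bounds of Assumption~\ref{assump:bellman_error_uniform_bound} from the inside out. Each conditional expectation step costs a factor in $[(1+\varepsilon)^{-1},1+\varepsilon]$. After $n-1-m$ steps this gives
\[
\mathbb E[\hat V(s_{1:n-1})\mid s_{1:m}]\in \hat V(s_{1:m})\,\big[(1+\varepsilon)^{-(n-m-1)},(1+\varepsilon)^{n-m-1}\big].
\]
Hence
\[
Q_{m,n}(1)(s_{1:m})\in G_m(s_{1:m})\,\big[(1+\varepsilon)^{-(n-m-1)},(1+\varepsilon)^{n-m-1}\big].
\]
Assumption~\ref{assump:reward_model_ratio_bound} gives $G_m\in[L^{-1},L]$. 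Taking the ratio at $x$ and $y$ yields $q_{m,n}\le L^2(1+\varepsilon)^{2(n-m-1)}$.

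\textbf{Optimal proposal (b).} Set $Z_k(s_{1:k-1}):=\mathbb E_{s_k\sim\pi_{\rm ref}}\hat V(s_0,s_{1:k})$. Then $M_k$ tilts by $\hat V(s_{1:k})/Z_k$, and $G_k=Z_k/\hat V(s_{1:k-1})$. Multiplying the tilt densities and potentials, the normalizers $Z_k$ cancel in a telescope. What remains is
\[
Q_{m,n}(1)(s_{1:m})=\frac{Z_m(s_{1:m-1})}{\hat V(s_{1:m-1})}\cdot\frac{\mathbb E_{\pi_{\rm ref}}[Z_n(s_{1:n-1})\mid s_{1:m}]}{\hat V(s_{1:m})},
\]
up to the exact endpoint indexing. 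The first factor lies in $[(1+\varepsilon)^{-1},1+\varepsilon]$ by the Bellman assumption; the important point is that it contains no $L$. Applying the same iterated Bellman argument to the second factor bounds it within $(1+\varepsilon)^{\pm(n-m)}$ overall, which gives $q_{m,n}\le(1+\varepsilon)^{2(n-m)}$.

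\textbf{Main obstacle.} I expect the difficulty to be bookkeeping rather than anything conceptual. I need to pin down precisely which potentials appear in $Q_{m,n}$ (the offset between $G_{k-1}$ and $M_k$, and the terminal identity step $M_{T+1}=\mathrm{Id}$). I also need to verify the telescoping in case (b), so that the exponent is exactly $n-m-1$ in case (a) and $n-m$ in case (b). If the counts come out differently, the bound will still hold with a looser exponent, which is harmless for the downstream use.
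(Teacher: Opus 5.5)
Your argument for case (a) and for part 3 is essentially the paper's proof. The paper shows $L^{-1}(1+\varepsilon)^{-(n-m-1)}\le Q_{m,n}(1)\le L(1+\varepsilon)^{n-m-1}$ by induction on $n$, peeling off the last potential with the tower property. That is the same inside-out iteration of the Bellman bound you apply to the telescoped product $\hat V(s_0,s_{1:n-1})/\hat V(s_0,s_{1:m-1})$, and the paper then quotes Lemma~\ref{lem:FK_constants_control}(iii) exactly as you do.

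For case (b), which the paper only says ``follows from a similar argument,'' your closed form is off by one index. Because $M_n(1)=1$, only $s_{m+1},\dots,s_{n-1}$ are integrated. Working inside out, the tilts $\hat V/Z_k$ of $M_{m+1},\dots,M_{n-1}$ cancel against the potentials, so these kernels integrate as plain $\pi_{\rm ref}$ conditionals, and one gets
\[
Q_{m,n}(1)(s_{1:m})=\frac{Z_m(s_{1:m-1})}{\hat V(s_0,s_{1:m-1})}\cdot\frac{\mathbb E_{\pi_{\rm ref}}\bigl[\hat V(s_0,s_{1:n-1})\mid s_{1:m}\bigr]}{\hat V(s_0,s_{1:m})}.
\]
Your numerator instead has $\mathbb E_{\pi_{\rm ref}}[Z_n(s_{1:n-1})\mid s_{1:m}]=\mathbb E_{\pi_{\rm ref}}[\hat V(s_0,s_{1:n})\mid s_{1:m}]$. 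Carried through honestly, your formula only gives $q_{m,n}\le(1+\varepsilon)^{2(n-m+1)}$, because its second factor alone already costs $n-m$ Bellman steps. The corrected formula costs $n-m-1$ steps plus one for the first factor, which gives exactly the stated $(1+\varepsilon)^{2(n-m)}$.

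You can also skip the telescoping in (b) altogether. Each potential $G_p(s_{1:p})=Z_p(s_{1:p-1})/\hat V(s_0,s_{1:p-1})$ is itself a one-step Bellman ratio. Hence $\prod_{p=m}^{n-1}G_p\in[(1+\varepsilon)^{-(n-m)},(1+\varepsilon)^{n-m}]$ pointwise, and the bound on $q_{m,n}$ follows immediately without tracking the tilted kernels.
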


\begin{proof}
For $q_{m,n}$, we prove the naive proposal case, and the optimal proposal case follows from a similar argument.

When $m=n$, according to the definition $Q_{m,n}=\mathrm{Id}$, so $q_{m,n}=\sup_{x,y}\dfrac{Q_{m,n}(1)(x)}{Q_{m,n}(1)(y)}=1$.

{

\noindent\textbf{Intuition for $Q_{m,n}(1)(x)$.}
The quantity $Q_{m,n}(1)(x)$ can be interpreted as the \emph{lookahead-to-go} normalizing factor starting
from the current prefix/state $x\in E_m$: it is the conditional expectation (under the proposal dynamics
from time $m$ to $n-1$) of the product of future incremental potentials,
\[
Q_{m,n}(1)(x)
\;=\;
\mathbb E\!\left[\;\prod_{t=m}^{n-1} G_t(S_{1:t})\ \Big|\ S_{1:m}=x\right].
\]
Its variability across different $x$ can be understood via two effects.
First, the \emph{future} contribution
$\mathbb E[\prod_{t=m+1}^{n-1}G_t \mid S_{1:m}=x]$ is ``smoothed'' by the conditional expectation; under a
small Bellman error assumption, the corresponding one-step lookahead ratios are uniformly close to $1$,
so this future term is close (almost everywhere) to a constant and hence becomes nearly independent of $x$.
Second, the \emph{current} incremental potential $G_m(x)$ is $\sigma(S_{1:m})$-measurable and therefore not
smoothed---its fluctuation depends strongly on the proposal choice (typically larger under the naive proposal
where $G_m=\hat V(s_0,s_{1:m})/\hat V(s_0,s_{1:m-1})$, and smaller under the optimal proposal where $G_m$ is a
conditional-expectation ratio).
}

When $n>m$, we use induction to show that $L^{-1}\cdot(1+\varepsilon)^{-(n-m-1)}\leq Q_{m,n}(1)(x)\leq L\cdot(1+\varepsilon)^{n-m-1}$. Recall that for all $x\in\mathcal S_{1:m}$,
\begin{align*}
Q_{m,n}(1)(x)=\mathbb E\left[\prod_{p=m}^{n-1}G_p(X_p)\bigg|X_m=x\right].
\end{align*}

We first prove that $Q_{m,m+1}(1)(x)\in[L^{-1},L]$, which is obvious. Then assume $Q_{m,n}(1)(x)\in[L^{-1}\cdot(1+\varepsilon)^{-(n-m-1)},L\cdot(1+\varepsilon)^{n-m-1}]$, we need to prove that $Q_{m,n+1}(1)(x)\in[L^{-1}\cdot(1+\varepsilon)^{-(n-m)},L\cdot(1+\varepsilon)^{n-m}]$. This is because
\begin{align*}
Q_{m,n+1}(1)(x)=&\;\mathbb E\left[\prod_{p=m}^{n}G_p(X_p)\bigg|X_m=x\right]=\mathbb E\left[\mathbb E[G_n(X_n)|X_{n-1}]\prod_{p=m}^{n-1}G_p(X_p)\bigg|X_m=x\right]\\
\leq&\;\sup\mathbb E[G_n(X_n)|X_{n-1}]\cdot\mathbb E\left[\prod_{p=m}^{n-1}G_p(X_p)\bigg|X_m=x\right]\\
=&\;\sup\bigg\{\dfrac{\mathbb{E}_{s_{t}\sim\pi_{\text{ref}}(\cdot|s_0,s_{1:t-1})}[\hat V(s_0,s_{1:t})]}{\hat V(s_0,s_{1:t-1})}\bigg\}\cdot Q_{m,n}(1)(x)\\
\leq&\;(1+\varepsilon)\cdot L\cdot(1+\varepsilon)^{n-m-1}=L\cdot(1+\varepsilon)^{n-m}.
\end{align*}
Similarly for the other direction,
\begin{align*}
Q_{m,n+1}(1)(x)\geq&\;\inf\mathbb E[G_n(X_n)|X_{n-1}]\cdot\mathbb E\left[\prod_{p=m}^{n-1}G_p(X_p)\bigg|X_m=x\right]\\
=&\; \inf\bigg\{\dfrac{\mathbb{E}_{s_{t}\sim\pi_{\text{ref}}(\cdot|s_0,s_{1:t-1})}[\hat V(s_0,s_{1:t})]}{\hat V(s_0,s_{1:t-1})}\bigg\}\cdot Q_{m,n}(1)(x)\\
\geq&\;(1+\varepsilon)^{-1}\cdot L^{-1}\cdot(1+\varepsilon)^{-(n-m-1)}=L^{-1}\cdot(1+\varepsilon)^{-(n-m)}.
\end{align*}
and that finishes the induction. Then we conclude that $q_{m,n}=\sup_{x,y}\dfrac{Q_{m,n}(1)(x)}{Q_{m,n}(1)(y)}\leq L^2\cdot(1+\varepsilon)^{2(n-m-1)}$ for $n>m$.

Finally, by Lemma~\ref{lem:FK_constants_control} and \eqref{eq:beta_DPhi_bound},
\[
\beta(D\Phi_{m,n})\le 2\,q_{m,n}\,\beta(P_{m,n})\le 2\,q_{m,n},
\]
since $P_{m,n}$ is a Markov kernel and hence $\beta(P_{m,n})\le 1$.

\end{proof}

\begin{lemma}[Random telescoping identity]
\label{lem:random_telescoping}
For any $n\ge 1$ and any bounded measurable test function $f$ on $E_n$,
\begin{equation}
\label{eq:random_telescoping}
[\eta_n^N-\eta_n](f)
=
\sum_{p=1}^n
\Big(
\Phi_{p,n}(\eta_p^N)-\Phi_{p,n}(\Phi_p(\eta_{p-1}^N))
\Big)(f).
\end{equation}
Consequently,
\begin{equation}
\label{eq:bias_telescoping_random}
[\bar\eta_n^N-\eta_n](f)
=
\sum_{p=1}^n
\mathbb E\Big[
\big(
\Phi_{p,n}(\eta_p^N)-\Phi_{p,n}(\Phi_p(\eta_{p-1}^N))
\big)(f)
\Big],
\qquad
\bar\eta_n^N:=\mathbb E[\eta_n^N].
\end{equation}
\end{lemma}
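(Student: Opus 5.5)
The identity is a standard telescoping sum. The plan is to insert the intermediate "partially exact" quantities $\Phi_{p,n}(\eta_p^N)$ for $p=0,\dots,n$ and use the semigroup property of the normalized FK flow, as in~\cite{del2011concentration}. The two facts needed are the semigroup relation $\Phi_{p,n}=\Phi_{p+1,n}\circ\Phi_{p+1}$ (equivalently $\Phi_{p-1,n}=\Phi_{p,n}\circ\Phi_p$) and the fixed-point property $\Phi_{p,n}(\eta_p)=\eta_n$. Both follow directly from the definitions $Q_{p,n}=Q_{p+1}\cdots Q_n$ and $\Phi_{p,n}(\mu)(f)=\mu(Q_{p,n}f)/\mu(Q_{p,n}1)$. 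For the composition one computes
\[
\Phi_{p,n}(\Phi_p(\mu))(f)=\frac{\mu(Q_pQ_{p,n}f)/\mu(Q_p1)}{\mu(Q_pQ_{p,n}1)/\mu(Q_p1)}=\Phi_{p-1,n}(\mu)(f),
\]
where the normalizations cancel.

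Next I will write the telescoping sum
\[
\eta_n^N-\Phi_{0,n}(\eta_0^N)=\sum_{p=1}^n\big(\Phi_{p,n}(\eta_p^N)-\Phi_{p-1,n}(\eta_{p-1}^N)\big),
\]
using $\Phi_{n,n}=\mathrm{Id}$. I then replace each $\Phi_{p-1,n}(\eta_{p-1}^N)$ by $\Phi_{p,n}(\Phi_p(\eta_{p-1}^N))$ via the semigroup identity, which gives exactly the summands in \eqref{eq:random_telescoping}.

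The one point of care is the initial term $\Phi_{0,n}(\eta_0^N)$. I will handle it by the convention matching the statement. Under the convention that the $p=1$ term uses $\Phi_1(\eta_0^N):=\eta_1$ (the initial law is exact, as in Algorithm~\ref{alg:fk_smc_eta1}, where particles start from $\eta_1$ and there is no earlier empirical measure), we get $\Phi_{1,n}(\Phi_1(\eta_0^N))=\Phi_{1,n}(\eta_1)=\eta_n$. The telescoping then starts from $\eta_n$, and the $p=1$ term captures the initial sampling error $\Phi_{1,n}(\eta_1^N)-\Phi_{1,n}(\eta_1)$. Alternatively, if the flow starts at $\eta_0$ with $\eta_0^N$ taken as $\eta_0$, the same cancellation holds. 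I expect this convention check to be the only nontrivial point. Everything else is algebraic and holds pathwise for every realization of the particle system.

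Finally, \eqref{eq:bias_telescoping_random} follows by taking expectations of both sides. I will use linearity, noting that $\eta_n(f)$ is deterministic and that all terms are bounded: $f$ is bounded and each $\Phi_{p,n}(\cdot)(f)$ is an average of $f$, since $Q_{p,n}(1)>0$ by Assumption~\ref{assump:reward_model_ratio_bound}. This justifies interchanging expectation and the finite sum, and gives $\mathbb E[\eta_n^N(f)]=\bar\eta_n^N(f)$.
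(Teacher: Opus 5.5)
Your proposal is correct and follows the paper's proof. The paper likewise uses the semigroup identity $\Phi_{p-1,n}=\Phi_{p,n}\circ\Phi_p$ to rewrite each summand as $\Phi_{p,n}(\eta_p^N)-\Phi_{p-1,n}(\eta_{p-1}^N)$, telescopes using $\Phi_{n,n}=\mathrm{Id}$ and $\eta_0^N=\eta_0$, and then takes expectations; your explicit check of the composition rule and your treatment of the initial-time convention (exact start from $\eta_1$ versus $\eta_0^N=\eta_0$) simply make the same argument more careful.
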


\begin{proof}
The identity \eqref{eq:random_telescoping} is purely algebraic:
for each $p\ge 1$,
\[
\Phi_{p-1,n}=\Phi_{p,n}\circ \Phi_p,
\]
hence
\[
\Phi_{p,n}(\Phi_p(\eta_{p-1}^N))=\Phi_{p-1,n}(\eta_{p-1}^N).
\]
Summing $\Phi_{p,n}(\eta_p^N)-\Phi_{p-1,n}(\eta_{p-1}^N)$ over $p=1,\dots,n$
telescopes to $\eta_n^N(f)-\eta_n(f)$, since $\Phi_{n,n}=\mathrm{Id}$ and
$\eta_0^N=\eta_0$.
Taking expectations gives \eqref{eq:bias_telescoping_random}.
\end{proof}

\begin{lemma}[One-step conditional covariance under a McKean kernel]
\label{lem:one_step_covariance}
Fix $p\ge 1$ and let $\mathcal F_{p-1}^N$ be the $\sigma$--field generated by the particle system up to time $p-1$.
Assume that, conditionally on $\mathcal F_{p-1}^N$, the particles are updated independently via the McKean kernel
$K_{p,\eta_{p-1}^N}$, i.e.,
\[
\xi_p^{(N,i)} \ \big|\ \mathcal F_{p-1}^N \ \sim \ K_{p,\eta_{p-1}^N}\big(\xi_{p-1}^{(N,i)},\cdot\big),
\qquad i=1,\dots,N,
\]
and are conditionally independent across $i$.
Define the one-step predictive measure
\[
\nu_p^N := \Phi_p(\eta_{p-1}^N)=\eta_{p-1}^N K_{p,\eta_{p-1}^N}\in\mathcal P(E_p),
\]
where the second equality follows from the McKean consistency \eqref{eq:mckean-consistency}.
Then for any bounded measurable $u,v$ on $E_p$,
\begin{align}
\label{eq:cond_unbiased_general}
\mathbb E\Big([\eta_p^N-\nu_p^N](u)\,\Big|\,\mathcal F_{p-1}^N\Big)&=0,\\
\label{eq:cond_cov_general}
\mathbb E\Big([\eta_p^N-\nu_p^N](u)\,[\eta_p^N-\nu_p^N](v)\,\Big|\,\mathcal F_{p-1}^N\Big)
&=
\frac{1}{N}\,
\eta_{p-1}^N\!\Big(
K_{p,\eta_{p-1}^N}\big((u-K_{p,\eta_{p-1}^N}u)\,(v-K_{p,\eta_{p-1}^N}v)\big)
\Big).
\end{align}
In particular,
\begin{equation}
\label{eq:cond_second_moment_general}
\mathbb E\Big([\eta_p^N-\nu_p^N](u)^2\,\Big|\,\mathcal F_{p-1}^N\Big)
=
\frac{1}{N}\,
\eta_{p-1}^N\!\Big(
K_{p,\eta_{p-1}^N}\big((u-K_{p,\eta_{p-1}^N}u)^2\big)
\Big).
\end{equation}
Moreover,
\begin{equation}
\label{eq:cond_cov_osc_bound}
\Big|
\mathbb E\big([\eta_p^N-\nu_p^N](u)\,[\eta_p^N-\nu_p^N](v)\mid\mathcal F_{p-1}^N\big)
\Big|
\le \frac{1}{4N}\,\mathrm{osc}(u)\,\mathrm{osc}(v).
\end{equation}
\end{lemma}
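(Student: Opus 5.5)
The plan is to write $[\eta_p^N-\nu_p^N](u)$ as an average of conditionally i.i.d.-type centered terms and compute moments directly. Abbreviate $K:=K_{p,\eta_{p-1}^N}$; it is $\mathcal F_{p-1}^N$-measurable. Then
\[
[\eta_p^N-\nu_p^N](u)=\frac1N\sum_{i=1}^N\Big(u(\xi_p^{(N,i)})-Ku(\xi_{p-1}^{(N,i)})\Big).
\]
The identity $\frac1N\sum_i Ku(\xi_{p-1}^{(N,i)})=\eta_{p-1}^N K(u)=\nu_p^N(u)$ is exactly the McKean consistency \eqref{eq:mckean-consistency} applied with $\eta=\eta_{p-1}^N$. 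Define $Z_i(u):=u(\xi_p^{(N,i)})-Ku(\xi_{p-1}^{(N,i)})$. Conditionally on $\mathcal F_{p-1}^N$, the particle $\xi_p^{(N,i)}$ has law $K(\xi_{p-1}^{(N,i)},\cdot)$, so $\mathbb E[Z_i(u)\mid\mathcal F_{p-1}^N]=0$. Summing over $i$ gives \eqref{eq:cond_unbiased_general}.

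For the covariance, expand
\[
\mathbb E\Big[\frac1{N^2}\sum_{i,j}Z_i(u)Z_j(v)\,\Big|\,\mathcal F_{p-1}^N\Big].
\]
When $i\neq j$, conditional independence together with the zero conditional means kills the cross term. Only the diagonal survives, and each diagonal term equals $K\big((u-Ku(x))(v-Kv(x))\big)(x)$ evaluated at $x=\xi_{p-1}^{(N,i)}$. Averaging over $i$ produces $\frac1N\,\eta_{p-1}^N\big(K((u-Ku)(v-Kv))\big)$, with the usual convention that $Ku$ inside $K(\cdot)(x)$ is frozen at the starting point $x$. Setting $u=v$ gives \eqref{eq:cond_second_moment_general}.

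For the bound \eqref{eq:cond_cov_osc_bound}, apply Cauchy--Schwarz pointwise: the integrand $K((u-Ku)(v-Kv))(x)$ is a conditional covariance under the probability measure $K(x,\cdot)$. Its absolute value is therefore at most $\sqrt{\mathrm{Var}_{K(x,\cdot)}(u)\,\mathrm{Var}_{K(x,\cdot)}(v)}$. Popoviciu's inequality, which says a random variable with range in an interval of length $\mathrm{osc}(u)$ has variance at most $\mathrm{osc}(u)^2/4$, bounds this by $\frac14\mathrm{osc}(u)\,\mathrm{osc}(v)$. Averaging over the probability measure $\eta_{p-1}^N$ and multiplying by $1/N$ finishes the proof.

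The only delicate point is bookkeeping. The centering must be particle-wise through $Ku(\xi_{p-1}^{(N,i)})$, not through the global $\nu_p^N(u)$; otherwise the off-diagonal terms do not vanish. The McKean consistency step is what reconciles the two, so I will state that step explicitly. Everything else is routine.
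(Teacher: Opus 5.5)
Your proof is correct and follows the paper's argument essentially line for line: particle-wise centering through $K_{p,\eta_{p-1}^N}u(\xi_{p-1}^{(N,i)})$ justified by McKean consistency, cross terms killed by conditional independence, and a pointwise Cauchy--Schwarz plus the variance bound $\mathrm{osc}^2/4$, averaged over $\eta_{p-1}^N$. Your remark that $K_{p,\eta}u$ must be frozen at the starting point $x$ inside $K_{p,\eta}(\cdot)(x)$ is the right reading, since $u$ lives on $E_p$ while $K_{p,\eta}u$ lives on $E_{p-1}$; the paper's proof uses this convention implicitly when it treats the integrand as $\mathrm{Cov}_\kappa(u,v)$ for each $x$.
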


\begin{proof}
Conditionally on $\mathcal F_{p-1}^N$, we have
\[
\eta_p^N(u)=\frac1N\sum_{i=1}^N u(\xi_p^{(N,i)}),
\qquad
\nu_p^N(u)=\eta_{p-1}^N K_{p,\eta_{p-1}^N}(u)
=\frac1N\sum_{i=1}^N K_{p,\eta_{p-1}^N}u(\xi_{p-1}^{(N,i)}).
\]
Hence
\[
[\eta_p^N-\nu_p^N](u)
=\frac1N\sum_{i=1}^N\Big(u(\xi_p^{(N,i)})-K_{p,\eta_{p-1}^N}u(\xi_{p-1}^{(N,i)})\Big).
\]
Taking conditional expectation and using that
$\mathbb E[u(\xi_p^{(N,i)})\mid\mathcal F_{p-1}^N]=K_{p,\eta_{p-1}^N}u(\xi_{p-1}^{(N,i)})$
yields \eqref{eq:cond_unbiased_general}.

For the second identity, expand the product and use conditional independence across $i$:
all cross terms vanish because each summand has conditional mean $0$, giving
\begin{align*}
&\mathbb E\Big([\eta_p^N-\nu_p^N](u)\,[\eta_p^N-\nu_p^N](v)\,\Big|\,\mathcal F_{p-1}^N\Big)\\
&\qquad=
\frac{1}{N^2}\sum_{i=1}^N
\mathbb E\Big[
\big(u(\xi_p^{(N,i)})-K_{p,\eta_{p-1}^N}u(\xi_{p-1}^{(N,i)})\big)
\big(v(\xi_p^{(N,i)})-K_{p,\eta_{p-1}^N}v(\xi_{p-1}^{(N,i)})\big)
\ \Big|\ \mathcal F_{p-1}^N\Big]\\
&\qquad=
\frac{1}{N^2}\sum_{i=1}^N
K_{p,\eta_{p-1}^N}\big((u-K_{p,\eta_{p-1}^N}u)(v-K_{p,\eta_{p-1}^N}v)\big)\big(\xi_{p-1}^{(N,i)}\big),
\end{align*}
which is \eqref{eq:cond_cov_general}. Taking $u=v$ gives \eqref{eq:cond_second_moment_general}.

Finally, for any probability measure $\kappa$,
\[
\big|\mathrm{Cov}_\kappa(u,v)\big|
\le \sqrt{\mathrm{Var}_\kappa(u)\,\mathrm{Var}_\kappa(v)}
\le \frac14\,\mathrm{osc}(u)\,\mathrm{osc}(v),
\]
hence for each $x$,
\[
\big|K_{p,\eta}( (u-K_{p,\eta}u)(v-K_{p,\eta}v))(x)\big|
\le \frac14\,\mathrm{osc}(u)\,\mathrm{osc}(v).
\]
Averaging over $\eta_{p-1}^N$ and multiplying by $1/N$ gives \eqref{eq:cond_cov_osc_bound}.
\end{proof}

\begin{lemma}[Local bias comes only from the second-order remainder]
\label{lem:local_remainder_bias}
Under the same condition as Lemma~\ref{lem:one_step_covariance}.
Fix $n\ge 1$, $p\in\{1,\dots,n\}$, and $f\in\mathrm{Osc}_1(E_n)$.
Let $\nu_p^N=\Phi_p(\eta_{p-1}^N)$ as above and write $\beta_{p,n}:=\beta(D\Phi_{p,n})$.
Then
\begin{equation}
\label{eq:local_term_bias_bound}
\Big|
\mathbb E\Big[
\big(\Phi_{p,n}(\eta_p^N)-\Phi_{p,n}(\nu_p^N)\big)(f)
\Big]
\Big|
\le
\frac{1}{4N}\,(q_{p,n}^2-1)\,\beta_{p,n}.
\end{equation}
In particular, if $q_{p,n}=1$, then this local bias term is $0$.
\end{lemma}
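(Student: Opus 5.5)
We apply Lemma~\ref{lem:FK_constants_control}(ii) with $\mu=\eta_p^N$ and $\eta=\nu_p^N$, which is legitimate because $\nu_p^N$ is $\mathcal F_{p-1}^N$-measurable. This gives
\[
\big(\Phi_{p,n}(\eta_p^N)-\Phi_{p,n}(\nu_p^N)\big)(f)
=(\eta_p^N-\nu_p^N)\big(D_{\nu_p^N}\Phi_{p,n}(f)\big)
-\frac{1}{\eta_p^N(G_{p,n,\nu_p^N})}(\eta_p^N-\nu_p^N)^{\otimes2}\big(G_{p,n,\nu_p^N}\otimes D_{\nu_p^N}\Phi_{p,n}(f)\big).
\]
The function $D_{\nu_p^N}\Phi_{p,n}(f)$ is $\mathcal F_{p-1}^N$-measurable. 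By \eqref{eq:cond_unbiased_general}, the first-order term therefore has zero conditional expectation given $\mathcal F_{p-1}^N$, and hence zero expectation. So the whole bias is carried by the remainder term.

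For the remainder, the random denominator $\eta_p^N(G)$ is the obstacle to applying the covariance identity directly. We handle it with a uniform bound: $G:=G_{p,n,\nu_p^N}=Q_{p,n}(1)/\nu_p^N(Q_{p,n}(1))$ takes values in $[q_{p,n}^{-1},q_{p,n}]$, so $\eta_p^N(G)\ge q_{p,n}^{-1}$.

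Bounding the denominator out naively would lose the centering. Instead, we first subtract constants inside the tensor product, which leaves the remainder unchanged because $(\eta_p^N-\nu_p^N)(1)=0$. Write $(\eta_p^N-\nu_p^N)(G)=(\eta_p^N-\nu_p^N)(G-c)$ for a suitable constant $c$. We then bound
\[
\Big|\mathbb E\big[\tfrac{1}{\eta_p^N(G)}(\eta_p^N-\nu_p^N)(G)\,(\eta_p^N-\nu_p^N)(h)\big]\Big|,
\qquad h:=D_{\nu_p^N}\Phi_{p,n}(f),
\]
by $\sup(1/\eta_p^N(G))$ times the conditional covariance bound \eqref{eq:cond_cov_osc_bound}. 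For this step the product must be handled with Cauchy--Schwarz via \eqref{eq:cond_second_moment_general}, since the factor $1/\eta_p^N(G)$ prevents using the covariance identity exactly. This yields a factor $\frac{1}{4N}\mathrm{osc}(G)\,\mathrm{osc}(h)$.

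The two oscillations are controlled as follows. First, $\mathrm{osc}(h)\le\beta_{p,n}$ because $f\in\mathrm{Osc}_1(E_n)$. Second, $\mathrm{osc}(G)\le(\sup g-\inf g)/\nu_p^N(g)$ with $g=Q_{p,n}(1)$. Combining this with the denominator bound $1/\eta_p^N(G)=\nu_p^N(g)/\eta_p^N(g)\le \sup g/\inf g$ gives the product
\[
\frac{\sup g-\inf g}{\inf g}\cdot\frac{\sup g}{\inf g}\;\le\;(q_{p,n}-1)q_{p,n}\;\le\; q_{p,n}^2-1.
\]
Together these give \eqref{eq:local_term_bias_bound}.

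The step most likely to be fragile is this pairing of the random denominator with $\mathrm{osc}(G)$ so that the constant comes out exactly as $(q_{p,n}^2-1)$ rather than something larger. The fallback is to note that $(q-1)q\le q^2-1$ already suffices, so even a slightly lossy pairing is acceptable.

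The final statement follows immediately: if $q_{p,n}=1$, then $g$ is constant, so $G\equiv1$ and $(\eta_p^N-\nu_p^N)(G)=0$. The remainder then vanishes identically, and the local bias term is $0$.
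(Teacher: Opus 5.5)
Your proposal is correct and follows essentially the same route as the paper: expand at $\nu_p^N$ and kill the linear term by conditional unbiasedness. Then bound the remainder by pulling out the uniform bound on $1/\eta_p^N(G_{p,n,\nu_p^N})$ and applying conditional Cauchy--Schwarz with the $\frac{1}{4N}\mathrm{osc}^2$ second-moment bound, together with $\mathrm{osc}(D_{\nu_p^N}\Phi_{p,n}(f))\le\beta_{p,n}$. The differences are cosmetic: you pair the constants as $q_{p,n}(q_{p,n}-1)\le q_{p,n}^2-1$ where the paper uses $q_{p,n}(q_{p,n}-q_{p,n}^{-1})=q_{p,n}^2-1$, and your constant-subtraction step is vacuous because $(\eta_p^N-\nu_p^N)(1)=0$ already makes it automatic.
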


\begin{proof}
Apply the first-order expansion (Lemma~\ref{lem:FK_constants_control}) to $\Phi_{p,n}$ at
$\eta=\nu_p^N$ and $\mu=\eta_p^N$:
\[
\big(\Phi_{p,n}(\eta_p^N)-\Phi_{p,n}(\nu_p^N)\big)(f)
=
[\eta_p^N-\nu_p^N]\big(D_{\nu_p^N}\Phi_{p,n}(f)\big)
+
R^{\Phi_{p,n}}(\eta_p^N,\nu_p^N)(f).
\]
Taking conditional expectation given $\mathcal F_{p-1}^N$ and using \eqref{eq:cond_unbiased_general}
kills the linear term, hence
\[
\mathbb E\Big[
\big(\Phi_{p,n}(\eta_p^N)-\Phi_{p,n}(\nu_p^N)\big)(f)\,\Big|\,\mathcal F_{p-1}^N
\Big]
=
\mathbb E\big[ R^{\Phi_{p,n}}(\eta_p^N,\nu_p^N)(f)\mid \mathcal F_{p-1}^N\big].
\]
Now use the explicit Feynman--Kac remainder form \eqref{eq:FK_remainder_tensor}:
\[
R^{\Phi_{p,n}}(\eta_p^N,\nu_p^N)(f)
=
-\frac{1}{\eta_p^N(G_{p,n,\nu_p^N})}\,
[\eta_p^N-\nu_p^N](G_{p,n,\nu_p^N})\,
[\eta_p^N-\nu_p^N]\big(D_{p,n,\nu_p^N}(f)\big)
\]
in which $D_{p,n,\eta}$ is a shorthand for $D_\eta\Phi_{p,n}$.
By the ratio bound in Lemma~\ref{lem:FK_constants_control},
$G_{p,n,\nu_p^N}(x)\in[q_{p,n}^{-1},q_{p,n}]$, hence
\[
\frac{1}{\eta_p^N(G_{p,n,\nu_p^N})}\le q_{p,n},
\qquad
\mathrm{osc}(G_{p,n,\nu_p^N})\le q_{p,n}-q_{p,n}^{-1}.
\]
Therefore, conditionally on $\mathcal F_{p-1}^N$,
\begin{align*}
\Big|
\mathbb E\big[ R^{\Phi_{p,n}}(\eta_p^N,\nu_p^N)(f)\mid \mathcal F_{p-1}^N\big]
\Big|
&\le
q_{p,n}\,
\mathbb E\Big[
\big|[\eta_p^N-\nu_p^N](G_{p,n,\nu_p^N})\,
[\eta_p^N-\nu_p^N](D_{p,n,\nu_p^N}(f))\big|
\ \Big|\ \mathcal F_{p-1}^N
\Big]\\
&\le
q_{p,n}\,
\Big(
\mathbb E\big[[\eta_p^N-\nu_p^N](G_{p,n,\nu_p^N})^2\mid \mathcal F_{p-1}^N\big]
\Big)^{1/2}
\Big(
\mathbb E\big[[\eta_p^N-\nu_p^N](D_{p,n,\nu_p^N}(f))^2\mid \mathcal F_{p-1}^N\big]
\Big)^{1/2}\\
&\le
q_{p,n}\,
\frac{1}{4N}\,
\mathrm{osc}(G_{p,n,\nu_p^N})\,
\mathrm{osc}(D_{p,n,\nu_p^N}(f)),
\end{align*}
where the last step uses \eqref{eq:cond_cov_osc_bound} with $u=v$.
Using $q_{p,n}\mathrm{osc}(G_{p,n,\nu_p^N})\le q_{p,n}(q_{p,n}-q_{p,n}^{-1})=q_{p,n}^2-1$ yields
\[
\Big|
\mathbb E\big[ R^{\Phi_{p,n}}(\eta_p^N,\nu_p^N)(f)\mid \mathcal F_{p-1}^N\big]
\Big|
\le
\frac{1}{4N}\,(q_{p,n}^2-1)\,\beta_{p,n}.
\]
Taking expectations over $\mathcal F_{p-1}^N$ gives \eqref{eq:local_term_bias_bound}.
\end{proof}

\begin{theorem}[SMC sampler TV bias bound]
\label{thm:TV_bias_local}
For a SMC sampler associated with a FK model, any $N\ge 1$ and any $n\ge 1$,
\begin{equation}
\label{eq:TV_bias_local_bound}
\|\bar\eta_n^N-\eta_n\|_{\mathrm{TV}}
\le
\frac{1}{4N}\sum_{p=1}^n (q_{p,n}^2-1)\,\beta(D\Phi_{p,n}).
\end{equation}
In particular, if $q_{p,n}=1$ for all $p\le n$ (e.g.\ perfectly flat potentials), then $\bar\eta_n^N=\eta_n$.
\end{theorem}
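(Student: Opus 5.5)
The plan is to combine the three lemmas just established: the random telescoping identity (Lemma~\ref{lem:random_telescoping}), the local remainder bias bound (Lemma~\ref{lem:local_remainder_bias}), and the variational characterization of total variation through oscillations. First, recall that for probability measures $\mu,\nu$ on $E_n$,
\[
\|\mu-\nu\|_{\rm TV}=\sup_{f\in\mathrm{Osc}_1(E_n)}|(\mu-\nu)(f)|,
\]
since for $f$ with $\mathrm{osc}(f)\le1$ one may shift $f$ by a constant without changing $(\mu-\nu)(f)$, and indicator functions of measurable sets lie in $\mathrm{Osc}_1(E_n)$. So it suffices to bound $|[\bar\eta_n^N-\eta_n](f)|$ uniformly over $f\in\mathrm{Osc}_1(E_n)$.

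Fix such an $f$. By \eqref{eq:bias_telescoping_random},
\[
[\bar\eta_n^N-\eta_n](f)=\sum_{p=1}^n\mathbb E\Big[\big(\Phi_{p,n}(\eta_p^N)-\Phi_{p,n}(\nu_p^N)\big)(f)\Big],\qquad \nu_p^N:=\Phi_p(\eta_{p-1}^N).
\]
Here I use $\Phi_{p,n}\circ\Phi_p=\Phi_{p-1,n}$, as in the proof of that lemma. Each summand is precisely the local term controlled by \eqref{eq:local_term_bias_bound}, which gives
\[
\big|\mathbb E[(\Phi_{p,n}(\eta_p^N)-\Phi_{p,n}(\nu_p^N))(f)]\big|\le\frac1{4N}(q_{p,n}^2-1)\beta(D\Phi_{p,n}).
\]
Applying the triangle inequality over $p$ and taking the supremum over $f$ yields \eqref{eq:TV_bias_local_bound}. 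The constant case follows immediately: if every $q_{p,n}=1$, each term vanishes, so $\bar\eta_n^N(f)=\eta_n(f)$ for all bounded $f$.

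The only point needing care is the start of the telescoping sum, i.e.\ the convention $\eta_0^N=\eta_0$. I will treat the initial step as a deterministic measure, so the $p=1$ term compares $\eta_1^N$ with $\Phi_1(\eta_0)=\eta_1$. This is exactly the i.i.d.\ initialization of Algorithm~\ref{alg:fk_smc_eta1} viewed as a McKean step, which is conditionally unbiased, so Lemma~\ref{lem:local_remainder_bias} applies to it as well. I should also confirm that the hypotheses of Lemma~\ref{lem:one_step_covariance} (conditional independence given $\mathcal F_{p-1}^N$ through a consistent McKean kernel) hold at every $p$. They do, by construction \eqref{eq:ips-product-kernel}. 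Beyond these checks, the argument is a direct assembly of the earlier lemmas, and I do not expect a substantive obstacle.
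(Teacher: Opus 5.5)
Your proposal is correct and follows the paper's proof: fix $f\in\mathrm{Osc}_1(E_n)$, apply the bias telescoping of Lemma~\ref{lem:random_telescoping}, identify each summand as the local term bounded by Lemma~\ref{lem:local_remainder_bias}, sum with the triangle inequality, and take the supremum over $f$. Your extra checks are details the paper leaves implicit. These are the oscillation characterization of TV, the treatment of the $p=1$ term through $\eta_0^N=\eta_0$ and the i.i.d.\ initialization, and the McKean-consistency hypotheses of Lemma~\ref{lem:one_step_covariance}; all three hold as you state them.
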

\begin{proof}[Proof of Theorem~\ref{thm:TV_bias_local}]
Fix any $f\in\mathrm{Osc}_1(E_n)$.
By the bias telescoping \eqref{eq:bias_telescoping_random},
\[
[\bar\eta_n^N-\eta_n](f)
=
\sum_{p=1}^n
\mathbb E\Big[
\big(\Phi_{p,n}(\eta_p^N)-\Phi_{p,n}(\Phi_p(\eta_{p-1}^N))\big)(f)
\Big].
\]
Since $\Phi_p(\eta_{p-1}^N)=\nu_p^N$, each summand is bounded by
Lemma~\ref{lem:local_remainder_bias}, hence
\[
|[\bar\eta_n^N-\eta_n](f)|
\le
\frac{1}{4N}\sum_{p=1}^n (q_{p,n}^2-1)\,\beta(D\Phi_{p,n}).
\]
Taking the supremum over $f\in\mathrm{Osc}_1(E_n)$ yields \eqref{eq:TV_bias_local_bound}.
\end{proof}

\phantomsection\label{proof:number_of_particles_local_bias}
\begin{proof}[Proof of Theorem~\ref{thm:number_of_particles_local_bias} and Theorem~\ref{thm:number_of_particles_local_bias_optimal_proposal}]
Start from the local TV-bias bound (Theorem~\ref{thm:TV_bias_local}):
\begin{equation}
\label{eq:tv_local_start_Tp1}
\|\bar\eta_{T+1}^N-\eta_{T+1}\|_{\rm TV}
\le
\frac{1}{4N}\sum_{p=1}^{T+1} (q_{p,T+1}^2-1)\,\beta(D\Phi_{p,T+1}).
\end{equation}
By Lemma~\ref{lem:FK_constants_control}, $\beta(D\Phi_{p,T+1})\le 2q_{p,T+1}\beta(P_{p,T+1})\le 2q_{p,T+1}$,
so \eqref{eq:tv_local_start_Tp1} yields
\begin{equation}
\label{eq:tv_local_reduce_Tp1}
\|\bar\eta_{T+1}^N-\eta_{T+1}\|_{\rm TV}
\le
\frac{1}{2N}\sum_{p=1}^{T+1} q_{p,T+1}\,(q_{p,T+1}^2-1).
\end{equation}
Note the last term $p=T+1$ vanishes since $q_{T+1,T+1}=1$.

\paragraph{(i) Naive proposal.}
By Lemma~\ref{lem:control_constants}, for $p\le T$,
$q_{p,T+1}\le L^2(1+\varepsilon)^{2(T-p)}$.
Using $q(q^2-1)\le q^3$ for $q\ge 1$,
\[
\sum_{p=1}^{T} q_{p,T+1}(q_{p,T+1}^2-1)
\le
\sum_{p=1}^{T} q_{p,T+1}^3
\le
L^6\sum_{p=1}^{T}(1+\varepsilon)^{6(T-p)}
=
L^6\sum_{j=0}^{T-1} a^j,
\quad a:=(1+\varepsilon)^6\ge 1.
\]
Using the elementary bound $\sum_{j=0}^{m-1} a^j\le m\,a^{m-1}$ for $a\ge 1$
(with $m=T$) gives
\[
\sum_{p=1}^{T} q_{p,T+1}(q_{p,T+1}^2-1)
\le
L^6\,T\,(1+\varepsilon)^{6(T-1)}.
\]
Plugging into \eqref{eq:tv_local_reduce_Tp1} yields
\[
\|\bar\eta_{T+1}^N-\eta_{T+1}\|_{\rm TV}
\le
\frac{L^6\,T\,(1+\varepsilon)^{6(T-1)}}{2N},
\]
so $N\ge\tfrac{L^6\,T\,(1+\varepsilon)^{6(T-1)}}{2\,\delta_{\rm TV}}$ ensures $\|\bar\eta_{T+1}^N-\eta_{T+1}\|_{\rm TV}\le \delta_{\rm TV}$.

\paragraph{(ii) Optimal proposal.}
By Lemma~\ref{lem:control_constants}, for $p\le T$,
$q_{p,T+1}\le (1+\varepsilon)^{2(T+1-p)}$.
Let $k:=T+1-p\in\{1,\dots,T\}$. Then
\[
q_{p,T+1}(q_{p,T+1}^2-1)
\le
(1+\varepsilon)^{2k}\big((1+\varepsilon)^{4k}-1\big)
\le
k\big((1+\varepsilon)^4-1\big)(1+\varepsilon)^{6k},
\]
where we used $b^k-1\le k(b-1)b^k$ for $b\ge 1$ with $b=(1+\varepsilon)^4$.
Hence
\[
\sum_{p=1}^{T} q_{p,T+1}(q_{p,T+1}^2-1)
\le
\big((1+\varepsilon)^4-1\big)\sum_{k=1}^{T} k\,a^k,
\qquad a:=(1+\varepsilon)^6\ge 1.
\]
Using the crude bound $\sum_{k=1}^{m} k\,a^k \le m^2 a^m$ for $a\ge 1$ (here $m=T$),
we obtain
\[
\sum_{p=1}^{T} q_{p,T+1}(q_{p,T+1}^2-1)
\le
\big((1+\varepsilon)^4-1\big)\,T^2\,(1+\varepsilon)^{6T}.
\]
Plugging into \eqref{eq:tv_local_reduce_Tp1} gives
\[
\|\bar\eta_{T+1}^N-\eta_{T+1}\|_{\rm TV}
\le
\frac{\big((1+\varepsilon)^4-1\big)\,T^2\,(1+\varepsilon)^{6T}}{2N},
\]
so $N\ge\tfrac{((1+\varepsilon)^4-1)\,T^2\,(1+\varepsilon)^{6T}}{2\,\delta_{\rm TV}}$ ensures $\|\bar\eta_{T+1}^N-\eta_{T+1}\|_{\rm TV}\le \delta_{\rm TV}$.
\end{proof}

\begin{proposition}[Particles Complexity of arbitrary proposal]
\label{prop:number_of_particles_arbitrary_proposal}
Fix the horizon $T\ge 2$ and a target accuracy $\delta_{\rm TV}\in(0,1)$.
Under Assumptions~\ref{assump:reward_model_ratio_bound} and~\ref{assump:bellman_error_uniform_bound},
let $\bar\eta_{T+1}^N:=\mathbb E[\eta_{T+1}^N]$ be the output distribution of an SMC sampler
(after the terminal resampling step $T{+}1$ with $M_{T+1}=\mathrm{Id}$). We further suppose
\begin{align*}
\max_{t\in[T]}\max_{s_{1:t}\in\mathcal S_{1:t}}\max\Bigg\{\frac{\pi_{\rm ref}(s_t\mid s_{1:t-1})}{\pi_{\rm p}(s_t\mid s_{1:t-1})}
\cdot
\frac{\hat V(s_0,s_{1:t})}{\hat V(s_0,s_{1:t-1})}, \frac{\pi_{\rm p}(s_t\mid s_{1:t-1})}{\pi_{\rm ref}(s_t\mid s_{1:t-1})}
\cdot
\frac{\hat V(s_0,s_{1:t-1})}{\hat V(s_0,s_{1:t})}\Bigg\}\leq L_{\rm p}.
\end{align*}
Then a sufficient condition that guarantees
$\|\bar\eta_{T+1}^N-\eta_{T+1}\|_{\rm TV}\ \le\ \delta_{\rm TV}$ is
\begin{align*}
N\geq \frac{L_{\rm p}^6 T(1+\varepsilon)^{6(T-1)}}{2\delta_{\rm TV}}.
\end{align*}
\end{proposition}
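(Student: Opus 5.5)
The plan mirrors the naive-proposal part of the proof of Theorem~\ref{thm:number_of_particles_local_bias}, with $L$ replaced by $L_{\rm p}$. We start from the kernel-agnostic bias bound of Theorem~\ref{thm:TV_bias_local} applied at $n=T+1$. We then use $\beta(D\Phi_{p,T+1})\le 2q_{p,T+1}$ from Lemma~\ref{lem:FK_constants_control}, which gives
\[
\|\bar\eta_{T+1}^N-\eta_{T+1}\|_{\rm TV}\le \frac{1}{2N}\sum_{p=1}^{T} q_{p,T+1}\big(q_{p,T+1}^2-1\big)\le \frac{1}{2N}\sum_{p=1}^{T} q_{p,T+1}^3 .
\]
The $p=T+1$ term vanishes because $q_{T+1,T+1}=1$. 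So everything reduces to controlling the ratio constants $q_{p,T+1}$ for the FK model with kernel $M_t=\pi_{\rm p}$ and potential $G_t=\frac{\pi_{\rm ref}}{\pi_{\rm p}}\cdot\frac{\hat V(s_0,s_{1:t})}{\hat V(s_0,s_{1:t-1})}$.

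The key step is the analogue of Lemma~\ref{lem:control_constants}(a). We show by induction on $n>m$ that
\[
L_{\rm p}^{-1}(1+\varepsilon)^{-(n-m-1)}\le Q_{m,n}(1)(x)\le L_{\rm p}(1+\varepsilon)^{n-m-1}.
\]
\begin{itemize}
\item \emph{Base case.} $Q_{m,m+1}(1)=G_m\in[L_{\rm p}^{-1},L_{\rm p}]$, which is exactly the new hypothesis.
\item \emph{Inductive step.} Write $Q_{m,n+1}(1)(x)=\mathbb E\big[\mathbb E[G_n(X_n)\mid X_{n-1}]\prod_{p=m}^{n-1}G_p(X_p)\mid X_m=x\big]$. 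The inner conditional expectation under the proposal is
\[
\mathbb E_{s_n\sim\pi_{\rm p}}\!\Big[\frac{\pi_{\rm ref}(s_n\mid\cdot)}{\pi_{\rm p}(s_n\mid\cdot)}\frac{\hat V(s_{1:n})}{\hat V(s_{1:n-1})}\Big]=\frac{\mathbb E_{s_n\sim\pi_{\rm ref}}[\hat V(s_0,s_{1:n})]}{\hat V(s_0,s_{1:n-1})}.
\]
The importance weight cancels the change of measure. Assumption~\ref{assump:bellman_error_uniform_bound} then places this quantity in $[(1+\varepsilon)^{-1},1+\varepsilon]$ uniformly, and the induction closes.
\end{itemize}
Consequently $q_{p,T+1}\le L_{\rm p}^2(1+\varepsilon)^{2(T-p)}$.

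The step needing the most care is this cancellation. It requires $\pi_{\rm p}$ to dominate $\pi_{\rm ref}$ (absolute continuity), which is implicit in the finiteness of $L_{\rm p}$. It is also the only place where the proposal enters, which explains why the $(1+\varepsilon)$ factors do not depend on $\pi_{\rm p}$ while the one unsmoothed potential contributes $L_{\rm p}$.

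Finally, substitute into the sum:
\[
\sum_{p=1}^T q_{p,T+1}^3\le L_{\rm p}^6\sum_{j=0}^{T-1}(1+\varepsilon)^{6j}\le L_{\rm p}^6\,T\,(1+\varepsilon)^{6(T-1)}.
\]
This gives $\|\bar\eta_{T+1}^N-\eta_{T+1}\|_{\rm TV}\le L_{\rm p}^6T(1+\varepsilon)^{6(T-1)}/(2N)$, and requiring the right side to be at most $\delta_{\rm TV}$ yields the stated condition on $N$.
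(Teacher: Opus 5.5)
Your proposal is correct and takes the route the paper intends; the paper's proof only says it follows Theorem~\ref{thm:number_of_particles_local_bias}. You start from Theorem~\ref{thm:TV_bias_local}, redo the induction of Lemma~\ref{lem:control_constants}(a) with $L_{\rm p}$ in the base case and the cancellation $\mathbb E_{\pi_{\rm p}}[G_n\mid X_{n-1}]=\mathbb E_{\pi_{\rm ref}}[\hat V(s_0,s_{1:n})]/\hat V(s_0,s_{1:n-1})$ in the inductive step, and then sum $q_{p,T+1}^3\le L_{\rm p}^6(1+\varepsilon)^{6(T-p)}$. Your remark that this cancellation needs $\pi_{\rm ref}\ll\pi_{\rm p}$, which finiteness of $L_{\rm p}$ guarantees, is a detail the paper leaves implicit.
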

\begin{proof}
The proof follows a similar pattern of Theorem~\ref{thm:number_of_particles_local_bias}.
\end{proof}
\begin{remark}\label{remark:trade-off_particle_proposal}
Proposition~\ref{prop:number_of_particles_arbitrary_proposal} indicates that choosing the proposal $\pi_{\rm p}$ allows one to balance the overhead induced by the particle complexity and executing the proposal itself. A well designed proposal may have moderate $L_{\rm p}$, yet may be hard to sample from or whose corresponding incremental weight $G_t=\frac{\pi_{\rm ref}(s_t\mid s_{1:t-1})}{\pi_{\rm p}(s_t\mid s_{1:t-1})}
\cdot
\frac{\hat V(s_0,s_{1:t})}{\hat V(s_0,s_{1:t-1})}$ may be difficult to calculate to the desired accuracy. The naive proposal (Corollary~\ref{cor:TC_SMC_naive}) and the optimal proposal (Proposition~\ref{prop:TC_SMC_optimal_proposal}) are two extremes in this sense. The naive proposal is extremely easy to sample from, and the incremental weight is trivially obtained using the reward model without further calculation, but the particle complexity may scale \emph{in the worst case} as $O(L^6)$. On the other hand, the optimal proposal has potentially very low particle complexity, scaling as $O(\varepsilon T)$, and can even degenerate to requiring only one single particle when the bellman error is small enough. But the cost of estimating the incremental weight is relatively large. \textbf{Therefore, there is an important trade-off between the number of particles and the difficulty of proposal (including the calculation of incremental weight).}

We also note that we are considering sentence-level reasoning algorithm. In the token-level, one may access the explicit probability of next tokens, thus the trade-off may also be different. However the particle complexity is exactly the same as in Theorem~\ref{thm:number_of_particles_local_bias} and Proposition~\ref{prop:number_of_particles_arbitrary_proposal}.
\end{remark}

\phantomsection\label{proof:TC_SMC_optimal_proposal}
\begin{proof}[Proof of Proposition~\ref{prop:TC_SMC_optimal_proposal}]
In this proof, whenever we write $\mathbb E[X|\mathcal F,E]$ for some random variable $X$, sigma algebra $\mathcal F$ and event $E$, we mean $\mathbb E[X|\sigma(\mathcal F\cup \{E\})]\mathbb I_{E}$, since we usually do not care about what happen outside of $E$. Then it's not difficult to see that, since
\begin{align*}
\mathbb E[X|\sigma(\mathcal F\cup \{E\})]
=
\mathbb E[X\mathbb I_E|\mathcal F]\mathbb I_E/\mathbb P(E|\mathcal F)
+
\mathbb E[X\mathbb I_{E^c}|\mathcal F]\mathbb I_{E^c}/\mathbb P(E^c|\mathcal F),
\end{align*}
we have
\begin{align*}
\mathbb E[X|\mathcal F,E]=\mathbb E[X\mathbb I_E|\mathcal F]\mathbb I_E/ \mathbb P(E|\mathcal F).
\end{align*}

\paragraph{Setup and filtrations.}
Our FK horizon is $n:=T+1$, where the terminal step $T{+}1$ uses $M_{T+1}=\mathrm{Id}$ and applies the last potential $G_T$ (no new token is generated). The Monte-Carlo normalizer estimation and rejection-sampling proposals only occur for token steps $t=1,\dots,T$.

Given $N$ particles, they in total run $T$ token-generation steps in the algorithm. At each such step, we estimate $N$ conditional expectations using Monte Carlo, each by averaging over $N_m$ samples. For each $t\in[T], i\in[N]$, the estimation is denoted by a random variable
\[
\hat Z_{t,i}:=\dfrac{1}{N_m}\sum_{m=1}^{N_m}\hat V\big(s_0, (S_{1:t-1},\tilde S_t^{[m]})\big),
\]
where each $\tilde S_t^{[m]}$ is conditionally independently drawn from $\pi_{\rm ref}$ given $\mathcal F_{t-1}^{N,i-1}$, which denotes the sigma algebra generated by all previous particles (until time $t-1$), including using rejection sampling for optimal proposal, and MC estimation random variables. The indexed family of sigma algebras form a natural filtration on the sample space $\Omega$:
\begin{align*}
(\mathcal F_1^{N,0},\mathcal F_1^{N,1},\mathcal F_1^{N,2},\cdots,\mathcal F_1^{N,N},
\mathcal F_2^{N,0},\cdots,\mathcal F_2^{N,N},\cdots,\mathcal F_T^{N,0},\cdots,\mathcal F_T^{N,N}).
\end{align*}
Ideally, each $\hat Z_{t,i}$ is a good estimate of
\[
Z_{t,i}:=\mathbb E_{s_{t}\sim\pi_{\rm ref}(\cdot|s_0,S_{1:t-1})}[\hat V(s_0, (S_{1:t-1},s_t))]
=\mathbb E[\hat Z_{t,i}|\mathcal F_{t-1}^{N,i-1}].
\]

\paragraph{Good events for MC estimation.}
Now we define the failure events for any $\delta>0$, and
\[
\xi:=2\sqrt{2}\sqrt{\dfrac{C_{\rm act}\log\frac{4}{\delta}}{N_m}}
\]
as
\begin{align*}
\mathcal B_{t,i}:=\left\{\left|\frac{\hat Z_{t,i}}{Z_{t,i}}-1\right|>\xi\right\}.
\end{align*}
Here $C_{\rm act}$ is defined in Lemma~\ref{lem:freedman}.

Since ${\hat Z_{t,i}},{Z_{t,i}}$ are both $\mathcal F_{t-1}^{N,i}$-measurable, we have $\mathcal B_{t,i}\in\mathcal F_{t-1}^{N,i}$.

Then we use a conditional version of Lemma~\ref{lem:freedman}, which gives
\begin{align*}
\mathbb P(\mathcal B_{t,i}^c|\mathcal F_{t-1}^{N,i-1})\geq 1-\delta\qquad a.s.
\end{align*}
then it's obvious that
\begin{align*}
\mathbb P\bigg(\bigcup_{t\leq T}\bigcup_{i\leq N}\mathcal B_{t,i}\bigg)
\leq \sum_{\substack{t\leq T\\i\leq N}}\mathbb E\Big[\mathbb E[\mathbb I_{\mathcal B_{t,i}}|\mathcal F_{t-1}^{N,i-1}]\Big]
<NT\delta.
\end{align*}

\paragraph{Good events for rejection sampling (optimal proposal).}
Afterwards, since our optimal proposal is sampled from rejection sampling (Algorithm~\ref{alg:rejection-sampling_compact}), we need to define the bad event such that unsuccessful sampling occurs. Consider the random variable denoting whether or not the proposal of the $i$-th particle from time step $t-1$ to $t$ being successful as $I_{t,i}\in\{0,1\}$, i.e. let $\mathcal E_{acc}$ being the good event at $(t,i)$ in Theorem~\ref{thm:exact-conditional-accept}, $I_{t,i}:=\mathbb I_{\mathcal E_{acc}}$, then $\mathcal B_{t,i}':=\{I_{t,i}=0\}$. Denote the sampling result at $(t,i)$ as a random variable $W_{t,i}$. Then the conditional version of Theorem~\ref{thm:exact-conditional-accept} implies that when using at most $N_s:=4C_{\rm act}\log(\delta^{-1})$ propose-reject steps, $\mathbb P(\mathcal B_{t,i}'|\mathcal F_{t-1}^{N,N})< \delta$. Thus
\begin{align*}
\mathbb P\bigg(\bigcup_{t\leq T}\bigcup_{i\leq N}\mathcal B_{t,i}'\bigg)
\leq \sum_{\substack{t\leq T\\i\leq N}}\mathbb E\Big[\mathbb E[\mathbb I_{\mathcal B'_{t,i}}|\mathcal F_{t-1}^{N,N}]\Big]
<NT\delta.
\end{align*}
Note that $\sigma(I_{t,1},W_{t,1}), \sigma(I_{t,2},W_{t,2}), \cdots, \sigma(I_{t,N},W_{t,N})$ are conditionally independent w.r.t. $\mathcal F_{t-1}^{N,N}$. And
$\mathcal F_t^{N,0}=\sigma\!\left(\mathcal F_{t-1}^{N,N}\cup\left(\bigcup_{i=1}^N\sigma(I_{t,i},W_{t,i})\right)\right)$.

And we now define for estimating conditional expectation, the good event at each step
$\mathcal E_t:=(\bigcup_{i=1}^N\mathcal B_{t,i})^c$ for $t\le T$, and the overall good event as their intersection
$\mathcal E:=\bigcap_{t\leq T}\mathcal E_t$,
$\mathcal E_{\leq t}:=\bigcap_{t'\leq t}\mathcal E_{t'}$,
$\mathcal E_{> t}:=\bigcap_{t<t'\leq T}\mathcal E_{t'}$.
We have also $\mathbb P(\mathcal E_t)\geq 1-N\delta$, $\mathbb P(\mathcal E_{\leq t})\geq 1-NT\delta$ and $\mathbb P(\mathcal E_{> t})\geq 1-NT\delta$.
For rejection sampling, the good event at each step is
$\mathcal E'_t:=\bigcap_{i=1}^N\mathcal E'_{t,i}$ where $\mathcal E'_{t,i}:=(\mathcal B_{t,i}')^c$ for $t\le T$, and
$\mathcal E':=\bigcap_{t\leq T}\mathcal E'_t$,
$\mathcal E'_{\leq t}:=\bigcap_{t'\leq t}\mathcal E'_{t'}$,
$\mathcal E'_{> t}:=\bigcap_{t<t'\leq T}\mathcal E'_{t'}$.
We have also $\mathbb P(\mathcal E'_t)\geq 1-N\delta$, $\mathbb P(\mathcal E'_{\leq t})\geq 1-NT\delta$ and $\mathbb P(\mathcal E'_{> t})\geq 1-NT\delta$.

\paragraph{Telescoping (with terminal step $n=T+1$).}
To apply our previous proof regarding telescoping, for any test function $f\in{\rm Osc}_1$ we define the random variable
\[
Y_p:=\Big(\Phi_{p,n}(\eta_p^N)-\Phi_{p,n}(\Phi_p(\eta_{p-1}^N))\Big)(f),
\qquad n:=T+1.
\]
Note that $Y_p$ is the difference between two random probability measures operating on the test function $f$, so
$\|Y_p\|_\infty\leq \mathrm{osc}(f)=1$.
For the terminal FK step $p=T+1$, there is no rejection sampling; we may set $\mathcal E'_{T+1}:=\Omega$ and $\mathcal E_{T+1}:=\Omega$
so that the notation below remains uniform.

Then we have
\begin{align*}
\mathbb E[Y_p|\mathcal E\cap \mathcal E']
=\mathbb E[Y_p\mathbb I_{\mathcal E}\mathbb I_{\mathcal E'}]/\mathbb P(\mathcal E\cap \mathcal E')
=&\;\mathbb E[Y_p\mathbb I_{\mathcal E_{\leq p}}\mathbb I_{\mathcal E'_{<p}}\mathbb I_{\mathcal E'_p}\mathbb I_{\mathcal E'_{>p}}\mathbb I_{\mathcal E_{>p}}]/\mathbb P(\mathcal E\cap\mathcal E')\\
=&\;\mathbb E\Big[\mathbb I_{\mathcal E_{\leq p}}\mathbb I_{\mathcal E'_{<p}}
\mathbb E\big[Y_p\mathbb I_{\mathcal E'_p}\mathbb I_{\mathcal E'_{>p}}\mathbb I_{\mathcal E_{>p}}\big|\mathcal F_{p-1}^{N,N}\big]\Big]/\mathbb P(\mathcal E\cap\mathcal E'),
\end{align*}
and the last step is because $\mathbb I_{\mathcal E_{\leq p}}\mathbb I_{\mathcal E'_{<p}}$ is $\mathcal F_{p-1}^{N,N}$-measurable.

We then focus on
$\left|\mathbb E\big[Y_p\mathbb I_{\mathcal E_{>p}}\mathbb I_{\mathcal E'_p}\mathbb I_{\mathcal E'_{>p}}\big|\mathcal F_{p-1}^{N,N}\big]\right|$,
which can be bounded as follow
\begin{align*}
\left|\mathbb E\big[Y_p\mathbb I_{\mathcal E_{>p}}\mathbb I_{\mathcal E'_p}\mathbb I_{\mathcal E'_{>p}}\big|\mathcal F_{p-1}^{N,N}\big]\right|
&=\left|\mathbb E\big[Y_p\mathbb I_{\mathcal E'_p}|\mathcal F_{p-1}^{N,N}\big]-\mathbb E\big[Y_p\mathbb I_{\mathcal E'_p}\mathbb I_{\mathcal E_{>p}^c\cup (\mathcal E'_{>p})^c}|\mathcal F_{p-1}^{N,N}\big]\right|\\
&\le \left|\mathbb E\big[Y_p\mathbb I_{\mathcal E'_p}|\mathcal F_{p-1}^{N,N}\big]\right|
+\underbrace{\|Y_p\mathbb I_{\mathcal E'_p}\|_\infty}_{\le 1}\cdot
\underbrace{\mathbb P(\mathcal E_{>p}^c\cup(\mathcal E'_{>p})^c|\mathcal F_{p-1}^{N,N})}_{<2NT\delta}\\
&< \left|\mathbb E\big[Y_p|\mathcal F_{p-1}^{N,N},\mathcal E'_p\big]\right|
+\underbrace{\left|\mathbb E\big[Y_p\mathbb I_{\mathcal E'_p}|\mathcal F_{p-1}^{N,N}\big]\right|}_{\le 1}\mathbb I_{(\mathcal E'_p)^c}
+2NT\delta.
\end{align*}
Where we used the fact that
$\mathbb E\big[Y_p\mathbb I_{\mathcal E'_p}|\mathcal F_{p-1}^{N,N}\big]\mathbb I_{\mathcal E'_p}
=\mathbb E[Y_p|\mathcal F_{p-1}^{N,N},\mathcal E'_p]\cdot \mathbb P(\mathcal E'_p|\mathcal F_{p-1}^{N,N})$.

Thus we write
\begin{align*}
\mathbb E[Y_p|\mathcal E\cap\mathcal E']
&=\mathbb E\Big[\mathbb I_{\mathcal E_{\leq p}}\mathbb I_{\mathcal E'_{<p}}
\mathbb E\big[Y_p\mathbb I_{\mathcal E'_p}\mathbb I_{\mathcal E'_{>p}}\mathbb I_{\mathcal E_{>p}}|\mathcal F_{p-1}^{N,N}\big]\Big]/\mathbb P(\mathcal E\cap\mathcal E')\\
&<
\mathbb E\Big[\mathbb I_{\mathcal E_{\leq p}}\mathbb I_{\mathcal E'_{<p}}
\big(
\left|\mathbb E[Y_p|\mathcal F_{p-1}^{N,N},\mathcal E'_p]\right|
+\mathbb I_{(\mathcal E'_p)^c}
+2NT\delta
\big)\Big]/\mathbb P(\mathcal E\cap\mathcal E')\\
&\le
\frac{1}{1-2NT\delta}
\left(
\sup_{\mathcal E_{\le p}}\left|\mathbb E\big[Y_p\mid\mathcal F_{p-1}^{N,N}, \mathcal E'_p\big]\right|
+N\delta+2NT\delta
\right).
\end{align*}

\medskip
\noindent\textbf{(a) Link to the perturbed FK mapping.}
Let $\Psi_G$ denote the Boltzmann--Gibbs transform
\[
\Psi_G(\mu)(dx)=\frac{G(x)\mu(dx)}{\mu(G)}.
\]
Under $\omega\in\mathcal E_{\le p}$, we have for the (random) normalizers
\[
(1-\xi)Z_{p,i}\le \hat Z_{p,i}\le (1+\xi)Z_{p,i},
\qquad \forall i\in[N].
\]
In particular, the induced potential ratio satisfies the uniform bounds
\[
1-\xi\ \le\ \frac{G'_p}{G_p}\ \le\ 1+\xi
\qquad\Rightarrow\qquad
\left|\frac{G'_p}{G_p}-1\right|\le \xi.
\]
A standard calculation then yields, for any probability measure $\mu$ and any $f\in{\rm Osc}_1$,
\begin{align*}
\Big|(\Psi_{G'_p}(\mu)-\Psi_{G_p}(\mu))(f)\Big|
&=
\left|\frac{\mu(G'_p f)}{\mu(G'_p)}-\frac{\mu(G_p f)}{\mu(G_p)}\right|\le
\frac{2\xi}{1-\xi},
\end{align*}
and therefore
\begin{equation}\label{eq:BG_perturb_TV}
\|\Psi_{G'_p}(\mu)-\Psi_{G_p}(\mu)\|_{\rm TV}\ \le\ \frac{2\xi}{1-\xi}.
\end{equation}
Since mutation by a Markov kernel is TV-contractive (and $M_{T+1}=\mathrm{Id}$ is a special case), the corresponding one-step FK mappings satisfy
\begin{equation}\label{eq:Phi_perturb_TV}
\|\Phi'_p(\mu)-\Phi_p(\mu)\|_{\rm TV}\ \le\ \frac{2\xi}{1-\xi},\qquad \forall \mu.
\end{equation}

\medskip
\noindent\textbf{(b) Local (linear) bias bound.}
Recall $u=D_{\nu_p^N}\Phi_{p,n}(f)$ and $\nu_p^N=\Phi_p(\eta_{p-1}^N)$.
By the definition of $\Phi'_p$ and the above conditional expectation identity,
\begin{align*}
\mathbb E\Big([\eta_p^N-\nu_p^N](u)\,\big|\,\mathcal F_{p-1}^{N,N}, \mathcal E'_p\Big)
&=
\frac1N\sum_{i=1}^N\Big(K'_{p,\eta_{p-1}^N}u(\xi_{p-1}^{(N,i)})-K_{p,\eta_{p-1}^N}u(\xi_{p-1}^{(N,i)})\Big)\\
&=
\eta_{p-1}^N\Big((K'_{p,\eta_{p-1}^N}-K_{p,\eta_{p-1}^N})u\Big)\\
&=
\big(\Phi'_p(\eta_{p-1}^N)-\Phi_p(\eta_{p-1}^N)\big)(u).
\end{align*}
For any bounded measurable $g$, using the centering trick
$g=g_0+c$ with $\|g_0\|_\infty\le \frac12\mathrm{osc}(g)$ and $(\mu-\nu)(g)=(\mu-\nu)(g_0)$,
we have $|(\mu-\nu)(g)|\le \frac12\mathrm{osc}(g)\|\mu-\nu\|_{\rm TV}$.
Applying this with $g=u$ and \eqref{eq:Phi_perturb_TV} yields, for every $\omega\in\mathcal E_{\le p}$,
\begin{align}
\left|\mathbb E\Big([\eta_p^N-\nu_p^N](u)\,\big|\,\mathcal F_{p-1}^{N,N}, \mathcal E'_p\Big)\right|
&\le
\frac12\,\mathrm{osc}(u)\,\|\Phi'_p(\eta_{p-1}^N)-\Phi_p(\eta_{p-1}^N)\|_{\rm TV}\nonumber\\
&\le
\frac12\,\beta(D\Phi_{p,n})\cdot \frac{2\xi}{1-\xi}
=
\frac{\xi}{1-\xi}\,\beta(D\Phi_{p,n}),
\label{eq:local_bias_bound}
\end{align}
where we used $\mathrm{osc}(u)\le \beta(D\Phi_{p,n})\mathrm{osc}(f)\le \beta(D\Phi_{p,n})$ since $f\in{\rm Osc}_1$.

\medskip
\noindent\textbf{(c) Second-order remainder term.}
By the same argument as in Theorem~\ref{thm:TV_bias_local}, conditionally on $\mathcal F_{p-1}^{N,N}, \mathcal E'_p$,
\begin{equation}\label{eq:remainder_bound}
\sup_{f\in{\rm Osc}_1}
\left|\mathbb E\left[R^{\Phi_{p,n}}(\eta_p^N,\nu_p^N)(f)\,\big|\,\mathcal F_{p-1}^{N,N}, \mathcal E'_p\right]\right|
\le
\frac{1}{4N}\,(q_{p,n}^2-1)\,\beta(D\Phi_{p,n}).
\end{equation}

\medskip
\noindent\textbf{(d) Combine (b)(c) into $\sup_{\mathcal E_{\le p}}|\mathbb E[Y_p\mid\mathcal F_{p-1}^{N,N}, \mathcal E'_p]|$.}
Combining \eqref{eq:local_bias_bound} and \eqref{eq:remainder_bound}, we obtain for every $\omega\in\mathcal E_{\le p}$,
\[
\left|\mathbb E\left[\big(\Phi_{p,n}(\eta_p^N)-\Phi_{p,n}(\nu_p^N)\big)(f)\,\big|\,\mathcal F_{p-1}^{N,N}, \mathcal E'_p\right]\right|
\le
\frac{\xi}{1-\xi}\,\beta(D\Phi_{p,n})
+
\frac{1}{4N}\,(q_{p,n}^2-1)\,\beta(D\Phi_{p,n}).
\]
Hence,
\begin{equation}\label{eq:sup_conditional_Yp}
\sup_{\mathcal E_{\le p}}
\left|\mathbb E\big[Y_p\mid\mathcal F_{p-1}^{N,N}, \mathcal E'_p\big]\right|
\le
\frac{\xi}{1-\xi}\,\beta(D\Phi_{p,n})
+
\frac{1}{4N}\,(q_{p,n}^2-1)\,\beta(D\Phi_{p,n}).
\end{equation}

\medskip
\noindent\textbf{(e) Sum over $p$ (telescoping) to control TV under $\mathcal E$.}
Taking $\sup_{f\in{\rm Osc}_1}$ and summing over $p\le n=T+1$ yields
\begin{align}
\big\|\tilde\pi_T-\hat\pi_T\mid \mathcal E\big\|_{\rm TV}
&=
\sup_{f\in{\rm Osc}_1}\left|\sum_{p=1}^{T+1} \mathbb E[Y_p\mid \mathcal E]\right|
\le
\sum_{p=1}^{T+1} \sup_{f\in{\rm Osc}_1}\left|\mathbb E[Y_p\mid \mathcal E]\right|\nonumber\\
&\le
\sum_{p=1}^{T+1}
\left[
\frac{1}{1-2NT\delta}\left(
\frac{\xi}{1-\xi}\,\beta(D\Phi_{p,T+1})
+
\frac{1}{4N}\,(q_{p,T+1}^2-1)\,\beta(D\Phi_{p,T+1})
\right)
+\frac{2NT\delta+N\delta}{1-2NT\delta}
\right].
\label{eq:TV_total_bound}
\end{align}

\medskip
\noindent\textbf{(f) Plug in the \emph{exact} FK semigroup stability constants.}
Under Assumption~\ref{assump:bellman_error_uniform_bound}, we have the usual bounds for the \emph{exact} FK semigroup (with terminal horizon $T+1$):
\[
q_{p,T+1}\ \le\ (1+\varepsilon)^{2(T+1-p)},
\qquad
\beta(D\Phi_{p,T+1})\ \le\ 2q_{p,T+1}\ \le\ 2(1+\varepsilon)^{2(T+1-p)}.
\]
Consequently,
\begin{align}
\sum_{p=1}^{T+1} \beta(D\Phi_{p,T+1})
&\le
2\sum_{k=0}^{T}(1+\varepsilon)^{2k}
\le 2(T+1)(1+\varepsilon)^{2T},
\label{eq:sum_beta}\\
\sum_{p=1}^{T+1} (q_{p,T+1}^2-1)\beta(D\Phi_{p,T+1})
&\le
2\sum_{p=1}^{T+1} q_{p,T+1}(q_{p,T+1}^2-1)
\ \lesssim\ \varepsilon\,(T+1)^2\,(1+\varepsilon)^{6T},
\label{eq:sum_q}
\end{align}
where \eqref{eq:sum_q} is the same geometric-series-with-$k$ factor bound used in our earlier optimal-proposal analysis.
In the regime $\varepsilon=O(1/T)$ and $\delta_{\rm TV}=O(1)$, the exponential factors are absorbed into constants.

\medskip
\noindent\textbf{(g) Choose parameters $N,N_m$ and instantiate the confidence $\delta$.}
First, instantiate the $\delta$ used above as $\delta\leftarrow \delta/(4N(T+1))$ so that
$\mathbb P(\mathcal E\cap\mathcal E')\ge 1-\delta$.
Next, choose $\xi\le 1/2$ and set
\[
\xi \asymp \frac{\delta_{\rm TV}}{T+1},
\qquad
N \asymp \frac{\varepsilon (T+1)^2}{\delta_{\rm TV}}.
\]
With these choices, using \eqref{eq:TV_total_bound}--\eqref{eq:sum_q} and $2NT\delta$ negligible,
the RHS of \eqref{eq:TV_total_bound} is $\le \delta_{\rm TV}$.
Finally, recall $\xi=2\sqrt{2}\sqrt{\frac{C_{\rm act}\log\frac{4}{\delta}}{N_m}}$.
Thus it suffices to take
\[
N_m \asymp \frac{C_{\rm act}\,(T+1)^2}{\delta_{\rm TV}^2}\,\log\Big(\frac{N(T+1)}{\delta}\Big).
\]
Finally recall that we took $N_s:=4C_{\rm act}\log(\delta^{-1})$ time to finish a rejection sampling optimal proposal. Noticably, this is negligible comparing to $N_m$, as rejection sampling only require estimating the normalizing factor by constant relative error, while the latter requires up to $O(\delta_{\rm TV}/T)$.

\medskip
\noindent\textbf{(h) Time complexity.}
The total number of reward-model evaluations (or reference-kernel calls inside the MC normalizer estimation)
is of order $N\cdot T\cdot N_m$ (the terminal step $T{+}1$ only adds an $O(N)$ resampling cost), hence
\[
{\rm Time}
\ \asymp\
N\,T\,(N_m+N_s)
=
\tilde O\!\left(
\frac{\varepsilon (T+1)^2}{\delta_{\rm TV}}\cdot T\cdot
\frac{C_{\rm act}(T+1)^2}{\delta_{\rm TV}^2}\log(\delta^{-1})
\right)
=
\tilde O\!\left(
\frac{\varepsilon\,C_{\rm act}\,T^5\log(\delta^{-1})}{\delta_{\rm TV}^3}
\right).
\]
Moreover, since $C_{\rm act}\le L(1+\varepsilon)$, in the regime $\varepsilon=O(1/T)$ we can write
\[
{\rm Time}
=
\tilde O\!\left(
\frac{\varepsilon\,L\,T^5\log(\delta^{-1})}{\delta_{\rm TV}^3}
\right),
\]
which proves the result.
\end{proof}

\paragraph{SMC with arbitrary McKean kernel.}
The following proposition provides a more general form of Theorem~\ref{thm:TV_bias_local}, providing distinctions between different McKean kernels $K_{p,\eta}$. Even though various different kernels may correspond to the vary same Feynman-Kac flow, namely if they all satisfy \eqref{eq:mckean-consistency}, Theorem~\ref{thm:TV_bias_local} only provide an upper bound for all of them, and Proposition~\ref{prop:TV_bias_McKeanKernel} characterizes this distinction.
\begin{proposition}\label{prop:TV_bias_McKeanKernel}
Consider a SMC sampler with McKean transition kernel $K_{p,\eta}$ satisfying~\eqref{eq:mckean-consistency} for a FK model, then any $N\geq 1$ and any $n\ge 1$, set $\nu:=\Phi_{p}(\eta_{p-1}^N)$, we have
\begin{align}
\label{eq:TV_bias_McKeanKernel}
&\;\|\bar\eta_n^N-\eta_n\|_{\mathrm{TV}}
\le
\frac{1}{2{N}}\sum_{p=1}^n \Bigg( q_{p,n}\,\beta(D\Phi_{p,n})\cdot\mathbb E\bigg[
\sqrt{\eta_{p-1}^N\Big(K_{p,\eta_{p-1}^N}\big((G_{p,n,\nu}-K_{p,\eta_{p-1}^N}G_{p,n,\nu})^2\big)\Big)}
\bigg]\Bigg).
\end{align}
\end{proposition}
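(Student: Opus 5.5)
We follow the proof of Theorem~\ref{thm:TV_bias_local} verbatim up to the point where the remainder is bounded, and then keep the kernel-dependent conditional variance instead of bounding it by oscillations. We fix $f\in\mathrm{Osc}_1(E_n)$ and start from the bias telescoping identity \eqref{eq:bias_telescoping_random} of Lemma~\ref{lem:random_telescoping}. This writes $[\bar\eta_n^N-\eta_n](f)$ as $\sum_{p=1}^n\mathbb E[(\Phi_{p,n}(\eta_p^N)-\Phi_{p,n}(\nu))(f)]$, where $\nu=\nu_p^N=\Phi_p(\eta_{p-1}^N)$.

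For each $p$ we apply the first-order expansion \eqref{eq:FK_first_order_plus_remainder} of Lemma~\ref{lem:FK_constants_control} at base point $\nu$. The linear term $[\eta_p^N-\nu](D_\nu\Phi_{p,n}f)$ has zero conditional mean given $\mathcal F_{p-1}^N$ by \eqref{eq:cond_unbiased_general}. Only the remainder \eqref{eq:FK_remainder_tensor} survives:
\[
-\frac{1}{\eta_p^N(G_{p,n,\nu})}\,[\eta_p^N-\nu](G_{p,n,\nu})\,[\eta_p^N-\nu](D_\nu\Phi_{p,n}f).
\]
We bound the prefactor by $q_{p,n}$, using $G_{p,n,\nu}\ge q_{p,n}^{-1}$ pointwise. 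We then apply conditional Cauchy--Schwarz to the product of the two centered fluctuations.

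The key step is to evaluate the two conditional second moments differently. For the $G$-factor we use the exact identity \eqref{eq:cond_second_moment_general} with $u=G_{p,n,\nu}$. This gives
\[
\tfrac1N\,\eta_{p-1}^N\Big(K_{p,\eta_{p-1}^N}\big((G_{p,n,\nu}-K_{p,\eta_{p-1}^N}G_{p,n,\nu})^2\big)\Big),
\]
which is exactly the kernel-sensitive quantity under the square root in \eqref{eq:TV_bias_McKeanKernel}. For the $D\Phi$ factor we use the crude bound \eqref{eq:cond_cov_osc_bound}: its conditional second moment is at most $\frac{1}{4N}\mathrm{osc}(D_\nu\Phi_{p,n}f)^2\le\frac{1}{4N}\beta(D\Phi_{p,n})^2$. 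Its square root is therefore $\frac{1}{2\sqrt N}\beta(D\Phi_{p,n})$. Multiplying the two square roots gives $\frac{1}{2N}q_{p,n}\beta(D\Phi_{p,n})\sqrt{\eta_{p-1}^N(K(\cdots)^2)}$. We then take the outer expectation over $\mathcal F_{p-1}^N$, sum over $p$, and take the supremum over $f\in\mathrm{Osc}_1$, as in Theorem~\ref{thm:TV_bias_local}.

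The one point needing care is that $G_{p,n,\nu}$ is $\mathcal F_{p-1}^N$-measurable, since $\nu$ is. This is what lets \eqref{eq:cond_second_moment_general} apply with a fixed test function conditionally on $\mathcal F_{p-1}^N$, and we should state it explicitly. The normalization of TV relative to $\mathrm{osc}$ follows the same convention as Theorem~\ref{thm:TV_bias_local}, so no extra constants appear. No other obstacle is expected, because the argument is a refinement of the existing proof. As a sanity check, bounding the variance by $\frac14\mathrm{osc}(G)^2\le\frac14(q-q^{-1})^2$ recovers the earlier bound of Theorem~\ref{thm:TV_bias_local}.
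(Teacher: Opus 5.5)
Your proposal is correct and follows the paper's proof essentially step for step. Like the paper, you use telescoping via Lemma~\ref{lem:random_telescoping}, the first-order expansion at $\nu=\Phi_p(\eta_{p-1}^N)$ with the linear term killed by conditional unbiasedness, the bound $q_{p,n}$ on the prefactor, conditional Cauchy--Schwarz, the exact identity \eqref{eq:cond_second_moment_general} for the $G_{p,n,\nu}$-fluctuation, and the oscillation bound $\frac{1}{4N}\beta(D\Phi_{p,n})^2$ for the $D\Phi$-fluctuation; your explicit note that $G_{p,n,\nu}$ and $D_\nu\Phi_{p,n}(f)$ are $\mathcal F_{p-1}^N$-measurable makes precise a point the paper uses implicitly, and your sanity check recovering Theorem~\ref{thm:TV_bias_local} is accurate.
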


\begin{proof}[Proof of Proposition~\ref{prop:TV_bias_McKeanKernel}]
Fix $n\ge 1$ and $N\ge 1$, and let $f\in\mathrm{Osc}_1(E_n)$.
By the bias telescoping \eqref{eq:bias_telescoping_random},
\[
[\bar\eta_n^N-\eta_n](f)
=
\sum_{p=1}^n
\mathbb E\Big[
\big(\Phi_{p,n}(\eta_p^N)-\Phi_{p,n}(\Phi_p(\eta_{p-1}^N))\big)(f)
\Big].
\]
For each $p$, set $\nu:=\Phi_p(\eta_{p-1}^N)=\nu_p^N$.
Applying the first-order expansion (Lemma~\ref{lem:FK_constants_control}) to $\Phi_{p,n}$
at $(\eta,\mu)=(\nu,\eta_p^N)$ gives
\[
\big(\Phi_{p,n}(\eta_p^N)-\Phi_{p,n}(\nu)\big)(f)
=
[\eta_p^N-\nu]\big(D_{p,n,\nu}(f)\big)
+
R^{\Phi_{p,n}}(\eta_p^N,\nu)(f).
\]
Taking conditional expectation given $\mathcal F_{p-1}^N$
and using Lemma~\ref{lem:one_step_covariance}--\eqref{eq:cond_unbiased_general}
kills the linear term, hence
\[
\mathbb E\Big[
\big(\Phi_{p,n}(\eta_p^N)-\Phi_{p,n}(\nu)\big)(f)\,\Big|\,\mathcal F_{p-1}^N
\Big]
=
\mathbb E\big[ R^{\Phi_{p,n}}(\eta_p^N,\nu)(f)\mid \mathcal F_{p-1}^N\big].
\]

Using the explicit FK remainder form \eqref{eq:FK_remainder_tensor},
\[
R^{\Phi_{p,n}}(\eta_p^N,\nu)(f)
=
-\frac{1}{\eta_p^N(G_{p,n,\nu})}\,
[\eta_p^N-\nu](G_{p,n,\nu})\,
[\eta_p^N-\nu]\big(D_{p,n,\nu}(f)\big),
\]
in which again, $D_{p,n,\eta}$ is a shorthand for $D_\eta\Phi_{p,n}$.
By the definition of $q_{p,n}$ we have $1/\eta_p^N(G_{p,n,\nu})\le q_{p,n}$, hence
\begin{align*}
\Big|
\mathbb E\big[ R^{\Phi_{p,n}}(\eta_p^N,\nu)(f)\mid \mathcal F_{p-1}^N\big]
\Big|
&\le
q_{p,n}\,
\mathbb E\Big[
\big|[\eta_p^N-\nu](G_{p,n,\nu})\big|\,
\big|[\eta_p^N-\nu]\big(D_{p,n,\nu}(f)\big)\big|
\ \Big|\ \mathcal F_{p-1}^N
\Big]\\
&\le
q_{p,n}\,
\Big(\mathbb E\big[[\eta_p^N-\nu](G_{p,n,\nu})^2\mid \mathcal F_{p-1}^N\big]\Big)^{1/2}
\Big(\mathbb E\big[[\eta_p^N-\nu](D_{p,n,\nu}(f))^2\mid \mathcal F_{p-1}^N\big]\Big)^{1/2}.
\end{align*}
By Lemma~\ref{lem:one_step_covariance}--\eqref{eq:cond_second_moment_general} applied to $u=G_{p,n,\nu}$,
\[
\mathbb E\big[[\eta_p^N-\nu](G_{p,n,\nu})^2\mid \mathcal F_{p-1}^N\big]
=
\frac{1}{N}\,
\eta_{p-1}^N\!\Big(
K_{p,\eta_{p-1}^N}\big((G_{p,n,\nu}-K_{p,\eta_{p-1}^N}G_{p,n,\nu})^2\big)
\Big).
\]
For the $D$-term, Lemma~\ref{lem:one_step_covariance} and the elementary bound
$\eta K((h-Kh)^2)\le \frac14\,\mathrm{osc}(h)^2$ yield
\[
\mathbb E\big[[\eta_p^N-\nu](D_{p,n,\nu}(f))^2\mid \mathcal F_{p-1}^N\big]
=
\frac{1}{N}\eta_{p-1}^N K_{p,\eta_{p-1}^N}\big((D_{p,n,\nu}(f)-K_{p,\eta_{p-1}^N}D_{p,n,\nu}(f))^2\big)
\le \frac{1}{4N}\,\mathrm{osc}(D_{p,n,\nu}(f))^2.
\]
Using $\mathrm{osc}(D_{p,n,\nu}(f))\le \beta(D\Phi_{p,n})\,\mathrm{osc}(f)\le \beta(D\Phi_{p,n})$
for $f\in\mathrm{Osc}_1(E_n)$, we obtain
\[
\Big(\mathbb E\big[[\eta_p^N-\nu](D_{p,n,\nu}(f))^2\mid \mathcal F_{p-1}^N\big]\Big)^{1/2}
\le \frac{1}{2\sqrt N}\,\beta(D\Phi_{p,n}).
\]
Combining the above displays gives, for any $f\in\mathrm{Osc}_1(E_n)$,
\[
\Big|
\mathbb E\Big[
\big(\Phi_{p,n}(\eta_p^N)-\Phi_{p,n}(\nu)\big)(f)
\Big]
\Big|
\le
\frac{1}{2 N}\,
q_{p,n}\,\beta(D\Phi_{p,n})\,
\mathbb E\Bigg[
\sqrt{\eta_{p-1}^N\Big(K_{p,\eta_{p-1}^N}\big((G_{p,n,\nu}-K_{p,\eta_{p-1}^N}G_{p,n,\nu})^2\big)\Big)}
\Bigg].
\]
Summing over $p=1,\dots,n$ and taking the supremum over $f\in\mathrm{Osc}_1(E_n)$
yields \eqref{eq:TV_bias_McKeanKernel}.
\end{proof}

\begin{corollary}[Example McKean Kernel: stratified (quantile) transition]\label{cor:quantile_kernel_example}
Consider the \textbf{naive proposal} FK model and work in the regime $\varepsilon=O(1/T)$.
Recall we append a terminal step $T{+}1$ with $M_{T+1}=\mathrm{Id}$ so that the target is $\eta_{T+1}$.
Let $\bar\eta_{T+1}^N:=\mathbb E[\eta_{T+1}^N]$ denote the SMC output distribution.

Define the proxy of $Q_{p,T+1}(1)$ by the one-step ratio for any $p\in[T]$ as
\[
\widetilde Q_{p,T+1}(1)(s_{1:p})
\;:=\;
\frac{\hat V(s_0,s_{1:p})}{\hat V(s_0,s_{1:p-1})},
\]
and let $\nu_p^N:=\Phi_p(\eta_{p-1}^N)$.
For each $p$, construct a partition $(\mathsf E_{p,i})_{i=1}^N$ of $E_p$ by
\emph{$\nu_p^N$-quantiles of $\widetilde Q_{p,T+1}(1)$}, so that
$\nu_p^N(\mathsf E_{p,i})=1/N$ and $\mathsf E_{p,i}$ are ordered by increasing $\widetilde Q_{p,T+1}(1)$.
Define the index-stratified transition
\[
K^{\rm q}_{p,\eta_{p-1}^N}(X_{p-1}^i,dy)
:=
\nu_p^N(dy\mid \mathsf E_{p,i}),
\qquad i=1,\dots,N.
\]
Then a sufficient condition for $\|\bar\eta_{T+1}^N-\eta_{T+1}\|_{\rm TV}\le \delta_{\rm TV}$ is
\[
N
\;\gtrsim\;
\max\left\{
\left(\frac{L^6T}{\delta_{\rm TV}}\right)^{2/3},
\;
\frac{\varepsilon L^6 T^2}{\delta_{\rm TV}}
\right\}.
\]
\end{corollary}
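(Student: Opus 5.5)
The plan is to start from Proposition~\ref{prop:TV_bias_McKeanKernel} with $n=T+1$ and $K_{p,\eta}=K^{\rm q}_{p,\eta}$. First I will check the McKean consistency: $\eta_{p-1}^N K^{\rm q}_{p,\eta_{p-1}^N}=\frac1N\sum_i\nu_p^N(\cdot\mid\mathsf E_{p,i})=\nu_p^N$, which uses $\nu_p^N(\mathsf E_{p,i})=1/N$. The kernel is index-based rather than state-based, so I will view the particle label $i$ as part of the state. Lemma~\ref{lem:one_step_covariance} then goes through unchanged. The particles stay conditionally independent, and the variance term becomes $\frac1N\sum_i \mathrm{Var}_{\nu_p^N(\cdot\mid\mathsf E_{p,i})}(G_{p,T+1,\nu})$.

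Next I will bound the within-stratum variance. I decompose the normalized potential $G_{p,T+1,\nu}=Q_{p,T+1}(1)/\nu(Q_{p,T+1}(1))$ as $G_p(s_{1:p})\cdot R_p(s_{1:p})$ normalized, where
\[
R_p(s_{1:p}):=\mathbb E\Big[\prod_{t=p+1}^{T}G_t\,\Big|\,S_{1:p}=s_{1:p}\Big].
\]
The first factor is exactly the proxy $\widetilde Q_{p,T+1}(1)$. By the induction in Lemma~\ref{lem:control_constants}, $R_p\in[(1+\varepsilon)^{-(T-p)},(1+\varepsilon)^{T-p}]$, so $\mathrm{osc}(R_p)/\inf R_p\lesssim \varepsilon T$ when $\varepsilon=O(1/T)$. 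On stratum $i$, write $\mathrm{osc}_i(G)\le \sup R\cdot\mathrm{osc}_i(\widetilde Q)+\sup\widetilde Q\cdot\mathrm{osc}(R)$, with everything normalized by $\nu(Q_{p,T+1}(1))\gtrsim L^{-1}$. The strata are ordered by $\widetilde Q$ and $\widetilde Q\in[L^{-1},L]$, so the oscillations telescope: $\sum_i\mathrm{osc}_i(\widetilde Q)\le L-L^{-1}$. Then, using $\mathrm{Var}\le\frac14\mathrm{osc}^2$,
\[
\frac1N\sum_i \mathrm{osc}_i(\widetilde Q)^2\le \frac1N\Big(\max_i\mathrm{osc}_i\Big)\sum_i\mathrm{osc}_i\lesssim \frac{L^{4}}{N}.
\]
The second piece contributes a variance $\lesssim (L^2\varepsilon T)^2$. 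Taking square roots gives
\[
\mathbb E\sqrt{\cdots}\lesssim \frac{L^2}{\sqrt N}+L^2\varepsilon T,
\]
up to constant-order $(1+\varepsilon)^{O(T)}$ factors.

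Finally, I plug in $q_{p,T+1}\le L^2(1+\varepsilon)^{2(T-p)}=O(L^2)$ and $\beta(D\Phi_{p,T+1})\le 2q_{p,T+1}$, and sum over the $T$ values of $p$. This yields
\[
\|\bar\eta_{T+1}^N-\eta_{T+1}\|_{\rm TV}\lesssim \frac{T L^4}{N}\Big(\frac{L^2}{\sqrt N}+L^2\varepsilon T\Big)=\frac{L^6T}{N^{3/2}}+\frac{\varepsilon L^6T^2}{N}.
\]
Requiring each term to be at most $\delta_{\rm TV}/2$ gives the two branches of the $\max$ in the statement. The step I expect to be the main obstacle is the stratified variance bound: the quantile partition may not split atoms exactly, which may need randomized tie-breaking, and I must justify that $\widetilde Q$ captures the non-smoothed part of $Q_{p,T+1}(1)$ up to the $O(\varepsilon T)$ multiplicative error. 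I would try first to write $Q_{p,T+1}(1)=\widetilde Q\cdot R_p$ exactly and control the product's oscillation on each stratum as sketched above.
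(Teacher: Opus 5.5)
Your proposal is correct and follows the paper's proof essentially step by step. The shared steps are: the stratified version of Proposition~\ref{prop:TV_bias_McKeanKernel} (conditional unbiasedness plus the stratified variance $\frac1N\sum_i \mathrm{Var}_{\nu_p^N(\cdot\mid\mathsf E_{p,i})}(G_{p,T+1,\nu})$), the factorization $Q_{p,T+1}(1)=\widetilde Q_{p,T+1}(1)\cdot R_p$ with $R_p\in[(1+\varepsilon)^{-(T-p)},(1+\varepsilon)^{T-p}]$, the telescoping of within-stratum oscillations along the $\widetilde Q_{p,T+1}(1)$-ordering, and the $O(L^4)$ FK constants summed over $p$. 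The only difference is how you aggregate stratum oscillations: you use $\max_i\cdot\sum_i$ and treat the mismatch as a uniform per-stratum term, whereas the paper uses $\sqrt{\sum_i a_i^2}\le\sum_i a_i$. Your version is the one that actually delivers the $O(\varepsilon T L^2)$ mismatch contribution; plugging the paper's summed bound into its own displayed inequality would leave an extra $\sqrt N$ factor on that term.
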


\begin{proof}[Proof of Corollary~\ref{cor:quantile_kernel_example}]
Let $n:=T+1$ be the FK horizon (with $M_{T+1}=\mathrm{Id}$).
We follow the kernel-sensitive bias decomposition used in Proposition~\ref{prop:TV_bias_McKeanKernel},
but note that $K^{\rm q}$ is \emph{stratified} rather than i.i.d.\ across particles.
The key point is that stratification still preserves the same \emph{conditional unbiasedness} (McKean consistency),
so the linear term vanishes exactly as in the i.i.d.\ case; the remaining contribution is controlled by an
average of \emph{local conditional variances} across strata.

\medskip
\noindent\textbf{Step 1 (FK constants under naive proposal, horizon $n=T+1$).}
By Lemma~\ref{lem:control_constants}, for the naive proposal and $p\le T$,
\[
q_{p,n}\le L^2(1+\varepsilon)^{2(n-p-1)}=L^2(1+\varepsilon)^{2(T-p)}
\qquad\text{and}\qquad
\beta(D\Phi_{p,n})\le 2L^2(1+\varepsilon)^{2(T-p)}.
\]
Hence
\[
q_{p,n}\beta(D\Phi_{p,n})
\le 2L^4(1+\varepsilon)^{4(T-p)}
=O(L^4)\quad\text{under }\varepsilon=O(1/T),
\]
and therefore $\sum_{p=1}^{T} q_{p,n}\beta(D\Phi_{p,n})=O(L^4T)$.

\medskip
\noindent\textbf{Step 2 (Stratified update: conditional unbiasedness and a well-defined variance functional).}
Fix $p$ and write $\nu:=\nu_p^N=\Phi_p(\eta_{p-1}^N)$ and $\mathsf E_i:=\mathsf E_{p,i}$.
Under the stratified transition, conditional on $\mathcal F_{p-1}^N$, the new particles are independent with
\[
X_p^i \sim \nu(\,\cdot\,\mid \mathsf E_i),
\qquad i=1,\dots,N.
\]
Since $\nu(\mathsf E_i)=1/N$, for any bounded measurable $h$,
\[
\frac1N\sum_{i=1}^N \nu(\cdot\mid \mathsf E_i)=\nu(\cdot),
\]
so the stratified update is conditionally unbiased:
\[
\mathbb E[\eta_p^N(h)\mid \mathcal F_{p-1}^N]
=\frac1N\sum_{i=1}^N \nu(h\mid \mathsf E_i)
=\nu(h).
\]
Thus the same first-order (linear) term cancellation used in Proposition~\ref{prop:TV_bias_McKeanKernel}
continues to hold.

What differs is the second moment. Define the \emph{stratified variance functional}
\begin{equation}\label{eq:def_stratified_variance_functional}
\mathcal V^{\rm q}_{p,\nu}(h)
\;:=\;
\frac1N\sum_{i=1}^N {\rm Var}_{\nu(\cdot\mid \mathsf E_i)}(h).
\end{equation}
Then, using independence across $i$ conditional on $\mathcal F_{p-1}^N$,
\[
\mathbb E\!\left[(\eta_p^N(h)-\nu(h))^2\mid \mathcal F_{p-1}^N\right]
=
\frac{1}{N^2}\sum_{i=1}^N {\rm Var}_{\nu(\cdot\mid \mathsf E_i)}(h)
=
\frac{1}{N}\cdot \frac{1}{N}\mathcal V^{\rm q}_{p,\nu}(h).
\]
Moreover, for each stratum, ${\rm Var}_{\nu(\cdot\mid \mathsf E_i)}(h)\le \tfrac14\,\mathrm{osc}_{\mathsf E_i}(h)^2$,
so
\begin{align}
\label{eq:Kq_variance_via_sumosc_Tp1_A}
\sqrt{\mathcal V^{\rm q}_{p,\nu}(h)}
&\le
\frac12\sqrt{\frac1N\sum_{i=1}^N \mathrm{osc}_{\mathsf E_i}(h)^2}
\;\le\;
\frac{1}{2\sqrt N}\sum_{i=1}^N \mathrm{osc}_{\mathsf E_i}(h),
\end{align}
where the last step uses $\sqrt{\sum a_i^2}\le \sum |a_i|$.

\medskip
\noindent\textbf{Step 3 (Normalization and mismatch from using $\widetilde Q_{p,n}(1)$-quantiles).}
Recall $G_{p,n,\nu}(x)=Q_{p,n}(1)(x)/\nu(Q_{p,n}(1))$ and quantiles are invariant to
the normalization.
Oscillations rescale as
\[
\mathrm{osc}_{\mathsf E_i}(G_{p,n,\nu})
=
\frac{1}{\nu(Q_{p,n}(1))}\,
\mathrm{osc}_{\mathsf E_i}(Q_{p,n}(1)).
\]
Under naive proposal, Lemma~\ref{lem:control_constants} implies
$1/\nu(Q_{p,n}(1))=O(L)$ in the regime $\varepsilon=O(1/T)$ (absorbing $(1+\varepsilon)^{O(T)}$).

Next, chaining the one-step Bellman-error control over the remaining horizon
(as in the proof of Lemma~\ref{lem:control_constants}),
one obtains a pointwise multiplicative approximation
\[
Q_{p,n}(1)(x)=\widetilde Q_{p,n}(1)(x)\cdot(1\pm O(\varepsilon T)),
\qquad \forall x\in E_p.
\]
Let $\alpha:=O(\varepsilon T)$. This yields the deterministic oscillation bound on each stratum:
\[
\mathrm{osc}_{\mathsf E_i}(Q_{p,n}(1))
\le
(1+\alpha)\,\mathrm{osc}_{\mathsf E_i}(\widetilde Q_{p,n}(1))
+
O(\alpha)\,\|\widetilde Q_{p,n}(1)\|_\infty.
\]
Summing over $i$ and using the ordering of strata by $\widetilde Q_{p,n}(1)$,
\[
\sum_{i=1}^N \mathrm{osc}_{\mathsf E_i}(\widetilde Q_{p,n}(1))
\le \mathrm{osc}(\widetilde Q_{p,n}(1))=O(L),
\qquad
\|\widetilde Q_{p,n}(1)\|_\infty=O(L),
\]
by Assumption~\ref{assump:reward_model_ratio_bound}. Hence
\[
\sum_{i=1}^N \mathrm{osc}_{\mathsf E_i}(Q_{p,n}(1))
=
O(L)+O(\alpha)\,O(L)\,N,
\]
and therefore, using $1/\nu(Q_{p,n}(1))=O(L)$,
\[
\sum_{i=1}^N \mathrm{osc}_{\mathsf E_i}(G_{p,n,\nu})
=
O(L^2)+O(\alpha)\,O(L^2)\,N.
\]
Plugging $h=G_{p,n,\nu}$ into \eqref{eq:Kq_variance_via_sumosc_Tp1_A} gives
\[
\sqrt{\mathcal V^{\rm q}_{p,\nu}(G_{p,n,\nu})}
\;\le\;
O\!\left(\frac{L^2}{\sqrt N}\right)+O(\alpha)\,O(L^2).
\]

\medskip
\noindent\textbf{Step 4 (Combine).}
Insert the above display into the kernel-sensitive bias bound (whose prefactor is $1/N$),
and use Step~1 to obtain
\[
\|\bar\eta_{T+1}^N-\eta_{T+1}\|_{\rm TV}
\le
O\!\left(\sum_{p=1}^{T}\frac{L^4}{N}\left(\frac{L^2}{\sqrt N}+\alpha L^2\right)\right)
=
O\!\left(\frac{L^6T}{N^{3/2}}\right)
+
O\!\left(\frac{L^6\alpha T}{N}\right).
\]
Since $\alpha=O(\varepsilon T)$ and we are in the regime $\varepsilon=O(1/T)$, the claimed sufficient condition follows.
\end{proof}
\begin{remark}[From i.i.d.\ McKean transitions to stratified resampling]\label{rem:prop_to_cor_stratified}
Proposition~\ref{prop:TV_bias_McKeanKernel} is stated for a (possibly $\eta$-dependent) Markov kernel
$K_{p,\eta}$ that is applied \emph{identically} across particles at time $p$, so that, conditional on
$\mathcal F_{p-1}^N$, the offspring $\{X_p^i\}_{i=1}^N$ are i.i.d.\ from the same law
$\Phi_p(\eta_{p-1}^N)=\eta_{p-1}^N K_{p,\eta_{p-1}^N}$.
In Corollary~\ref{cor:quantile_kernel_example}, the ``quantile/stratified'' update instead samples
\[
X_p^i \sim \nu_p^N(\cdot \mid \mathsf E_{p,i}),\qquad i=1,\dots,N,
\]
where $\{\mathsf E_{p,i}\}_{i=1}^N$ is a measurable partition such that $\nu_p^N(\mathsf E_{p,i})=1/N$.
This update is \emph{not} representable as applying a single kernel identically to all particles; however,
it still satisfies the same McKean consistency (conditional unbiasedness): for any bounded measurable $h$,
\[
\mathbb E\!\left[\eta_p^N(h)\mid \mathcal F_{p-1}^N\right]
=\frac1N\sum_{i=1}^N \nu_p^N(h\mid \mathsf E_{p,i})
=\nu_p^N(h)
=\Phi_p(\eta_{p-1}^N)(h).
\]
Consequently, the first-order (linear) term in the telescoping / first-order decomposition underlying
Proposition~\ref{prop:TV_bias_McKeanKernel} cancels \emph{verbatim} for stratified resampling as well.
The only modification is in the second-moment control: the i.i.d.\ conditional variance proxy
$\eta_{p-1}^N K_{p,\eta_{p-1}^N}((h-K_{p,\eta_{p-1}^N}h)^2)$ should be replaced by the
\emph{stratified variance functional}
\[
\mathcal V^{\rm q}_{p,\nu}(h)
:=\frac1N\sum_{i=1}^N \mathrm{Var}_{\nu(\cdot\mid \mathsf E_{p,i})}(h),
\qquad \nu:=\nu_p^N,
\]
together with the deterministic oscillation bound
$\sqrt{\mathcal V^{\rm q}_{p,\nu}(h)}\le \frac{1}{2\sqrt N}\sum_{i=1}^N \mathrm{osc}_{\mathsf E_{p,i}}(h)$.
With this substitution, the remaining steps of Proposition~\ref{prop:TV_bias_McKeanKernel} apply
unchanged, yielding Corollary~\ref{cor:quantile_kernel_example}.

Stratified resampling has been extensively studied in the particle filtering literature; see, e.g.,~\cite{douc2005comparison}. The update considered here can be viewed as a particular stratification rule that is \emph{adapted} to the quantiles of a designated function (in our case, the proxy lookahead-to-go), so that each stratum carries equal mass while concentrating resolution where this function varies most.
\end{remark}

\clearpage
\section{Proofs in Section~\ref{sec:value_based}}\label{Appendix:Proofs_6}
\begin{lemma}[Uniform TV contraction for independent Metropolis--Hastings]\label{lem:MH_contraction_res21}
Let $(\mathsf E,\mathcal E)$ be a measurable space.
Let $\pi_{\rm ref}$ be a probability measure on $\mathsf E$ and let
$G,R:\mathsf E\to(0,\infty)$ be measurable with
\[
0<Z_G:=\pi_{\rm ref}(G)<\infty,
\qquad
0<Z_R:=\pi_{\rm ref}(R)<\infty.
\]
Define the target and proposal distributions
\[
\tilde\pi(dx):=\frac{G(x)}{Z_G}\,\pi_{\rm ref}(dx),
\qquad
q(dx):=\frac{R(x)}{Z_R}\,\pi_{\rm ref}(dx).
\]
Consider the \emph{independent} Metropolis--Hastings kernel $K$ with target $\tilde\pi$
and proposal $q$, i.e.
\[
K(x,dy)
= \alpha(x,y)\,q(dy) + \Bigl(1-\int \alpha(x,z)\,q(dz)\Bigr)\delta_x(dy),
\qquad
\alpha(x,y):=1\wedge \frac{\tilde\pi(dy)\,q(dx)}{\tilde\pi(dx)\,q(dy)}
=1\wedge \frac{G(y)R(x)}{G(x)R(y)} .
\]
Assume there exists $b\ge 1$ such that
\[
\operatorname*{ess\,sup}_{x\in\mathsf E}
\max\Bigl\{\frac{G(x)}{R(x)},\frac{R(x)}{G(x)}\Bigr\}
\;\le\; b .
\]
Then:
\begin{enumerate}
\item[(i)] (\textbf{Minorization}) For every $x\in\mathsf E$,
\[
K(x,\cdot)\ \ge\ \frac{1}{b^2}\,\tilde\pi(\cdot).
\]
\item[(ii)] (\textbf{Uniform TV contraction}) For any probability measures $\mu,\nu$ on $(\mathsf E,\mathcal E)$
and any integer $m\ge 1$,
\[
\|\mu K^m-\nu K^m\|_{\rm TV}
\ \le\
\Bigl(1-\frac{1}{b^2}\Bigr)^m\,\|\mu-\nu\|_{\rm TV}.
\]
In particular, the Dobrushin coefficient satisfies $\beta(K)\le 1-\frac{1}{b^2}$.
\end{enumerate}
\end{lemma}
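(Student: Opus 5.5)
The plan is the standard minorization-implies-Doeblin argument for independent Metropolis--Hastings: first prove (i), then derive (ii) by a coupling or Dobrushin-coefficient argument.

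\textbf{Step 1: minorization (i).} Fix $x$ and a measurable $A$. Dropping the nonnegative rejection part gives
\[
K(x,A)\ \ge\ \int_A \alpha(x,y)\,q(dy).
\]
I will write the integrand in terms of the normalized ratio $w:=\frac{d\tilde\pi}{dq}=\frac{G}{R}\cdot\frac{Z_R}{Z_G}$, so that $\alpha(x,y)=1\wedge \frac{w(y)}{w(x)}$. Then
\[
\alpha(x,y)\,q(dy)=\Bigl(\tfrac{1}{w(y)}\wedge\tfrac{1}{w(x)}\Bigr)\tilde\pi(dy)\ \ge\ \frac{1}{\sup w}\,\tilde\pi(dy).
\]
It remains to bound $\sup w$. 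The hypothesis gives $G/R\le b$ a.e. Since $Z_G=\pi_{\rm ref}(G)=\pi_{\rm ref}(R\cdot G/R)\ge Z_R/b$, we have $Z_R/Z_G\le b$. Hence $\sup w\le b^2$, which yields $K(x,\cdot)\ge b^{-2}\tilde\pi(\cdot)$. The only care needed is the essential-sup versus null sets, and that is harmless because everything sits inside integrals against $\pi_{\rm ref}$-absolutely continuous measures. (The sharper constant $1/\sup w$ is available, but $b^{-2}$ is what is claimed.)

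\textbf{Step 2: contraction (ii).} From (i), decompose
\[
K(x,\cdot)=b^{-2}\tilde\pi(\cdot)+(1-b^{-2})\,\tilde K(x,\cdot),
\]
where $\tilde K(x,\cdot):=\bigl(K(x,\cdot)-b^{-2}\tilde\pi\bigr)/(1-b^{-2})$ is a Markov kernel. This is well defined when $b>1$; when $b=1$ we have $K(x,\cdot)=\tilde\pi$ and the claim is trivial. For probability measures $\mu,\nu$ the $\tilde\pi$ parts cancel, so $\mu K-\nu K=(1-b^{-2})(\mu-\nu)\tilde K$. Since Markov kernels do not increase TV, this gives $\|\mu K-\nu K\|_{\rm TV}\le(1-b^{-2})\|\mu-\nu\|_{\rm TV}$, i.e.\ $\beta(K)\le 1-b^{-2}$ by the definition of the Dobrushin coefficient through oscillations. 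Iterating $m$ times by induction gives the stated bound.

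\textbf{Main obstacle.} The only genuinely delicate point is Step 1, specifically controlling the normalizing-constant ratio $Z_R/Z_G$, which is what produces $b^2$ rather than $b$. I expect no difficulty beyond that.
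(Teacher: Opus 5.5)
Your argument is essentially the paper's proof. It bounds $w=d\tilde\pi/dq\le b^2$ via $Z_R/Z_G\le b$, deduces $\alpha(x,y)\,q(dy)\ge b^{-2}\tilde\pi(dy)$, then applies the Doeblin split and the TV non-expansion of Markov kernels.

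One correction to your aside, which applies equally to the paper: the ess-sup issue is not harmless. The current state $x$ is fixed rather than integrated, and $\mu,\nu$ may charge $\pi_{\rm ref}$-null sets. For example, take $\mathsf E=\{0,1\}$, $\pi_{\rm ref}=\delta_1$, $R\equiv 1$, $G(1)=1$, $G(0)=100$. Then $b=1$ is admissible, yet $K(0,\cdot)=0.01\,\delta_1+0.99\,\delta_0$ does not dominate $\tilde\pi=\delta_1$. So you need the ratio bound pointwise, which is how it holds in the paper's applications, or you must restrict $x,\mu,\nu$ to a full-measure set.
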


\begin{proof}
Write the Radon--Nikodym derivative of $\tilde\pi$ w.r.t.\ $q$:
\[
w(x):=\frac{d\tilde\pi}{dq}(x)
=\frac{G(x)/Z_G}{R(x)/Z_R}
=\frac{Z_R}{Z_G}\cdot\frac{G(x)}{R(x)}.
\]
By the assumption, $\frac{G}{R}\le b$ $q$-a.e.  Also, since $\frac{G}{R}\in[1/b,b]$ $\pi_{\rm ref}$-a.e.,
we have $Z_G=\pi_{\rm ref}(G)\in[\frac{1}{b}Z_R,bZ_R]$, hence $\frac{Z_R}{Z_G}\le b$.
Therefore $w(x)\le b^2$ $q$-a.e. Denote $M:=\operatorname*{ess\,sup}_q w \le b^2$.

For independent MH, $\alpha(x,y)=1\wedge \frac{w(y)}{w(x)}$.
Using $w(x)\le M$ and $1\wedge u\ge u/M$ for $u\ge 0$, we get
\[
\alpha(x,y)=1\wedge \frac{w(y)}{w(x)} \ \ge\ \frac{w(y)}{M}.
\]
Hence for any measurable $A$,
\[
K(x,A)\ \ge\ \int_A \alpha(x,y)\,q(dy)
\ \ge\ \frac{1}{M}\int_A w(y)\,q(dy)
\ =\ \frac{1}{M}\tilde\pi(A)
\ \ge\ \frac{1}{b^2}\tilde\pi(A),
\]
which proves (i).

For (ii), (i) implies the usual Doeblin decomposition:
$K(x,\cdot)=\frac{1}{b^2}\tilde\pi(\cdot)+(1-\frac{1}{b^2})K'(x,\cdot)$ for some Markov kernel $K'$.
Thus for any signed measure $\Delta$ with $\Delta(\mathsf E)=0$,
\[
\|\Delta K\|_{\rm TV}\le \Bigl(1-\frac{1}{b^2}\Bigr)\|\Delta\|_{\rm TV},
\]
and iterating yields (ii).
\end{proof}

\phantomsection\label{proof:TC_ALS_Resampling+MH}
\begin{proof}[Proof of Theorem~\ref{Thm:TC_ALS_Resampling+MH}]
We consider the path space of importance resampling, where for each $h\in[H]$, the random variable
\begin{align*}
Y_h:=((S_1^{(h,1)},S_1^{(h,2)},\cdots, S_1^{(h,M)},A_1^h),(S_2^{(h,1)},S_2^{(h,2)},\cdots, S_2^{(h,M)},A_2^h),\cdots,(S_T^{(h,1)},S_T^{(h,2)},\cdots, S_T^{(h,M)},A_T^h))
\end{align*}
denote the $h$-th Metropolis-Hastings proposal, where $A_t^h\in[M]$ is the index of the resampled state among $M$ candidates.
We also write $S_{1:t}^{[h]}:=(S_1^{(h,A_1^h)},S_2^{(h,A_2^h)},\cdots, S_t^{(h,A_t^h)})$.
Then the acceptance rate $\alpha(X_{h-1},Y_h)$ is calculated, and $U_h\sim\mathrm{Unif}(0,1)$ is compared with the acceptance rate to decide whether or not we accept $Y_h$. If so $X_h=Y_h$, otherwise $X_h=X_{h-1}$.

We use $\bar Z_t^{[h]}:=\dfrac{1}{M}\sum_{j=1}^M\hat V(s_0,(S_{1:t-1}^{[h]},S_t^{(h,j)}))$ as the average over $M$ candidate reward given then prefix $S_{1:t}^{[h]}$.

To apply the Metropolis-Hastings algorithm, one typically calculate the probability of generating the entire $Y_h$ as
\begin{align*}
\mathbb P(Y_h)=\prod_{t=1}^T\left(\dfrac{\hat V(s_0,(S_{1:t-1}^{[h]},S_t^{(h,A_t^h)}))}{M\cdot\bar Z_t^{[h]}}\prod_{j=1}^M\pi_{\rm ref}(S_t^{(h,j)}|S_{1:t-1}^{[h]})\right).
\end{align*}

Correspondingly, the target distribution is chosen to be
\begin{align*}
\mathbb P(Y_h')\propto \hat V(S_0,S_{1:T}^{[h]})\prod_{t=1}^T\left(\dfrac{1}{M}\prod_{j=1}^M\pi_{\rm ref}(S_t^{(h,j)}|S_{1:t-1}^{[h]})\right)
\end{align*}
of which the marginal distribution of $S_{1:T}^{[h]}$ (or can be viewed as a deterministic transform of the random variable $Y_h\mapsto S_{1:T}^{[h]}$) is $\mathbb P(S_{1:T}^{[h]})\propto \pi_{\rm ref}(S_{1:T}^{[h]}|s_0)\cdot \hat V(S_0,S_{1:T}^{[h]})$, which is our desired $\tilde \pi_T$.

We now define the good set $\mathsf G$ at each $h\in[H]$ to be
\begin{align*}
Y_h\in\mathsf G\iff \forall t\in[T],\quad\left|\dfrac{\bar Z_t^{[h]}}{\mathbb E_{s_t\sim\pi_{\rm ref}(\cdot|s_{1:t-1}^{[h]})}[\hat V(s_0,(s_{1:t-1}^{[h]},s_t))]}-1\right|\leq\xi.
\end{align*}
And we define the good event $\mathcal E$ as
\begin{align*}
\mathcal E:=\bigcap_{h=2}^{H+1}\{Y_h\in\mathsf G\} \cap \{X_1\in\mathsf G\}.
\end{align*}
Now we claim that $X_1,X_2,\cdots,X_{H+1}$ is still a Markov chain conditioning on the good event $\mathcal E$. To show this, first define the filtration
\begin{align*}
\mathcal F_h:=\sigma(X_1,(Y_2.U_2),(Y_3,U_3),\cdots,(Y_h,U_h)).
\end{align*}
Since every $Y_h$ follows an i.i.d. distribution which we name $Q(\cdot)$, we consider the conditional law of $Y_h$ within the good set $\mathsf G$ as $Q(\cdot|\mathsf G)$. It's easy to see that conditioning on the good event $\mathcal E$, the $\{Y_h\}_{h=1}^{H+1}$ are still i.i.d. distributed according to $Q_{\mathsf G}:=Q(\cdot|\mathsf G)$, and are independent of $U_1,U_2,\cdots$. We also denote the target distribution as $\mathbb P(Y_{H+1}')\sim \Pi(\cdot)$, of which the conditional version is $\Pi_{\sf G}:=\Pi(\cdot|\sf G)$. We define the conditional MH kernel
\begin{align*}
K_{\mathsf G}(x,dy):=Q_{\mathsf G}(dy)\alpha(x,y)+\left(1-\int Q_{\sf G}(du)\alpha(x,u)\right)
\delta_x(dy).
\end{align*}
Our goal is to show that for any bounded measurable $f$,
\begin{align*}
\mathbb E[f(X_{h+1})|\mathcal E,\mathcal F_h]=K_{\sf G}f(X_h)\quad\text{almost surely on $\mathcal E$}.
\end{align*}
Recall that (by definition of MH update)
\[
X_{h+1}
=
\begin{cases}
Y_{h+1}, & U_{h+1}\le \alpha(X_h,Y_{h+1}),\\
X_h, & U_{h+1}> \alpha(X_h,Y_{h+1}).
\end{cases}
\]
Equivalently,
\[
f(X_{h+1})
=
f(Y_{h+1})\,\mathbb I\{U_{h+1}\le \alpha(X_h,Y_{h+1})\}
+
f(X_h)\,\mathbb I\{U_{h+1}> \alpha(X_h,Y_{h+1})\}.
\]

We now compute the conditional expectation given $\sigma(\mathcal F_h\cup\{\mathcal E\})$.
On $\mathcal E$, we have $Y_{h+1}\in\mathsf G$ and (by the construction of $\mathcal E$)
the proposal randomness at step $h+1$ is independent of $\mathcal F_h$ and has the
conditional law $Q_{\mathsf G}$; moreover $U_{h+1}\sim{\rm Unif}(0,1)$ is independent of
$(\mathcal F_h,Y_{h+1})$ (still under the conditioning on $\mathcal E$).
Also note that $X_h$ is $\mathcal F_h$-measurable.

Therefore, on $\mathcal E$,
\begin{align*}
\mathbb E\!\left[
f(Y_{h+1})\,\mathbb I\{U_{h+1}\le \alpha(X_h,Y_{h+1})\}
\;\middle|\; \mathcal E,\mathcal F_h
\right]
&=
\mathbb E\!\left[
\mathbb E\!\left[
f(Y_{h+1})\,\mathbb I\{U_{h+1}\le \alpha(X_h,Y_{h+1})\}
\;\middle|\; \mathcal E,\mathcal F_h, Y_{h+1}
\right]
\;\middle|\; \mathcal E,\mathcal F_h
\right] \\
&=
\mathbb E\!\left[
f(Y_{h+1})\,
\mathbb P\!\left(U_{h+1}\le \alpha(X_h,Y_{h+1})\middle|\mathcal E,\mathcal F_h,Y_{h+1}\right)
\;\middle|\; \mathcal E,\mathcal F_h
\right] \\
&=
\mathbb E\!\left[
f(Y_{h+1})\,\alpha(X_h,Y_{h+1})
\;\middle|\; \mathcal E,\mathcal F_h
\right] \\
&=
\int Q_{\mathsf G}(dy)\, f(y)\,\alpha(X_h,y).
\end{align*}
Similarly, on $\mathcal E$,
\begin{align*}
\mathbb E\!\left[
\mathbb I\{U_{h+1}> \alpha(X_h,Y_{h+1})\}
\;\middle|\; \mathcal E,\mathcal F_h
\right]
&=
1-\mathbb E\!\left[
\mathbb I\{U_{h+1}\le \alpha(X_h,Y_{h+1})\}
\;\middle|\; \mathcal E,\mathcal F_h
\right] \\
&=
1-\int Q_{\mathsf G}(dy)\,\alpha(X_h,y).
\end{align*}
Since $f(X_h)$ is $\mathcal F_h$-measurable, combining the two displays yields that, on $\mathcal E$,
\begin{align*}
\mathbb E[f(X_{h+1})\mid \mathcal E,\mathcal F_h]
&=
\int Q_{\mathsf G}(dy)\, f(y)\,\alpha(X_h,y)
+
f(X_h)\Big(1-\int Q_{\mathsf G}(dy)\,\alpha(X_h,y)\Big) \\
&=
K_{\mathsf G}f(X_h).
\end{align*}
This proves
\[
\mathbb E[f(X_{h+1})\mid \mathcal E,\mathcal F_h]=K_{\mathsf G}f(X_h)
\qquad\text{a.s. on }\mathcal E,
\]
as desired.

Therefore, the conditional law satisfies
\[
\mathrm{Law}(X_{H+1})|_{\mathcal E}=\mathrm{Law}(X_{1})|_{\mathcal E} K_{\sf G}^H.
\]
Also, we prove that $\Pi_{\sf G}$ is the stationary distribution of $K_{\sf G}$. 
Recall that $Q_{\sf G}:=Q(\cdot\mid\mathsf G)$ and $\Pi_{\sf G}:=\Pi(\cdot\mid\mathsf G)$ are both supported on $\mathsf G$.
Since $\Pi\ll Q$ on the path space, we also have $\Pi_{\sf G}\ll Q_{\sf G}$ on $\mathsf G$. Define the Radon--Nikodym derivative
\[
r(x):=\frac{d\Pi_{\sf G}}{dQ_{\sf G}}(x),\qquad x\in\mathsf G.
\]
Note that $r$ equals $w:=d\Pi/dQ$ up to a positive constant on $\mathsf G$, hence
\[
\alpha(x,y)=1\wedge \frac{w(y)}{w(x)} \;=\; 1\wedge \frac{r(y)}{r(x)},\qquad x,y\in\mathsf G.
\]

We first verify detailed balance for the off-diagonal part. For any $x,y\in\mathsf G$,
\begin{align*}
\Pi_{\sf G}(dx)\,Q_{\sf G}(dy)\,\alpha(x,y)
&= r(x)\,Q_{\sf G}(dx)\,Q_{\sf G}(dy)\,\Big(1\wedge \frac{r(y)}{r(x)}\Big) \\
&= Q_{\sf G}(dx)\,Q_{\sf G}(dy)\,\min\{r(x),r(y)\} \\
&= r(y)\,Q_{\sf G}(dy)\,Q_{\sf G}(dx)\,\Big(1\wedge \frac{r(x)}{r(y)}\Big) \\
&= \Pi_{\sf G}(dy)\,Q_{\sf G}(dx)\,\alpha(y,x).
\end{align*}
Therefore, for any measurable $B\subseteq \mathsf G$,
\begin{align*}
(\Pi_{\sf G}K_{\sf G})(B)
&=\int_{\mathsf G}\Pi_{\sf G}(dx)\int_{\mathsf G}Q_{\sf G}(dy)\,\alpha(x,y)\,\mathbb I_B(y)
+\int_{\mathsf G}\Pi_{\sf G}(dx)\Big(1-\int_{\mathsf G}Q_{\sf G}(du)\,\alpha(x,u)\Big)\mathbb I_B(x)\\
&=: I_1 + I_2.
\end{align*}
We simplify $I_1$ using the detailed balance identity above and Fubini-Tonelli's theorem:
\begin{align*}
I_1
&=\int_{\mathsf G}\int_{\mathsf G}\Pi_{\sf G}(dx)\,Q_{\sf G}(dy)\,\alpha(x,y)\,\mathbb I_B(y)
=\int_{\mathsf G}\int_{\mathsf G}\Pi_{\sf G}(dy)\,Q_{\sf G}(dx)\,\alpha(y,x)\,\mathbb I_B(y)\\
&=\int_{\mathsf G}\Pi_{\sf G}(dy)\,\mathbb I_B(y)\int_{\mathsf G}Q_{\sf G}(dx)\,\alpha(y,x)
=\int_{\mathsf G}\Pi_{\sf G}(dx)\,\mathbb I_B(x)\int_{\mathsf G}Q_{\sf G}(du)\,\alpha(x,u).
\end{align*}
Plugging this into $I_2$ gives
\begin{align*}
(\Pi_{\sf G}K_{\sf G})(B)
&=\int_{\mathsf G}\Pi_{\sf G}(dx)\,\mathbb I_B(x)\int_{\mathsf G}Q_{\sf G}(du)\,\alpha(x,u)
+\int_{\mathsf G}\Pi_{\sf G}(dx)\Big(1-\int_{\mathsf G}Q_{\sf G}(du)\,\alpha(x,u)\Big)\mathbb I_B(x)\\
&=\int_{\mathsf G}\Pi_{\sf G}(dx)\,\mathbb I_B(x)
=\Pi_{\sf G}(B).
\end{align*}
Hence $\Pi_{\sf G}K_{\sf G}=\Pi_{\sf G}$, i.e., $\Pi_{\sf G}$ is a stationary distribution of $K_{\sf G}$.

Therefore a standard Dobrushin contraction statement (Lemma~\ref{lem:MH_contraction_res21}) gives
\begin{align*}
\|\mathrm{Law}(X_{H+1})|_{\mathcal E}-\Pi_{\sf G}\|_{\rm TV}\leq (1-b^{-2})^H\|\mathrm{Law}(X_{1})|_{\mathcal E}-\Pi_{\sf G}\|_{\rm TV},
\end{align*}
where $b:=\left((1+\varepsilon)\dfrac{1+\xi}{1-\xi}\right)^{T-1}$. Thus
\begin{align*}
\|\mathrm{Law}(X_{H+1})|_{\mathcal E}-\Pi\|_{\rm TV}\leq&\;\|\mathrm{Law}(X_{H+1})|_{\mathcal E}-\Pi_{\sf G}\|_{\rm TV}+\|\Pi-\Pi_{\sf G}\|_{\rm TV}\\
\leq&\;(1-b^{-2})^H\|\mathrm{Law}(X_{1})|_{\mathcal E}-\Pi_{\sf G}\|_{\rm TV}+\|\Pi-\Pi_{\sf G}\|_{\rm TV}\\
\leq&\;2(1-b^{-2})^H+2\mathbb P(Y\notin\mathsf G).
\end{align*}

Notice that $S_{1:T}^{[H+1]}$ is obtained by a deterministic transformation (marginalization) $g:X_{H+1}\mapsto S_{1:T}^{[H+1]}$, and the marginalized version of $\Pi$ is exactly $\tilde\pi_T$, using the data processing inequality yields
\[
\|\mathrm{Law}(S_{1:T}^{[H+1]})|_{\mathcal E}-\tilde\pi_T\|_{\rm TV}\leq \|\mathrm{Law}(X_{H+1})|_{\mathcal E}-\Pi\|_{\rm TV}.
\]

Thus in the regime where $\varepsilon =O(1/T)$ and $\xi=O(1/T)$, we have $b=O(1)$. Therefore let $(1-b^{-2})^H\leq \delta_{\rm TV}/4$ and $\mathbb P(Y\notin\mathsf G)\leq \delta_{\rm TV}/4$ yields $H=O(\log(\delta_{\rm TV}^{-1}))$ and $M=O(C_{\rm act}T^2\log(\delta^{-1}))$ (with $C_{\rm act}$ defined in Lemma~\ref{lem:freedman}). Since $\mathbb P(\mathcal E)\geq 1-HT\delta$, replacing $\delta$ with $\delta/(HT)$, and using that $C_{\rm act}\leq L(1+\varepsilon)$ yields the total time complexity being $MTH=\tilde O(LT^3\log(\delta^{-1})\log(\delta_{\rm TV}^{-1}))$.
\end{proof}

\clearpage
\section{Rejection-based SP-gSMC with Metropolis-Hastings correction}\label{Appendix:Rejection_MH}

\textbf{Rejection-sampling MH.}
As an alternative to Resampling-pool MH in Section~\ref{sec:value_based}, one can use rejection sampling to obtain (conditionally) exact samples from the desired local tilt.
However, the acceptance probability of the outer MH step now depends on estimated normalizing constants, and the resulting estimation error enters the analysis in a fundamentally different way.
As we can see, while the inner sampling step is exact on a good event, the overall complexity becomes worse in $\delta_{\rm TV}$.

\begin{lemma}[Exact MH contraction for Algorithm~\ref{alg:SMC_MH_at_End}]\label{lem:ALS_exact_MH_contraction}
Consider Algorithm~\ref{alg:SMC_MH_at_End} in the idealized setting where all conditional expectations
appearing in the acceptance ratio are computed \emph{exactly} (i.e.\ no normalization-factor estimation error).
Suppose Assumption~\ref{assump:bellman_error_uniform_bound} holds with parameter $\varepsilon$.
Let $K$ be the independent Metropolis--Hastings kernel on $\mathcal S_{1:T}$ used in the outer MH loop,
targeting $\tilde\pi_T(\,ds_{1:T}\mid s_0)\propto \pi_{\rm ref}(ds_{1:T}\mid s_0)\hat V(s_0,s_{1:T})$
and proposing from the (naive-guided) trajectory distribution produced by one run of the inner loop.
Then for all probability measures $\mu,\nu$ on $\mathcal S_{1:T}$ and all integers $m\ge 1$,
\[
\|\mu K^m-\nu K^m\|_{\rm TV}
\ \le\
c^m\,\|\mu-\nu\|_{\rm TV},
\qquad
c\ \le\ 1-\frac{1}{(1+\varepsilon)^{2T}}.
\]
\end{lemma}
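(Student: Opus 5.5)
The plan is to reduce the statement to Lemma~\ref{lem:MH_contraction_res21}. The outer loop of Algorithm~\ref{alg:SMC_MH_at_End} is an \emph{independent} MH kernel. In the idealized setting the proposal is the exact SP-gSMC path law $\hat\pi_T$, and the target is $\tilde\pi_T$. Both have densities with respect to $\pi_{\rm ref}(\cdot\mid s_0)$, so I will write them in the form $q(dx)\propto R(x)\pi_{\rm ref}(dx)$ and $\tilde\pi(dx)\propto G(x)\pi_{\rm ref}(dx)$. It then suffices to bound $G/R$ and $R/G$ uniformly and read off the contraction constant $1-b^{-2}$.

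First I identify $R$ and $G$. Set $Z_t(s_{1:t-1}):=\mathbb E_{s_t\sim\pi_{\rm ref}(\cdot\mid s_0,s_{1:t-1})}[\hat V(s_0,s_{1:t})]$. Conditionally on the good event of Theorem~\ref{thm:exact-conditional-accept}, each inner rejection-sampling call returns an exact draw from $\pi_{\rm ref}(s_t\mid\cdot)\hat V(s_0,s_{1:t})/Z_t(s_{1:t-1})$. Multiplying over $t$ gives
\[
\hat\pi_T(s_{1:T})=\pi_{\rm ref}(s_{1:T}\mid s_0)\prod_{t=1}^T\frac{\hat V(s_0,s_{1:t})}{Z_t(s_{1:t-1})},
\]
so $R(s_{1:T})=\prod_{t=1}^T \hat V(s_0,s_{1:t})/Z_t(s_{1:t-1})$, while $G(s_{1:T})=\hat V(s_0,s_{1:T})$. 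I will also check that the acceptance ratio in the algorithm, $\frac{w'}{\hat V(s^{\rm acc})}\cdot\frac{\hat V(s^{[h]})}{w}$ with exact expectations, equals $\frac{G(y)R(x)}{G(x)R(y)}$. This confirms that $K$ is exactly the kernel of Lemma~\ref{lem:MH_contraction_res21}.

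Next I bound the ratio $G/R$. Use the convention $\hat V(s_0,s_{1:0})=1$ and telescope $\hat V(s_0,s_{1:T})=\prod_{t=1}^T \hat V(s_0,s_{1:t})/\hat V(s_0,s_{1:t-1})$. This gives
\[
\frac{G}{R}(s_{1:T})=\prod_{t=1}^{T}\frac{Z_t(s_{1:t-1})}{\hat V(s_0,s_{1:t-1})}.
\]
By Assumption~\ref{assump:bellman_error_uniform_bound}, each factor lies in $[(1+\varepsilon)^{-1},1+\varepsilon]$. The $t=1$ factor compares $\mathbb E[\hat V(s_0,s_1)]$ with $\hat V(s_0)=1$, and it is covered by the same assumption, or equals one with the normalization used in the paper. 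Hence $\max\{G/R,R/G\}\le(1+\varepsilon)^T=:b$.

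Finally I apply Lemma~\ref{lem:MH_contraction_res21}(ii) with this $b$. It gives $\|\mu K^m-\nu K^m\|_{\rm TV}\le(1-b^{-2})^m\|\mu-\nu\|_{\rm TV}$ with $b^{-2}=(1+\varepsilon)^{-2T}$, which is the claim.

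The main obstacle is bookkeeping rather than analysis. I need to get the boundary term $t=1$ and the terminal step right so that the exponent is $T$ and not $T\pm1$. I also need to confirm that the ratio bound holds for every path, not merely $\pi_{\rm ref}$-a.e., since the lemma only requires the essential supremum.
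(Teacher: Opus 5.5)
Your proposal is correct and follows essentially the same route as the paper: write the inner-loop path law and the target as $R\,\pi_{\rm ref}$ and $G\,\pi_{\rm ref}$, telescope $G/R$ into one-step Bellman ratios, bound each ratio by Assumption~\ref{assump:bellman_error_uniform_bound}, and apply Lemma~\ref{lem:MH_contraction_res21} with $b=(1+\varepsilon)^T$.

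The one difference is the $t=1$ factor. The normalization $\hat V(s_0)=1$ does not by itself make $Z_1(s_0)/\hat V(s_0)$ equal to one, and the assumption is only stated for $t\in[T]$. The paper's fix is cleaner: it takes $G=\hat V(s_0,s_{1:T})/Z_1(s_0)$, which is legitimate because $G$ is only defined up to a constant. That removes the $t=1$ factor and gives $\max\{G/R,R/G\}\le(1+\varepsilon)^{T-1}\le(1+\varepsilon)^T$.
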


\begin{proof}
Let $\pi_{\rm ref}(\cdot\mid s_0)$ be the base path measure on $\mathcal S_{1:T}$ induced by
$\pi_{\rm ref}(\cdot\mid s_0,s_{1:t-1})$.  For $t=1,\dots,T$, define the exact one-step normalizers
\[
Z_t(s_0,s_{1:t-1})
:=
\mathbb E_{s_t\sim \pi_{\rm ref}(\cdot\mid s_0,s_{1:t-1})}\!\big[\hat V(s_0,(s_{1:t-1},s_t))\big],
\qquad s_{1:0}:=\varnothing.
\]
The proposal distribution on full trajectories generated by the inner loop is proportional to
$\pi_{\rm ref}(s_{1:T}\mid s_0)\,R(s_{1:T})$ with
\[
R(s_{1:T})
:=\prod_{t=1}^T\frac{\hat V(s_0,s_{1:t})}{Z_t(s_0,s_{1:t-1})},
\]
while the target is proportional to $\pi_{\rm ref}(s_{1:T}\mid s_0)\,G(s_{1:T})$ with
\[
G(s_{1:T})
:=\frac{\hat V(s_0,s_{1:T})}{Z_1(s_0)}.
\]
Therefore
\[
\frac{R(s_{1:T})}{G(s_{1:T})}
=\prod_{t=1}^{T-1}\frac{\hat V(s_0,s_{1:t})}{Z_{t+1}(s_0,s_{1:t})},
\qquad
\frac{G(s_{1:T})}{R(s_{1:T})}
=\prod_{t=1}^{T-1}\frac{Z_{t+1}(s_0,s_{1:t})}{\hat V(s_0,s_{1:t})}.
\]
By Assumption~\ref{assump:bellman_error_uniform_bound}, for every prefix $s_{1:t}$,
\[
\max\Bigg\{
\frac{\hat V(s_0,s_{1:t})}{Z_{t+1}(s_0,s_{1:t})},\;
\frac{Z_{t+1}(s_0,s_{1:t})}{\hat V(s_0,s_{1:t})}
\Bigg\}\le 1+\varepsilon,
\]
hence pointwise on $\mathcal S_{1:T}$,
\[
\max\Big\{\tfrac{R}{G},\tfrac{G}{R}\Big\}\le (1+\varepsilon)^T.
\]
Applying Lemma~\ref{lem:MH_contraction_res21} with $b=(1+\varepsilon)^T$ yields the TV contraction with
$c\le 1-1/b^2 = 1-(1+\varepsilon)^{-2T}$.
\end{proof}

\begin{theorem}[Time complexity for rejection-based SP-gSMC with Metropolis-Hastings]\label{Thm:TC_ALS+MH}
Under Assumption~\ref{assump:reward_model_ratio_bound} and~\ref{assump:bellman_error_uniform_bound}, and consider the regime where bellman error $\varepsilon=O(1/T)$. Then for any $0<\delta\lesssim  \delta_{\rm TV}$, using time $\tilde O\left(\frac{LT^3\log(\delta^{-1})}{\delta_{\rm TV}^2}\right)$, there exist a good event $\mathcal E$ such that $\mathbb P(\mathcal E)\geq 1-\delta$, conditioning on which the sampling distribution of Algorithm~\ref{alg:SMC_MH_at_End} achieves a TV error less than $\delta_{\rm TV}$, i.e. $\|\tilde\pi_T-\hat\pi_T|_{\mathcal E}\|_{\rm TV}\leq \delta_{\rm TV}$.
\end{theorem}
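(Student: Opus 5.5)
The plan is to compare the implementable chain of Algorithm~\ref{alg:SMC_MH_at_End} with the idealized exact-normalizer chain of Lemma~\ref{lem:ALS_exact_MH_contraction}, on a good event where every inner primitive behaves well. Let $\mathcal E$ be the intersection of two families of events. The first is the rejection-sampling events $\mathcal E_{\rm accept}$ of Theorem~\ref{thm:exact-conditional-accept}, one for each call $(h,t)$. On these, each proposed token is an exact draw from the local tilt $\propto\pi_{\rm ref}(\cdot\mid s_0,s_{1:t-1})\hat V(s_0,(s_{1:t-1},\cdot))$, provided the threshold is taken as $O(L)\ge 4C_{\rm act}$, using $C_{\rm act}\le L(1+\varepsilon)$. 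The second is the Freedman events of Lemma~\ref{lem:freedman}. On these, every empirical normalizer $\hat{\mathbb E}$ used in $w$ and $w'$ lies within a factor $1\pm\xi$ of the true $Z_t$, where $\xi=2\sqrt2\sqrt{C_{\rm act}\log(4/\delta')/N_m}$. There are $O(HT)$ calls in total, so we set $\delta'=\delta/O(HT)$ and a union bound gives $\mathbb P(\mathcal E)\ge1-\delta$.

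On $\mathcal E$, the proposal law on trajectories is exactly the idealized $q\propto\pi_{\rm ref}R$ of Lemma~\ref{lem:ALS_exact_MH_contraction}. The only imperfection is the acceptance ratio. The computed ratio $\hat\alpha(x,y)$ differs from the exact independent-MH ratio $\alpha(x,y)=1\wedge\frac{G(y)R(x)}{G(x)R(y)}$ by a multiplicative factor in $[(\frac{1-\xi}{1+\xi})^{T},(\frac{1+\xi}{1-\xi})^{T}]$, because each of the $T$ factors in $w$ and in $w'$ carries relative error $\xi$. Write $\rho:=(\frac{1+\xi}{1-\xi})^{2T}-1=O(T\xi)$. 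Then $|\hat\alpha-\alpha|\le\rho$ pointwise, and the implemented kernel $\hat K$ satisfies $\|\hat K(x,\cdot)-K(x,\cdot)\|_{\rm TV}\le\rho$ uniformly in $x$. There is one subtlety: the normalizers for the \emph{current} state $s^{\rm acc}$ are re-estimated at every iteration. I would treat this by conditioning on $\mathcal E$ and viewing $\hat K$ as a time-inhomogeneous family of kernels, each within $\rho$ of $K$ in TV. This avoids needing exact reversibility of $\hat K$.

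Next comes the perturbation argument. Lemma~\ref{lem:ALS_exact_MH_contraction} gives $\beta(K)\le c:=1-(1+\varepsilon)^{-2T}$, which is a constant bounded away from $1$ when $\varepsilon=O(1/T)$. The standard telescoping $\mu\hat K_1\cdots\hat K_H-\tilde\pi_TK^H=\sum_h(\cdots)(\hat K_h-K)K^{H-h}+\ldots$ then yields
\[
\|\hat\pi_T|_{\mathcal E}-\tilde\pi_T\|_{\rm TV}\le c^{H}+\rho\sum_{j\ge0}c^j\le c^H+\frac{\rho}{1-c}=c^H+O(T\xi).
\]
One must also account for the fact that conditioning on $\mathcal E$ slightly distorts the proposal law. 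This adds an $O(\delta)$ term, which is absorbed using $\delta\lesssim\delta_{\rm TV}$. Taking $H=O(\log\delta_{\rm TV}^{-1})$ and $\xi\asymp\delta_{\rm TV}/T$ gives $N_m\asymp C_{\rm act}T^2\log(\delta'^{-1})/\delta_{\rm TV}^2$.

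For the cost, each of the $H$ iterations performs $T$ rejection-sampling calls, each of cost $O(L\log\delta'^{-1})$, and $2T$ normalizer estimates, each of cost $N_m$. The total is therefore $\tilde O(HTN_m)=\tilde O(LT^3\log(\delta^{-1})/\delta_{\rm TV}^2)$, where the logarithmic $H$ factor is absorbed into $\tilde O$.

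The main obstacle is the perturbation step. It requires rigorously justifying that, conditioned on $\mathcal E$, the chain is a sequence of kernels each uniformly $\rho$-close to $K$. The re-estimated normalizers of $s^{\rm acc}$ break the Markov property on the original state space. I would handle this by augmenting the state with the estimation randomness, as in the proof of Theorem~\ref{Thm:TC_ALS_Resampling+MH}, and applying the $\rho$-bound conditionally on $\mathcal F_h$. This linear accumulation of $\rho$ is exactly what produces the $\delta_{\rm TV}^{-2}$ rather than logarithmic dependence.
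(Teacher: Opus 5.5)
Your proposal is correct and is essentially the paper's argument. Both telescope the noisy-acceptance chain of Algorithm~\ref{alg:SMC_MH_at_End} against the exact independent-MH kernel $K$ of Lemma~\ref{lem:ALS_exact_MH_contraction} (the paper does it dually with $f_h=K^{H-h+1}f$). Both pay $|\hat\alpha-\alpha|=O(T\xi)$ per step on the good event and $O(\delta)$ off it, and are therefore forced into $\xi\approx\delta_{\rm TV}/T$ up to logarithmic factors, i.e.\ $N_m=\tilde O(C_{\rm act}T^2\log(\delta^{-1})/\delta_{\rm TV}^2)$.

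Your version is slightly sharper. Keeping $\mathrm{osc}(K^{H-h}f)\le c^{H-h}$ gives $\rho/(1-c)$ rather than the paper's $H\xi_{\rm acc}$. So, contrary to your closing sentence, the $\delta_{\rm TV}^{-2}$ comes from needing $\rho\lesssim\delta_{\rm TV}$ at all, not from linear accumulation.

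Your ``main obstacle'' is also milder than you fear. Since $w'$ is re-estimated with fresh randomness each iteration, the unconditional chain is already Markov, with a kernel uniformly $O(\rho+T\delta')$-close to $K$ in TV. Conditioning on $\mathcal E$ at the end then costs at most $\mathbb P(\mathcal E^c)\le\delta$.
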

\begin{proof}[Proofs of Theorem~\ref{Thm:TC_ALS+MH}]
In this proof, we denote $X_h\in\mathcal S_{1:T}$ the accepted proposal at the $h$-th step of Metropolis-Hastings. $Y_h\in\mathcal S_{1:T}$ is the proposal before rejection/acceptance. The output distribution on an event $\hat\pi_T|_{\mathcal E}$ is defined as $\mathrm{Law}(X_{H+1})|_{\mathcal E}$. Note that the same as in the proof of Proposition~\ref{prop:TC_SMC_optimal_proposal}, the time contributed by rejection sampling proposal is negligible comparing to estimating the normalizing constant, since the former only requires an estimation up to constant relative error, and the latter, as we will see again in this proof, requires estimation up to $O(\delta_{\rm TV}/T)$ relative error. We omit the former, and the rate stays unchanged.

Fix $f\in{\rm Osc}_1$.
Define the exact MH semigroup
\[
f_h:=K^{H-h+1}f,\qquad h=1,\dots,H+1,
\]
so $f_1=K^H f$ and $f_{H+1}=f$, and $f_h=K f_{h+1}$. Then $\mathrm{osc}(f_h)\leq 1$.

Let $X_1$ be the first proposal (accepted by default). For each $h=1,\dots,H$,
given current state $X_h$, draw an independent proposal $Y_{h+1}$ and compute the noisy acceptance
probability $\hat\alpha_h=\hat\alpha_h(X_h,Y_{h+1})$; let $\alpha_h$ be the true acceptance probability.
Then draw $U_h\sim{\sf Unif}[0,1]$ and set
\[
X_{h+1}=
\begin{cases}
Y_{h+1}, & U_h\le \hat\alpha_h,\\
X_h, & U_h> \hat\alpha_h.
\end{cases}
\]
Let $\mathcal F_h$ be the sigma-algebra containing all randomness revealed up to \emph{just before}
drawing $U_h$ (so it contains $X_h$, $Y_{h+1}$ and all MC randomness used to compute $\hat\alpha_h$,
but not $U_h$ nor any future randomness).

Write the global good event as $\mathcal E=\mathcal E_{\le h}\cap \mathcal E_{>h}$ where
$\mathcal E_{>h}:=\cap_{j=h+1}^{H+1}\mathcal E_j$. Consider that every normalizing factors is estimated within $1+\xi$ relative accuracy. Then the accuracy of the normalizing factor is within $\xi_{\rm acc}=\bigg(\dfrac{1+\xi}{1-\xi}\bigg)^T-1$. Specifically, Lemma~\ref{lem:freedman} shows that the event $\mathcal E_h:=\left\{\left|\xi_{\rm acc}\right|\leq 16\sqrt{2}T\sqrt{\dfrac{C_{\rm act}\log(T\delta^{-1})}{N_m}}\right\}$ occurs with probability greater than $1-\delta$ conditioning on $\mathcal F_{h-1}$ when $N_m\geq 4C_{\rm act}T^2\log(T\delta^{-1})$. Thus $\mathbb P(\mathcal E_h)\geq 1-\delta$.

We have
\[
\mathbb E\big[f_{h+1}(X_{h+1})\mathbb I_{\mathcal E}\big]
=\mathbb E\big[f_{h+1}(X_{h+1})\big]-\mathbb E\big[f_{h+1}(X_{h+1})\mathbb I_{\mathcal E^c}\big]
.
\]

For each $h=1,\dots,H$, define the auxiliary ``true-accept'' next state
\[
\widetilde X_{h+1}=
\begin{cases}
Y_{h+1}, & U_h\le \alpha_h,\\
X_h, & U_h> \alpha_h.
\end{cases}
\]
Condition on $\mathcal F_h$ (so $U_h$ is the only remaining randomness).
Then
\[
\mathbb E_{U_h}\big[f_{h+1}(X_{h+1})\mid\mathcal F_h\big]
=
\hat\alpha_h f_{h+1}(Y_{h+1})+(1-\hat\alpha_h)f_{h+1}(X_h),
\]
and similarly
\[
\mathbb E_{U_h}\big[f_{h+1}(\widetilde X_{h+1})\mid\mathcal F_h\big]=\alpha_h f_{h+1}(Y_{h+1})+(1-\alpha_h)f_{h+1}(X_h),
\]
and
\[
\mathbb E[\mathbb E_{U_h}\big[f_{h+1}(\widetilde X_{h+1})\mid\mathcal F_h\big]]
=
\mathbb E[K f_{h+1}(X_h)]=\mathbb E[f_h(X_h)].
\]
We also have
\[
\mathbb E[\mathbb E_{U_h}\big[f_{h+1}(X_{h+1})\mid\mathcal F_h\big]]
=\mathbb E[f_{h+1}(X_{h+1})].
\]
Hence the exact identity
\begin{equation}
\label{eq:one_step_identity}
\mathbb E_{U_h}\big[f_{h+1}(X_{h+1})\mid\mathcal F_h\big]-\mathbb E_{U_h}\big[f_{h+1}(\widetilde X_{h+1})\mid\mathcal F_h\big]
=
(\hat\alpha_h-\alpha_h)\big(f_{h+1}(Y_{h+1})-f_{h+1}(X_h)\big).
\end{equation}

We are only interested in the difference under the good event $\mathcal E_{\leq h}$, in which $|\hat\alpha_h-\alpha_h|$ is small, so we also have
\begin{align*}
&\;\left|\mathbb E\left[\mathbb E_{U_h}\big[f_{h+1}(X_{h+1})\mid\mathcal F_h\big]-\mathbb E_{U_h}\big[f_{h+1}(\widetilde X_{h+1})\mid\mathcal F_h\big]\right]\right|\\
\leq &\;  \left|\mathbb E\left[\mathbb I_{\mathcal E_h}\left(\mathbb E_{U_h}\big[f_{h+1}(X_{h+1})\mid\mathcal F_h\big]-\mathbb E_{U_h}\big[f_{h+1}(\widetilde X_{h+1})\mid\mathcal F_h\big]\right)\right] \right|\\&\qquad\qquad\qquad\qquad\qquad+\left| \mathbb E\left[\mathbb I_{\mathcal E_h^c}\left(\mathbb E_{U_h}\big[f_{h+1}(X_{h+1})\mid\mathcal F_h\big]-\mathbb E_{U_h}\big[f_{h+1}(\widetilde X_{h+1})\mid\mathcal F_h\big]\right)\right]\right|  \\
\leq &\;\sup_{\mathcal E_h}|(\hat\alpha_h-\alpha_h)||\big(f_{h+1}(Y_{h+1})-f_{h+1}(X_h)\big)|\mathbb P(\mathcal E_h)+\mathrm{osc}(f_{h+1})\mathbb P(\mathcal E_h^c).
\end{align*}

Now we preform the telescoping
\[
\mathbb E\big[f(X_{H+1})\big]-\mathbb E\big[K^H(f)(X_1)\big]=\sum_{h=1}^H\mathbb E\left[\mathbb E_{U_h}\big[f_{h+1}(X_{h+1})\mid\mathcal F_h\big]-\mathbb E_{U_h}\big[f_{h+1}(\widetilde X_{h+1})\mid\mathcal F_h\big]\right].
\]
Thus
\begin{align*}
&\;\mathbb E\big[f(X_{H+1})\mathbb I_{\mathcal E}\big]/\mathbb P(\mathcal E)-\mathbb E\big[K^H(f)(X_1)\big]\\
=&\; (\mathbb E\big[f(X_{H+1})\big]-\mathbb E\big[K^H(f)(X_1)\big])/\mathbb P(\mathcal E)-(\mathbb E\big[f(X_{H+1})\mathbb I_{\mathcal E^c}\big]-(1-\mathbb P(\mathcal E))\mathbb E\big[K^H(f)(X_1)\big])/\mathbb P(\mathcal E)\\
=&\;\frac{1}{\mathbb P(\mathcal E)}\sum_{h=1}^H\mathbb E\left[\mathbb E_{U_h}\big[f_{h+1}(X_{h+1})\mid\mathcal F_h\big]-\mathbb E_{U_h}\big[f_{h+1}(\widetilde X_{h+1})\mid\mathcal F_h\big]\right]-\frac{\mathbb P(\mathcal E^c)}{\mathbb P(\mathcal E)}(\mathbb E\big[f(X_{H+1})\mathbb I_{\mathcal E^c}\big]/\mathbb P(\mathcal E^c)-\mathbb E\big[K^H(f)(X_1)\big]).
\end{align*}
so
\[
|\mathbb E\big[f(X_{H+1})\mathbb I_{\mathcal E}\big]/\mathbb P(\mathcal E)-\mathbb E\big[K^H(f)(X_1)\big]|\leq \sum_{h=1}^H(2H\delta+\xi_{\rm acc})\mathrm{osc}(f_{h+1})+2H\delta\mathrm{osc}(f)\leq (2H(H+1)\delta+H\xi_{\rm acc}).
\]
where we used the fact that $|\hat\alpha_h-\alpha_h|\leq \xi_{\rm acc}$ on the good event $\mathcal E_h$.

Therefore we have
\begin{align*}
\|\mathrm{Law}(X_{H+1})|_{\mathcal E}-\tilde \pi_T\|_{\rm TV}\leq&\; \|\mathrm{Law}(X_{H+1})|_{\mathcal E}-\mathrm{Law}(X_1)K^H\|_{\rm TV}+\|\mathrm{Law}(X_1)K^H-\tilde \pi_T\|_{\rm TV}\\
\leq&\;2H(H+1)\delta+H\xi_{\rm acc}+c^H\|\mathrm{Law}(X_1)-\tilde \pi_T\|_{\rm TV}
\end{align*}
where by Lemma~\ref{lem:ALS_exact_MH_contraction}, the exact MH kernel satisfies the uniform TV contraction with $c\le 1-\dfrac{1}{(1+\varepsilon)^{2T}}$.

Recall that Theorem~\ref{thm:ALS_TV_error} shows that $\|\mathrm{Law}(X_1)-\tilde \pi_T\|_{\rm TV}\leq 2\varepsilon T$. We focus on the $\varepsilon=O(1/T)$ regime, specifically when $\varepsilon<\dfrac{1}{5T}$, thus $c^H\leq (4\varepsilon T)^H$, and
\[
\|\mathrm{Law}(X_{H+1})|_{\mathcal E}-\tilde \pi_T\|_{\rm TV}\leq 2H(H+1)\delta+H\xi_{\rm acc}+(4\varepsilon T)^H\cdot 2\varepsilon T.
\]
By picking $\dfrac{1}{2}\delta_{TV}=(4\varepsilon T)^H\cdot 2\varepsilon T$, we have $H=\dfrac{\log(\dfrac{\delta_{\rm TV}}{4\varepsilon T})}{\log(4\varepsilon T)}=O(\log(\delta_{\rm TV}^{-1}))$. Then choose $\delta\leq \dfrac{\delta_{\rm TV}}{8H(H+1)}= O(\delta_{\rm TV})$, and $\xi_{\rm acc}=\delta_{\rm TV}(4H)^{-1}$, i.e. $N_m=\tilde O(\dfrac{T^2C_{\rm act}\log(\delta^{-1})}{\delta_{\rm TV}^2})$. Substituting $\delta\leftarrow\dfrac{\delta}{H}$ doesn't change the rate. Then within event $\mathcal E$, the time complexity is $O(N_mTH)=\tilde O(\dfrac{T^3 L \log(\delta^{-1})}{\delta_{\rm TV}^2})$.
\end{proof}

\begin{remark}\label{rem:rej-base_worse}
\textbf{Why it is worse than resampling-pool (Section~\ref{sec:value_based}).}
In this construction, the MH acceptance probability depends on estimated normalizing constants produced by Monte Carlo.
Consequently, the estimation error no longer only affects the (constant-order) contraction coefficient; instead it induces an additional \emph{additive} error at each MH step, which accumulates \emph{linearly} with the number of MH iterations.
To make the final TV error below $\delta_{\rm TV}$, we must therefore estimate the relevant normalizers to relative error $O(\delta_{\rm TV}/T)$, leading to an unavoidable $\delta_{\rm TV}^{-2}$ overhead.
\end{remark}

\clearpage
\section{New assumption: Global bellman error}\label{Appendix:assumption_Global_bellman_error}
\begin{assumption}[Global Bellman error bound]\label{assump:global_bellman_error_uniform_bound}
We assume for some $\varepsilon_g$, the following holds:
\begin{align*}
\max_{t\in[T]}\max_{s_{1:t}\in\mathcal S_{1:t}}&\;\max\Bigg\{\frac{\hat V(s_0,s_{1:t})}{\mathbb{E}_{s_{t+1:T}\sim\pi_{\text{ref}}(\cdot|s_0,s_{1:t})}[\hat V(s_0,s_{1:T})]},\\&\frac{\mathbb{E}_{s_{t+1:T}\sim\pi_{\text{ref}}(\cdot|s_0,s_{1:t})}[\hat V(s_0,s_{1:T})]}{\hat V(s_0,s_{1:t})}\Bigg\}\leq 1 + \varepsilon_g
\end{align*}
\end{assumption}

Under this assumption, we have a new complexity bound for SMC with naive proposal characterized by the global bellman error rather than our previous local bellman error. We also show later that the result hold similarly for any proposal $\pi_{\rm p}$.

\begin{proposition}\label{prop:SMC_naive_tvbound_global_bellman_error}
Under the same condition as Theorem~\ref{thm:number_of_particles_local_bias}, with Assumption~\ref{assump:bellman_error_uniform_bound} replaced by Assumption~\ref{assump:global_bellman_error_uniform_bound}, the number of particles suffice to achieve a sampling TV error below $\delta_{\rm TV}$ is
$N\ \ge\ \dfrac{L^6\,T\,(1+\varepsilon_g)^6}{2\,\delta_{\rm TV}}$. And the total time complexity is $O\Big(\dfrac{L^6(1+\varepsilon_g)^6T^2}{\delta_{\rm TV}}\Big)$.
\end{proposition}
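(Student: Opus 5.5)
The plan is to reuse the kernel-agnostic bias bound of Theorem~\ref{thm:TV_bias_local} verbatim. Only the control of the ratio constants $q_{p,T+1}$ in Lemma~\ref{lem:control_constants} needs to be redone under Assumption~\ref{assump:global_bellman_error_uniform_bound}. As in the proof of Theorem~\ref{thm:number_of_particles_local_bias}, Theorem~\ref{thm:TV_bias_local} together with $\beta(D\Phi_{p,n})\le 2q_{p,n}$ (Lemma~\ref{lem:FK_constants_control}) gives
\[
\|\bar\eta_{T+1}^N-\eta_{T+1}\|_{\rm TV}\le \frac{1}{2N}\sum_{p=1}^{T} q_{p,T+1}^3 .
\]
It therefore suffices to show $q_{p,T+1}\le L^2(1+\varepsilon_g)^2$ uniformly in $p$. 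This removes the geometric factor $(1+\varepsilon)^{6(T-p)}$, and the sum becomes at most $T L^6(1+\varepsilon_g)^6$, which yields the stated $N$.

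The key step is computing $Q_{p,T+1}(1)$ directly rather than by induction. Under the naive proposal the potentials telescope, $\prod_{t=p}^{T}G_t(s_{1:t})=\hat V(s_0,s_{1:T})/\hat V(s_0,s_{1:p-1})$. Hence
\[
Q_{p,T+1}(1)(s_{1:p})=\frac{\mathbb E_{s_{p+1:T}\sim\pi_{\rm ref}(\cdot\mid s_0,s_{1:p})}[\hat V(s_0,s_{1:T})]}{\hat V(s_0,s_{1:p-1})}
=\frac{\hat V(s_0,s_{1:p})}{\hat V(s_0,s_{1:p-1})}\cdot\frac{\mathbb E[\hat V(s_0,s_{1:T})\mid s_{1:p}]}{\hat V(s_0,s_{1:p})}.
\]
The first factor lies in $[L^{-1},L]$ by Assumption~\ref{assump:reward_model_ratio_bound}. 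The second lies in $[(1+\varepsilon_g)^{-1},1+\varepsilon_g]$ by Assumption~\ref{assump:global_bellman_error_uniform_bound}; at $p=T$ it is exactly $1$. So $Q_{p,T+1}(1)\in[L^{-1}(1+\varepsilon_g)^{-1},\,L(1+\varepsilon_g)]$, which gives $q_{p,T+1}\le L^2(1+\varepsilon_g)^2$ and $q_{T+1,T+1}=1$.

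One point needs care. The first factor depends on $s_{1:p-1}$ through the denominator, while the supremum in $q_{p,T+1}$ is taken over all $x,y\in E_p$. The two-sided bounds above are uniform, however, so the ratio bound still holds.

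Plugging in, $\sum_{p=1}^T q_{p,T+1}(q_{p,T+1}^2-1)\le T L^6(1+\varepsilon_g)^6$, and $N\ge L^6T(1+\varepsilon_g)^6/(2\delta_{\rm TV})$ suffices. The time complexity then follows exactly as in Corollary~\ref{cor:TC_SMC_naive}: each of the $T+1$ steps costs $O(N)$ proposal draws and reward evaluations, giving $O(L^6(1+\varepsilon_g)^6T^2/\delta_{\rm TV})$. The main obstacle is the telescoping identification of $Q_{p,T+1}(1)$ with the lookahead-to-go expectation, including the edge conventions $\hat V(s_0,s_{1:0})=1$ and the terminal identity step. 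After that, the rest is substitution into Theorem~\ref{thm:TV_bias_local}.
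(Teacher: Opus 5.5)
Your proposal is correct and follows the paper's proof essentially step for step. The paper also starts from Theorem~\ref{thm:TV_bias_local} with $\beta(D\Phi_{p,T+1})\le 2q_{p,T+1}$, and uses the telescoping of the naive-proposal potentials to write $Q_{p,T+1}(1)$ as the lookahead expectation of $\hat V(s_0,s_{1:T})$ divided by $\hat V(s_0,s_{1:p-1})$. It then bounds this in $[L^{-1}(1+\varepsilon_g)^{-1},\,L(1+\varepsilon_g)]$ using Assumptions~\ref{assump:reward_model_ratio_bound} and~\ref{assump:global_bellman_error_uniform_bound}, and concludes via $q^3\le L^6(1+\varepsilon_g)^6$ summed over $T$ terms together with an $O(NT)$ runtime.
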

\begin{proof}[Proof of Proposition~\ref{prop:SMC_naive_tvbound_global_bellman_error}]
We follow exactly the proof strategy of Theorem~\ref{thm:number_of_particles_local_bias},
starting from the local TV-bias bound (Theorem~\ref{thm:TV_bias_local}):
\begin{equation}
\label{eq:tv_local_start_global_Tp1}
\|\bar\eta_{T+1}^N-\eta_{T+1}\|_{\rm TV}
\le
\frac{1}{4N}\sum_{p=1}^{T+1} (q_{p,T+1}^2-1)\,\beta(D\Phi_{p,T+1}).
\end{equation}
By Lemma~\ref{lem:FK_constants_control}, we have
$\beta(D\Phi_{p,T+1})\le 2q_{p,T+1}\beta(P_{p,T+1})\le 2q_{p,T+1}$,
and hence
\begin{equation}
\label{eq:tv_local_reduce_global_Tp1}
\|\bar\eta_{T+1}^N-\eta_{T+1}\|_{\rm TV}
\le
\frac{1}{2N}\sum_{p=1}^{T+1} q_{p,T+1}\,(q_{p,T+1}^2-1).
\end{equation}
The last term $p=T+1$ vanishes since $q_{T+1,T+1}=1$.

\paragraph{Step 1: A uniform bound on $q_{p,T+1}$ under global Bellman error.}
We consider the naive proposal, where
$M_t(s_{1:t-1},ds_{1:t})=\pi_{\rm ref}(s_t\mid s_0,s_{1:t-1})$
and
$G_t(s_{1:t})=\hat V(s_0,s_{1:t})/\hat V(s_0,s_{1:t-1})$.
Recall the FK semigroup interpretation
\[
Q_{p,T+1}(1)(x)
=
\mathbb E\!\left[\prod_{t=p}^{T} G_t(S_{1:t})\ \Big|\ S_{1:p}=x\right].
\]
Because the $G_t$'s are telescoping ratios, the product cancels all cross terms:
\[
\prod_{t=p}^{T} G_t(S_{1:t})
=
\frac{\hat V(s_0,S_{1:T})}{\hat V(s_0,S_{1:p-1})}.
\]
Therefore, for any prefix/state $x=s_{1:p}\in E_p$,
\begin{equation}
\label{eq:Qptp1_global}
Q_{p,T+1}(1)(x)
=
\frac{\mathbb E\!\left[\hat V(s_0,S_{1:T})\mid S_{1:p}=x\right]}
{\hat V(s_0,s_{1:p-1})}.
\end{equation}
Now apply Assumption~\ref{assump:global_bellman_error_uniform_bound} at time $t=p$:
\[
\frac{1}{1+\varepsilon_g}\,\hat V(s_0,s_{1:p})
\;\le\;
\mathbb E\!\left[\hat V(s_0,S_{1:T})\mid S_{1:p}=s_{1:p}\right]
\;\le\;
(1+\varepsilon_g)\,\hat V(s_0,s_{1:p}).
\]
Plugging into \eqref{eq:Qptp1_global} yields
\[
\frac{1}{1+\varepsilon_g}\cdot
\frac{\hat V(s_0,s_{1:p})}{\hat V(s_0,s_{1:p-1})}
\;\le\;
Q_{p,T+1}(1)(s_{1:p})
\;\le\;
(1+\varepsilon_g)\cdot
\frac{\hat V(s_0,s_{1:p})}{\hat V(s_0,s_{1:p-1})}.
\]
By Assumption~\ref{assump:reward_model_ratio_bound},
$\hat V(s_0,s_{1:p})/\hat V(s_0,s_{1:p-1})\in[1/L,\;L]$.
Hence for all $p\le T$ and all $x\in E_p$,
\[
\frac{1}{L(1+\varepsilon_g)}
\;\le\;
Q_{p,T+1}(1)(x)
\;\le\;
L(1+\varepsilon_g).
\]
Consequently,
\begin{equation}
\label{eq:q_global_bound}
q_{p,T+1}
=
\sup_{x,y\in E_p}\frac{Q_{p,T+1}(1)(x)}{Q_{p,T+1}(1)(y)}
\;\le\;
L^2(1+\varepsilon_g)^2,
\qquad \forall p\le T.
\end{equation}

\paragraph{Step 2: Plug into the local TV-bias bound.}
Using $q(q^2-1)\le q^3$ and \eqref{eq:q_global_bound}, we obtain
for all $p\le T$:
\[
q_{p,T+1}\,(q_{p,T+1}^2-1)
\;\le\;
q_{p,T+1}^3
\;\le\;
L^6(1+\varepsilon_g)^6.
\]
Plugging into \eqref{eq:tv_local_reduce_global_Tp1} gives
\[
\|\bar\eta_{T+1}^N-\eta_{T+1}\|_{\rm TV}
\le
\frac{1}{2N}\sum_{p=1}^{T} L^6(1+\varepsilon_g)^6
=
\frac{L^6\,T\,(1+\varepsilon_g)^6}{2N}.
\]
Therefore, it suffices to take
$
N\ge \dfrac{L^6\,T\,(1+\varepsilon_g)^6}{2\,\delta_{\rm TV}}
$
to ensure $\|\bar\eta_{T+1}^N-\eta_{T+1}\|_{\rm TV}\le \delta_{\rm TV}$.

Finally, since the naive-proposal SMC runs $T$ propagation steps (each costing $O(N)$) plus the terminal
reweight/resampling step, the total time is $O(NT)$; substituting the above choice of $N$ yields
$O\!\left(\dfrac{L^6(1+\varepsilon_g)^6T^2}{\delta_{\rm TV}}\right)$.
\end{proof}

These bounds do not have $T$ on the power, thus there's no need to restrict $\varepsilon_g$ to the $O(1/T)$ regime to avoid exponential complexity. And we also claim that this does not violate our previous lower bound, since the lower bound were proved under the assumption of the local bellman error. This is exactly the case for~\cite{rohatgi2025taming} as well, which adopts the exact same assumption as Assumption~\ref{assump:global_bellman_error_uniform_bound}.

Assumption~\ref{assump:global_bellman_error_uniform_bound} and Assumption~\ref{assump:bellman_error_uniform_bound} can transit to one another in the sense that, if Assumption~\ref{assump:global_bellman_error_uniform_bound} holds with $1+\varepsilon_g$, then Assumption~\ref{assump:bellman_error_uniform_bound} holds with $(1+\varepsilon)= (1+\varepsilon_g)^2$. Conversely, if Assumption~\ref{assump:bellman_error_uniform_bound} holds with $(1+\varepsilon)$, then Assumption~\ref{assump:global_bellman_error_uniform_bound} holds with $(1+\varepsilon_g)=(1+\varepsilon)^T$.

A similar result holds for the particle complexity of any proposal $\pi_{\rm p}$, including the optimal proposal. We have the following result
\begin{proposition}[Particles Complexity of arbitrary proposal]
\label{prop:number_of_particles_arbitrary_proposal_global_bellman_error}
Fix the horizon $T\ge 2$ and a target accuracy $\delta_{\rm TV}\in(0,1)$.
Under Assumptions~\ref{assump:reward_model_ratio_bound} and~\ref{assump:global_bellman_error_uniform_bound},
let $\bar\eta_{T+1}^N:=\mathbb E[\eta_{T+1}^N]$ be the output distribution of an SMC sampler
(after the terminal resampling step $T{+}1$ with $M_{T+1}=\mathrm{Id}$). We further suppose
\begin{align*}
\max_{t\in[T]}\max_{s_{1:t}\in\mathcal S_{1:t}}\max\Bigg\{\frac{\pi_{\rm ref}(s_t\mid s_{1:t-1})}{\pi_{\rm p}(s_t\mid s_{1:t-1})}
\cdot
\frac{\hat V(s_0,s_{1:t})}{\hat V(s_0,s_{1:t-1})}, \frac{\pi_{\rm p}(s_t\mid s_{1:t-1})}{\pi_{\rm ref}(s_t\mid s_{1:t-1})}
\cdot
\frac{\hat V(s_0,s_{1:t-1})}{\hat V(s_0,s_{1:t})}\Bigg\}\leq L_{\rm p}.
\end{align*}
Then a sufficient condition that guarantees
$\|\bar\eta_{T+1}^N-\eta_{T+1}\|_{\rm TV}\ \le\ \delta_{\rm TV}$ is
\begin{align*}
N\geq \frac{L_{\rm p}^6 T(1+\varepsilon_g)^{6}}{2\delta_{\rm TV}}.
\end{align*}

Specifically, if $\pi_{\rm p}$ is the optimal proposal, we also have
\begin{align*}
N\ \ge\ \frac{\big((1+\varepsilon_g)^{12}-1\big)\,T\,(1+\varepsilon_g)^6}{2\,\delta_{\rm TV}}
\end{align*}
is a sufficient condition for $\|\bar\eta_{T+1}^N-\eta_{T+1}\|_{\rm TV}\ \le\ \delta_{\rm TV}$.
\end{proposition}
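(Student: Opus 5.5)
The plan is to reuse the route of the proof of Proposition~\ref{prop:SMC_naive_tvbound_global_bellman_error} without change, and replace only the bound on the ratio constants $q_{p,T+1}$.

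\textbf{Starting bound.} By Theorem~\ref{thm:TV_bias_local} and Lemma~\ref{lem:FK_constants_control}(iii),
\[
\|\bar\eta_{T+1}^N-\eta_{T+1}\|_{\rm TV}\le \frac{1}{2N}\sum_{p=1}^{T} q_{p,T+1}\,(q_{p,T+1}^2-1).
\]
The term $p=T+1$ drops out because $q_{T+1,T+1}=1$. So it suffices to show $q_{p,T+1}\le L_{\rm p}^2(1+\varepsilon_g)^2$ for a general proposal and $q_{p,T+1}\le (1+\varepsilon_g)^6$ for the optimal proposal, uniformly in $p\le T$.

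\textbf{General proposal.} Since $Q_{p+1}(f)=G_pM_{p+1}f$, we have
\[
Q_{p,T+1}(1)(x)=G_p(x)\,\mathbb E_{\pi_{\rm p}}\Bigl[\textstyle\prod_{t=p+1}^{T}G_t(S_{1:t})\,\Big|\,S_{1:p}=x\Bigr].
\]
The importance ratios $\pi_{\rm ref}/\pi_{\rm p}$ inside this product turn the $\pi_{\rm p}$-expectation into a $\pi_{\rm ref}$-expectation. The $\hat V$ ratios telescope. Together this gives
\[
Q_{p,T+1}(1)(s_{1:p})=G_p(s_{1:p})\cdot\frac{\mathbb E_{\pi_{\rm ref}}[\hat V(s_0,S_{1:T})\mid S_{1:p}=s_{1:p}]}{\hat V(s_0,s_{1:p})}.
\]
Assumption~\ref{assump:global_bellman_error_uniform_bound} puts the second factor in $[(1+\varepsilon_g)^{-1},1+\varepsilon_g]$. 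The hypothesis of the proposition puts $G_p$ in $[L_{\rm p}^{-1},L_{\rm p}]$. Hence $q_{p,T+1}\le L_{\rm p}^2(1+\varepsilon_g)^2$.

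Using $q(q^2-1)\le q^3$ and summing the $T$ terms gives the bound $L_{\rm p}^6T(1+\varepsilon_g)^6/(2N)$. This yields the first claim.

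\textbf{Optimal proposal.} Here the general $L_{\rm p}$ bound is too crude, and the step I expect to need care is bounding $G_p$ directly. In this case
\[
G_p(s_{1:p})=\frac{Z_p(s_{1:p-1})}{\hat V(s_0,s_{1:p-1})},\qquad Z_p(s_{1:p-1}):=\mathbb E_{s_p\sim\pi_{\rm ref}}[\hat V(s_0,s_{1:p})].
\]
Write $W(s_{1:k}):=\mathbb E_{\pi_{\rm ref}}[\hat V(s_0,S_{1:T})\mid S_{1:k}=s_{1:k}]$. By the tower property, $W(s_{1:p-1})=\mathbb E_{s_p\sim\pi_{\rm ref}}[W(s_{1:p})]$.

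Applying Assumption~\ref{assump:global_bellman_error_uniform_bound} at time $p$ inside this expectation gives $W(s_{1:p-1})\in(1+\varepsilon_g)^{\pm1}Z_p(s_{1:p-1})$. Applying it at time $p-1$ gives $W(s_{1:p-1})\in(1+\varepsilon_g)^{\pm1}\hat V(s_0,s_{1:p-1})$. The convention $\hat V(s_0)=1$ makes the case $p=1$ consistent. Combining the two, $G_p\in[(1+\varepsilon_g)^{-2},(1+\varepsilon_g)^2]$.

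The second factor in the display for $Q_{p,T+1}(1)$ is still within $(1+\varepsilon_g)^{\pm1}$. Therefore $Q_{p,T+1}(1)\in[(1+\varepsilon_g)^{-3},(1+\varepsilon_g)^3]$, and so $q_{p,T+1}\le(1+\varepsilon_g)^6$.

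\textbf{Conclusion.} Each summand then satisfies $q(q^2-1)\le(1+\varepsilon_g)^6\bigl((1+\varepsilon_g)^{12}-1\bigr)$. Summing over the $T$ values of $p$ and requiring the total bound to be at most $\delta_{\rm TV}$ gives the stated sufficient condition on $N$.
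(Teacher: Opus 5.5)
Your argument is correct and follows the route the paper intends. The paper's proof only defers to Proposition~\ref{prop:SMC_naive_tvbound_global_bellman_error} and Theorems~\ref{thm:number_of_particles_local_bias}, \ref{thm:number_of_particles_local_bias_optimal_proposal}: obtain a uniform bound on $q_{p,T+1}$ from the telescoped form of $Q_{p,T+1}(1)$ and plug it into Theorem~\ref{thm:TV_bias_local}. Your estimate $G_p\in[(1+\varepsilon_g)^{-2},(1+\varepsilon_g)^{2}]$ is exactly the paper's remark that the global assumption implies the local one with $1+\varepsilon=(1+\varepsilon_g)^2$, and it yields $q_{p,T+1}\le(1+\varepsilon_g)^6$ and the stated constant.

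One small correction: Assumption~\ref{assump:global_bellman_error_uniform_bound} is only imposed for $t\in[T]$, so the convention $\hat V(s_0)=1$ does not by itself bound $G_1=Z_1(s_0)/\hat V(s_0)$. You do not need that bound, because $G_1$ is constant in $s_1$ and cancels in the ratio defining $q_{1,T+1}$, which gives $q_{1,T+1}\le(1+\varepsilon_g)^2$ directly.
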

\begin{proof}
    The proof was similar to that of Theorem~\ref{thm:number_of_particles_local_bias}, Theorem~\ref{thm:number_of_particles_local_bias_optimal_proposal} and Proposition~\ref{prop:SMC_naive_tvbound_global_bellman_error}.
\end{proof}

\end{document}